\newcommand*{\addFileDependency}[1]{
  \typeout{(#1)}
  \@addtofilelist{#1}
  \IfFileExists{#1}{}{\typeout{No file #1.}}
}
\newcommand{\Qffg}{\mcQ_{\mathrm{FFG}}}
\newcommand{\Qmcdo}{\mcQ_{\mathrm{MCDO}}}
\newcommand{\hatbfW}{\widehat{\bfW}}
\newcommand{\diag}{\mathrm{diag}}
\newcommand{\bfeps}{\bm{\epsilon}}
\newcommand{\ELBO}{\mcL}
\newcommand{\thetainput}{\theta_{\mathrm{in}}}
\title{On the Expressiveness of Approximate Inference in Bayesian Neural Networks}
\author{%
  \hspace{0.9cm}Andrew Y.~K.~Foong\thanks{Equal contribution.} \\
  \hspace{0.9cm}University of Cambridge\\
  \hspace{0.9cm}\texttt{ykf21@cam.ac.uk} \\
  \And
  \hspace{0.5cm}David R.~Burt\footnotemark[1] \\
  \hspace{0.5cm}University of Cambridge\\
  \hspace{0.5cm}\texttt{drb62@cam.ac.uk} \\
  \hspace{5cm}
  \And
  Yingzhen Li \\
  Microsoft Research\\
  \texttt{Yingzhen.Li@microsoft.com}
  \And Richard E.~Turner \\
  University of Cambridge\\
  Microsoft Research\\
  \texttt{ret26@cam.ac.uk} \\
}
\begin{document}

\maketitle

\begin{abstract}
While Bayesian neural networks (BNNs) hold the promise of being flexible, well-calibrated statistical models, inference often requires approximations whose consequences are poorly understood. We study the quality of common variational methods in approximating the Bayesian predictive distribution. For single-hidden layer ReLU BNNs, we prove a fundamental limitation in \emph{function-space} of two of the most commonly used distributions defined in \emph{weight-space}: mean-field Gaussian and Monte Carlo dropout. We find there are simple cases where neither method can have substantially increased uncertainty in between well-separated regions of low uncertainty. We provide strong empirical evidence that exact inference does not have this pathology, hence it is due to the approximation and not the model. In contrast, for deep networks, we prove a universality result showing that there exist approximate posteriors in the above classes which provide flexible uncertainty estimates. However, we find empirically that pathologies of a similar form as in the single-hidden layer case can persist when performing variational inference in deeper networks. Our results motivate careful consideration of the implications of approximate inference methods in BNNs.
\end{abstract}

\section{Introduction}\label{sec:intro}
Bayesian neural networks (BNNs) \cite{mackay1992practical,neal2012bayesian} aim to combine the strong inductive biases and flexibility of neural networks (NNs) with the probabilistic framework for uncertainty quantification provided by Bayesian statistics. However, performing exact inference in BNNs is analytically intractable and requires approximations. A variety of scalable approximate inference techniques have been proposed, with mean-field variational inference (MFVI) \citep{hinton1993keeping,blundell2015weight} and Monte Carlo dropout (MCDO) \citep{gal2016dropout} among the most used methods. These methods have been succesful in applications such as active learning and out-of-distribution detection \citep{oatml2019bdlb,ovadia2019can}. However, it is unclear to what extent the successes (and failures) of BNNs are attributable to the exact Bayesian predictive, rather than the peculiarities of the approximation method. From a Bayesian modelling perspective, it is therefore crucial to ask, \emph{does the approximate predictive distribution retain the qualitative features of the exact predictive?} 

Frequently, BNN approximations define a simple class of distributions over the model parameters, (an \emph{approximating family}), and choose a member of this family as an approximation to the posterior. Both MFVI and MCDO follow this paradigm. For such a method to succeed, two criteria must be met:
\begin{enumerate}[label=\textbf{Criterion \arabic*},topsep=-1pt,itemsep=0cm,parsep=1ex,align=left,leftmargin=*,itemindent=1.8cm]
    \item\label{cond:expressive_family} The approximating family must contain good approximations to the posterior.
    \item \label{cond:choose_member}  The method must then select a good approximate posterior within this family.
\end{enumerate}

For nearly all tasks, the performance of a BNN only depends on the distribution over weights to the extent that it affects the distribution over predictions (i.e.~in `function-space'). Hence for our purposes, a `good' approximation is one that captures features of the exact posterior in function-space that are relevant to the task at hand. However, approximating families are often defined in weight-space for computational reasons. Evaluating \ref{cond:expressive_family} therefore involves understanding how weight-space approximations translate to function-space, which is a non-trivial task for highly nonlinear models such as BNNs.

In this work we provide both theoretical and empirical analyses of the flexibility of the predictive mean and variance functions of approximate BNNs. Our major findings are:
\begin{enumerate}[topsep=-2pt,itemindent=.4cm,align=left,leftmargin=*]
    \item For shallow BNNs, there exist simple situations where \emph{no} mean-field Gaussian or MC dropout distribution can faithfully represent the exact posterior predictive uncertainty (\ref{cond:expressive_family} is not satisfied). We prove in \cref{sec:SHL-theory} that in these instances the variance function of any fully-connected, single-hidden layer ReLU BNN using these families suffers a lack of `\emph{in-between uncertainty}': increased uncertainty in between well-separated regions of low uncertainty. This is especially problematic for lower-dimensional data where we may expect some datapoints to be in between others. Examples include spatio-temporal data, or Bayesian optimisation for hyperparameter search, where we frequently wish to make predictions in unobserved regions in between observed regions. We verify that the exact posterior predictive does not have this limitation; hence this pathology is attributable solely to the restrictiveness of the approximating family. 
    \item In \cref{sec:deep-stuff} we prove a universal approximation result showing that the mean and variance functions of deep approximate BNNs using mean-field Gaussian or MCDO distributions can uniformly approximate any continuous function and any continuous non-negative function respectively. However, it remains to be shown that appropriate predictive means and variances will be found when optimising the ELBO. To test this, we focus on the low-dimensional, small data regime where comparisons to references for the exact posterior such as the limiting GP \citep{neal2012bayesian, lee2017deep, matthews2018gaussian} are easier to make. In \cref{sec:deep-empirical} we provide empirical evidence that in spite of its theoretical flexibility, VI in deep BNNs can still lead to distributions that suffer from similar pathologies to the shallow case, i.e.~\ref{cond:choose_member} is not satisfied.
\end{enumerate}

In \cref{sec:experiments}, we provide an active learning case study on a real-world dataset showing how in-between uncertainty can be a crucial feature of the posterior predictive. In this case, we provide evidence that although the inductive biases of the BNN model with exact inference can bring considerable benefits, these are lost when MFVI or MCDO are used. Code to reproduce our experiments can be found at \url{https://github.com/cambridge-mlg/expressiveness-approx-bnns}.

\section{Background}\label{sec:background}
Consider a regression dataset $\mcD = \{ (\bfx_n, y_n) \}_{n=1}^N$ with $\bfx_n \in \R^D$ and $y_n \in \R$. To define a BNN, we specify a prior distribution with density $p(\theta)$ over the NN parameters. Each parameter setting corresponds to a function $f_\theta: \R^D \to \R$. We specify a likelihood $p(\{y_n\}_{n=1}^N| \{\bfx_n\}_{n=1}^N,f_\theta)$ which describes the relationship between the observed data and the model parameters. The posterior distribution over parameters has density $p(\theta|\mcD) \propto p(\{y_n\}_{n=1}^N| \{\bfx_n\}_{n=1}^N, f_\theta)p(\theta)$. The posterior does not have a closed form and approximations must be made in order to make predictions.

\subsection{Approximate Inference Methods}\label{sec:approx-family-methods}
Many approximate inference algorithms define a parametric class of distributions, $\mcQ$, from which to select an approximation to the posterior. For BNNs, the distributions in $\mcQ$ are defined over the model parameters $\theta$. For example, $\mcQ$ may be the set of all fully-factorised Gaussian distributions, in which case the variational parameters $\phi$ are a vector of means and variances. We denote this family as $\Qffg$. 
A density $q_\phi(\theta) \in \mcQ$ is then chosen to best approximate the exact posterior according to some criteria. Once $q_\phi$ is selected, predictions at a test point $(\bfx_*, y_*)$ can be made by replacing the expectation under the exact posterior by an expectation under the approximate posterior:
\begin{align} 
    p(y_*| \bfx_*, \mcD) \!=\! \Exp{p( \theta |  \mcD)}{p(y_*|\bfx_*, f_\theta)} \!\approx\! \Exp{q_{\phi}(\theta)}{p(y_*|\bfx_*, f_\theta)}  \approx \frac{1}{M}\sum_{m=1}^M p(y_*|\bfx_*, f_{\theta_m}),\label{eqn:approx_MC_estimate}
\end{align}
where $\theta_m \sim q_{\phi}$ on the RHS of \cref{eqn:approx_MC_estimate}. Many approximate inference algorithms may share the same $\mcQ$, e.g.~VI, the diagonal Laplace approximation \citep{denker1991transforming}, probabilistic backpropagation \citep{hernandez2015probabilistic}, stochastic expectation propagation \citep{li2015stochastic}, black-box alpha divergence minimisation \citep{pmlr-v48-hernandez-lobatob16}, R{\'e}nyi divergence VI \citep{li2016renyi}, natural gradient VI \citep{khan2018fast} and functional variational BNNs \citep{sun2019functional} all frequently use $\Qffg$.

\subparagraph{Mean-Field Variational Inference} Variational inference \citep{beal2003variational,jordan1999introduction} is an approximate inference method that selects $q_\phi$ by minimising the KL divergence between the approximate and exact posterior \citep{blei2017variational}. This is equivalent to maximising an evidence lower bound (ELBO): $
    \ELBO(\phi) = \sum_{n=1}^N \Exp{q_{\phi}}{\log p(y_n|\bfx_n, f_\theta)} - \KL{q_{\phi}(\theta)}{p(\theta)}$. Most commonly in BNNs, $q_\phi$ is chosen from $\Qffg$. This is known as mean-field variational inference (MFVI). 

\subparagraph{Monte Carlo Dropout}
MCDO with $\ell_2$ regularisation has been interpreted as VI \citep{gal2016uncertainty}. Although the MCDO objective is not strictly an ELBO \citep{hron2018variational}, we will sometimes refer to it as such. The variational family, $\Qmcdo$, is the set of distributions determined by random variables of the form $\hatbfW \coloneqq \bfW \diag(\bfeps)$; where the weights $\bfW$ are variational parameters and $\epsilon_i \iidsim \mathrm{Bern}(1-p)$, with $p$ the dropout probability. Frequently, the first weight matrix $\bfW_1$ is deterministic (i.e.~inputs are not dropped out) --- we analyse this case in the main body and use $\Qmcdo$ to refer to this family. There are fundamentally different considerations when $\bfW_1$ is also stochastic, addressed in \cref{app:dropout-drop-inputs}. 
\subsection{BNN Priors and References for the Exact Posterior Predictive} \label{sec:priors_and_references}
In this paper, we examine how closely approximate BNN predictive distributions resemble exact inference. To make this comparison, a choice of BNN prior must be made. Common practice is to choose an independent $\mcN(0,1)$ prior for all parameters, regardless of the size of the network. However, such priors are known to lead to extremely large variance in function space for wide or deep networks \citep{neal2012bayesian}. For example, choosing such a prior for a 4-hidden layer BNN with 50 neurons in each layer leads to a prior standard deviation of ${\sim}10^3$ in function space at the origin. This is orders of magnitude too large for normalised data. It is conceivable that one may combine an unreasonable prior with poor approximate inference to obtain practically useful uncertainty estimates that bear little relation to the exact Bayesian predictive --- we do not consider this case. Instead, we focus our study on the quality of approximate inference in models with moderate prior variances in function space.

There is a body of literature on BNN priors \citep{neal2012bayesian,matthews2018gaussian,schoenholz2016deep,lee2017deep} which shows how to select prior weight variances that lead to reasonable prior variances in function space, even as the width of the hidden layers tends to infinity. For a layer with $N_{\mathrm{in}}$ inputs, we choose independent $\mcN(0, \sigma_w^2/{N_{\mathrm{in}}})$ priors for the weights, with $\sigma_w^2$ a constant. For regression with a Gaussian likelihood, as the width tends to infinity, both the prior and posterior of such a BNN converges to a Gaussian process (GP) \citep{hron2020exact,neal2012bayesian,matthews2018gaussian}. It has been shown that even moderately wide BNNs closely resemble their corresponding infinite-width GP counterparts \citep{matthews2018gaussian}. In this work, we use exact inference in the corresponding infinite-width limit GP and also `gold-standard' Hamiltonian Monte Carlo (HMC) \citep{neal2011mcmc,hoffman2014no} as references for the exact posterior.

\begin{figure}[t]
\begin{center}
\includegraphics[width=0.25\columnwidth]{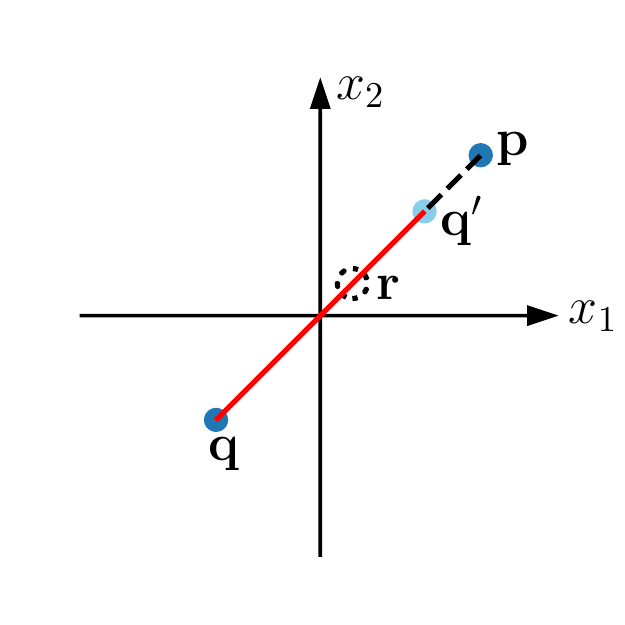}
\includegraphics[width=0.25\columnwidth]{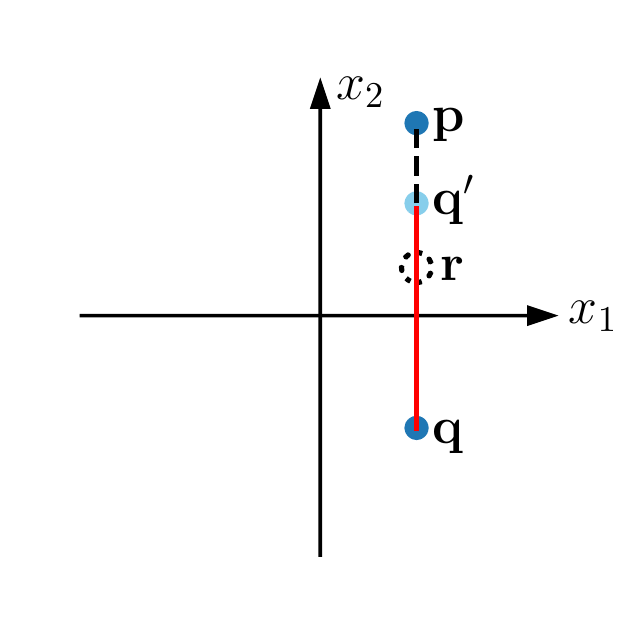}

\caption{Illustration of the bounded regions in \cref{thm:mfgaussian}, showing the input domain of a 1HL mean-field Gaussian BNN, for the case $\bfx \in \R^2$. Left (resp.~Right): For any two points $\bfp$ and $\bfq$ such that the line joining them crosses the origin (resp.~is orthogonal to and intersects a plane $x_d = 0$), the output variance at any point $\bfr$ on the solid red portion of the line is upper bounded by $\Var[f(\bfp)] + \Var[f(\bfq)]$, illustrating condition (i) (resp.~condition (ii)) of \cref{thm:mfgaussian}. The bounded region extends from $\bfq = (q_1, q_2)$ to $\bfq'$, where $\bfq' = (-q_1, -q_2)$ (Left), or $\bfq' = (q_1, -q_2)$ (Right).}
\label{fig:mean-field-illustration}
\vspace{-.3cm}
\end{center}
\end{figure}

\section{Single-Hidden Layer Neural Networks}\label{sec:SHL-theory}
\begin{figure*}
    \centering
    \includegraphics[width=.48\columnwidth,trim={.1cm .7cm .1cm .25cm},clip]{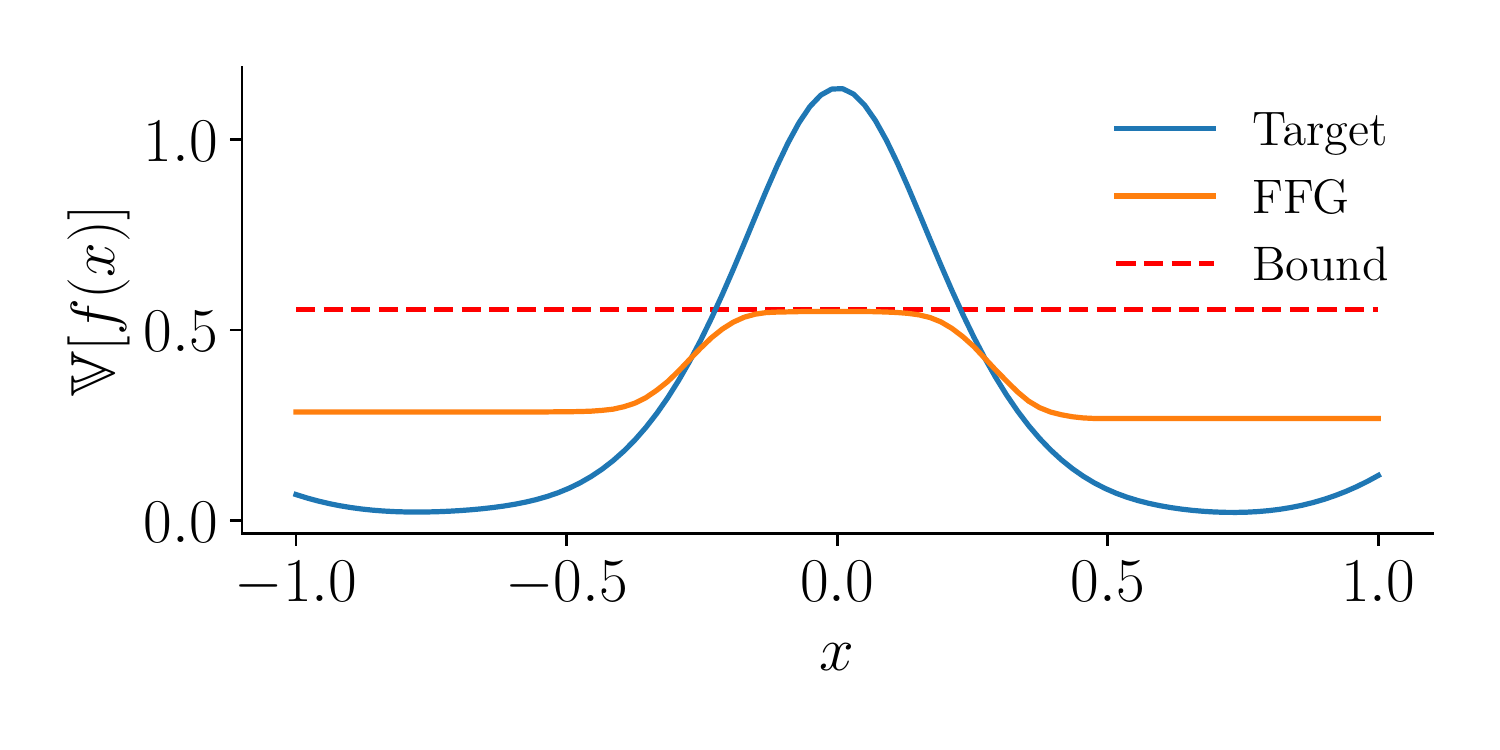}
        \includegraphics[width=.48\columnwidth,trim={.1cm .7cm .1cm .25cm},clip]{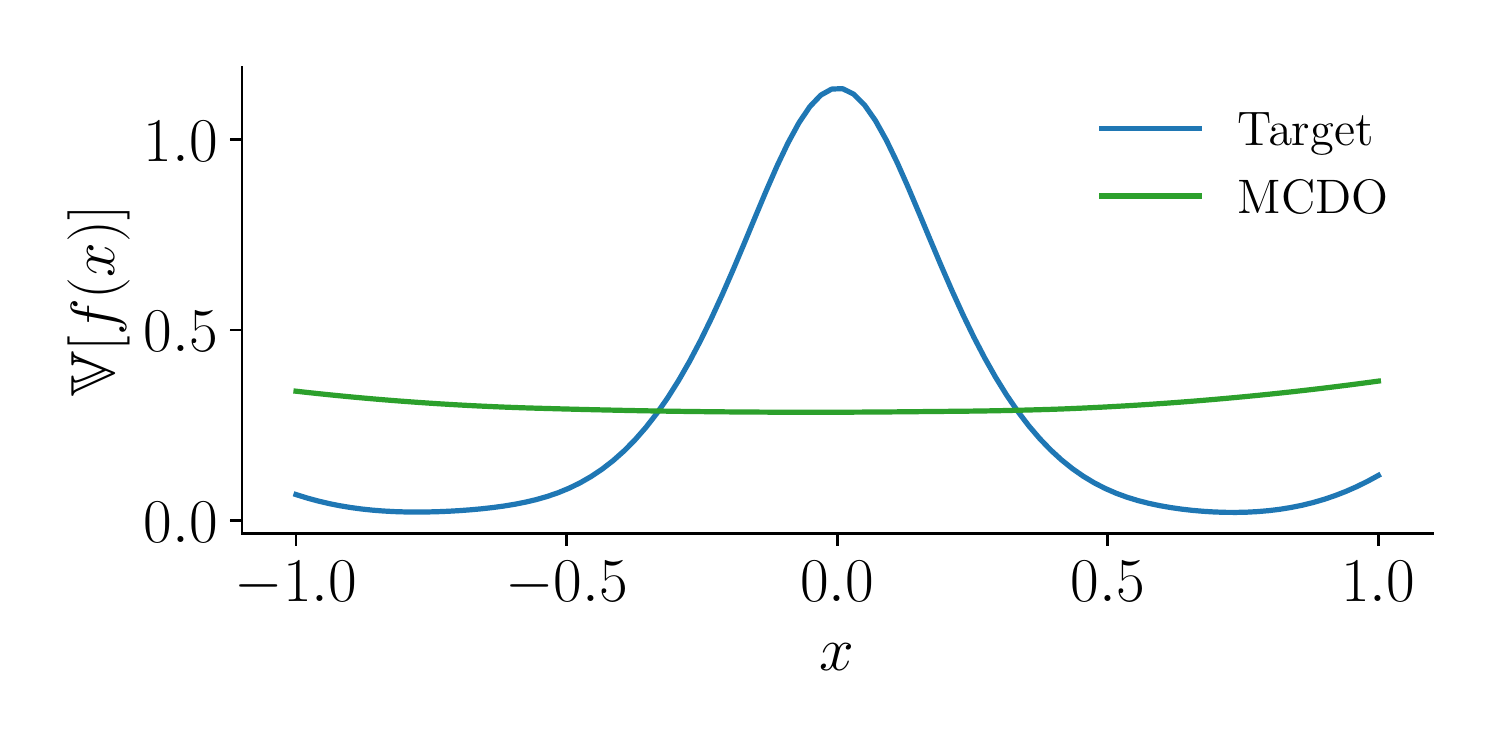}
    \caption{Results of \emph{directly minimising the squared error in function space} between $\Var[f(x)]$ (for a single-hidden layer NN) and a target variance function. Left: FFG distribution, Right: MCDO distribution. The bound for FFG distributions (red) applies on $[-1,1]$ with $\bfp=-1,\bfq=1$. The MCDO variance function is convex, and almost constant. The FFG and MCDO variance functions underestimate the target near the origin and overestimate it away from the origin.}
    \label{fig:single-hidden-layer-approximation}
    \vspace{-.3cm}
\end{figure*}
In this section, we prove that for single-hidden layer (1HL) ReLU BNNs, $\Qffg$ and $\Qmcdo$ are not expressive enough to satisfy \ref{cond:expressive_family}. We identify limitations on the variance in function-space, $\Var[f(\bfx)]$, implied by these families. We show empirically that the exact posterior does not have these restrictions, implying that approximate inference does not qualitatively resemble the posterior. 
\begin{thm}[Factorised Gaussian]\label{thm:mfgaussian}
    Consider any 1HL fully-connected ReLU NN $f:\R^D\to \R$. Let $x_d$ denote the $d$\textsuperscript{th} element of the input vector $\bfx$. Assume a fully factorised Gaussian distribution over the parameters. Consider any points $\bfp, \bfq, \bfr \in \R^D$ such that $\bfr \in \overrightarrow{\bfp\bfq}$ and either: 
    \begin{enumerate*}[label=\roman*.,itemsep=0pt,topsep=0pt]
        \item $\overrightarrow{\bfp\bfq}$ contains $\mathbf{0}$ \emph{and} $\bfr$ is closer to $\mathbf{0}$ than both $\bfp$ and $\bfq$,
        \label{item:case1} \textup{or}
        \item $\overrightarrow{\bfp\bfq}$ is orthogonal to and intersects the plane $x_d=0$, \emph{and} $\bfr$ is closer to the plane $x_d=0$ than both $\bfp$ and $\bfq.$ \label{item:case2}
    \end{enumerate*}
     Then $\Var[f(\bfr)] \leq \Var[f(\bfp)]+\Var[f(\bfq)]$. 
\end{thm}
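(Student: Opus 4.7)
The plan is to reduce the claim to a one-dimensional, per-neuron statement and then exploit the convexity of ReLU together with the Gaussianity of the pre-activation.

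\textbf{Reduction to a single neuron on a line.} Under FFG the parameters of distinct hidden units and the output bias $b^{(2)}$ are mutually independent, so
\[
  \Var[f(\bfx)] \;=\; \sum_{i=1}^H \Var\!\bigl[a_i\,\phi(h_i(\bfx))\bigr] + \Var[b^{(2)}],
\]
where $h_i(\bfx) = \bfw_i^{\top}\bfx + b_i^{(1)}$ and $a_i$ is the $i$th output weight. Since $\Var[b^{(2)}]$ contributes once on the LHS of the target inequality but twice on the RHS, it suffices to prove the per-neuron bound $\Var[a\phi(h(\bfr))] \leq \Var[a\phi(h(\bfp))] + \Var[a\phi(h(\bfq))]$ (dropping the neuron index). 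In both cases (i) and (ii), restricting $\bfx$ to the relevant line reduces the pre-activation to $h(\bfx_t) = t\,s + b$, with $s,b$ independent Gaussians: in case (i), $s = \bfw^{\top}\mathbf{v}$ for the unit direction $\mathbf{v}$ of the line, while in case (ii), $s = w_d$ and $b$ absorbs the contributions of the other weight components (independent of $w_d$ under FFG). The three points correspond to scalars $t_\bfp, t_\bfq, t_\bfr$ with $0 \in [t_\bfp,t_\bfq]$, $t_\bfr \in [t_\bfp,t_\bfq]$, and $|t_\bfr| \leq \min(|t_\bfp|, |t_\bfq|)$.

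\textbf{Splitting off the output weight.} Using independence of $a$ from $h$,
\[
  \Var\!\bigl[a\phi(h(\bfx))\bigr] \;=\; \sigma_a^2\,\mathbb{E}\!\bigl[\phi(h(\bfx))^2\bigr] + \mu_a^2\,\Var[\phi(h(\bfx))].
\]
As $\mu_a,\sigma_a$ are arbitrary, it is enough to verify subadditivity of both $\mathbb{E}[\phi(h(\cdot))^2]$ and $\Var[\phi(h(\cdot))]$ at the triple $(\bfp,\bfq,\bfr)$. For the second moment, observe that for each realization of $(s,b)$ the map $t \mapsto \phi(ts+b)^2$ is convex in $t$ (composition of the convex squared-ReLU with an affine function). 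Since $t_\bfr$ lies between $t_\bfp$ and $t_\bfq$, convexity gives the pointwise bound $\phi(h(t_\bfr))^2 \leq \max\bigl(\phi(h(t_\bfp))^2,\phi(h(t_\bfq))^2\bigr) \leq \phi(h(t_\bfp))^2 + \phi(h(t_\bfq))^2$, and taking expectations yields the desired inequality.

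\textbf{Variance term via total variance; main obstacle.} For $\Var[\phi(h(\cdot))]$, the plan is to apply the law of total variance conditioning on $s$. Writing $\psi(m) := \mathbb{E}_{b_0}[\phi(m+b_0)]$ and $V(m) := \Var[\phi(m+b_0)]$ for $b_0 \sim \mcN(0,\sigma_b^2)$, one has
\[
  \Var[\phi(h(t))] \;=\; \mathbb{E}_s\!\bigl[V(ts + \mu_b)\bigr] + \Var_s\!\bigl[\psi(ts + \mu_b)\bigr].
\]
Using the ReLU identity $\phi(x)^2 = x\phi(x)$, a direct Stein-type computation gives $V'(m) = 2\psi(m)\bigl(1 - \Phi(m/\sigma_b)\bigr) \geq 0$, so $V$ is non-decreasing. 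Since $t_\bfr s + \mu_b$ lies between $t_\bfp s + \mu_b$ and $t_\bfq s + \mu_b$ for every $s$, monotonicity of $V$ yields the pointwise bound $V(t_\bfr s + \mu_b) \leq V(t_\bfp s + \mu_b) + V(t_\bfq s + \mu_b)$, which integrates to subadditivity of the conditional-variance term. The main obstacle is to establish subadditivity of the conditional-mean term $\Var_s[\psi(ts + \mu_b)]$ at the three values $t = t_\bfp, t_\bfq, t_\bfr$. Here $\psi$ is non-decreasing, convex, and $1$-Lipschitz (since $\psi'(m) = \Phi(m/\sigma_b) \in [0,1]$). The structural facts to exploit are that $t_\bfp$ and $t_\bfq$ have opposite signs, so $\psi(t_\bfp s + \mu_b)$ and $\psi(t_\bfq s + \mu_b)$ are anti-monotone functions of $s$, while the constraint $|t_\bfr| \leq \min(|t_\bfp|,|t_\bfq|)$ bounds the slope of $\psi(t_\bfr s + \mu_b)$ as a function of $s$ by that of each of the other two; combining these with a Gaussian Poincar\'e bound or a monotone coupling argument tailored to the $1$-Lipschitz convex $\psi$ is the technical step that closes the proof.
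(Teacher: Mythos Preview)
Your decomposition up through the splitting $\Var[a\phi(h(\bfx))]=\sigma_a^2\,\Exp{\!}{\phi(h(\bfx))^2}+\mu_a^2\,\Var[\phi(h(\bfx))]$ and the convexity argument for the second-moment piece are correct and coincide with the paper's approach (this is the content of the paper's Lemma~1 / \cref{eqn:sum_of_variances}). Your computation $V'(m)=2\psi(m)\bigl(1-\Phi(m/\sigma_b)\bigr)\ge 0$ is also right, and the monotonicity argument for $\Exp{s}{V(ts+\mu_b)}$ goes through.

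The genuine gap is exactly where you flag it: the term $\Var_s[\psi(ts+\mu_b)]$. The tools you gesture at do not close it. A Gaussian Poincar\'e inequality only gives $\Var_s[\psi(ts+\mu_b)]\le t^2\sigma_s^2$, which is an \emph{upper} bound on the middle value $t=t_\bfr$; what you need is a \emph{lower} bound on the endpoint values, so this points the wrong way. The anti-monotonicity observation about $g_{t_\bfp}$ and $g_{t_\bfq}$ says nothing quantitative about their individual variances, and no coupling argument is supplied. In short, the proposal arrives at the heart of the theorem and then stops.

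The paper does not attempt your second law-of-total-variance split. Instead it analyses $v(t):=\Var[\phi(h(t))]$ directly via the closed form $v(t)=\sigma(t)^2\alpha\bigl(\mu(t)/\sigma(t)\bigr)$ for rectified Gaussians, where $\mu(t)=\mu_s t+\mu_b$ and $\sigma(t)^2=\sigma_s^2t^2+\sigma_b^2$. The key lemma (Lemma~\ref{lem:halfmonotonicfunction}) shows that $v$ lies in one of two structured classes,
\[
\mcT_1=\{f\ge 0:\ \forall\,0\le b<a,\ f(a)\ge f(-a)\text{ and }f(b)\le f(a)\},\qquad \mcT_2=\{f(-\cdot):f\in\mcT_1\},
\]
i.e.\ $v$ is monotone on each half-line and dominates its own reflection. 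Establishing this requires a case split and, in the hard case, an explicit inequality for the ratio $\alpha(r)/r^2$ proved via error-function bounds (Lemma~\ref{lem:sgn-beta-derivative}). A separate combinatorial lemma (Lemma~\ref{lem:positivecombinations}) then shows that any non-negative combination of functions from $\mcT_1\cup\mcT_2\cup\{\text{non-negative convex}\}$ satisfies $f(c)\le f(a)+f(b)$ whenever $a<0<b$ and $|c|\le\min(|a|,|b|)$. This is the machinery that replaces your unfinished step; none of it is recoverable from a Lipschitz or Poincar\'e argument alone.
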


 \Cref{thm:mfgaussian} states that there are line segments (illustrated in \cref{fig:mean-field-illustration}) in input space such that the predictive variance on the line is bounded by the sum of the variance at the endpoints. Analogous but weaker bounds on higher dimensional sets in input space enclosed by these lines can be obtained as a corollary (see \cref{app:SHL-proofs}). \Cref{thm:mfgaussian} applies to any method using $\Qffg$, as listed in \cref{sec:approx-family-methods}. Although \cref{thm:mfgaussian} only bounds certain lines in input space, in \cref{app:random-regression} we provide figures empirically showing that lines joining random points in input space suffer from similar behaviour. We provide similar results for MC dropout:

 \begin{thm}[MC dropout]\label{thm:dropout}
  Consider the same network architecture as in \cref{thm:mfgaussian}. Assume an MC dropout distribution over the parameters, with inputs not dropped out. Then $\Var[f(\bfx)]$ is convex in $\bfx$. 
 \end{thm}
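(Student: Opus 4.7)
The plan is to exploit the fact that when inputs are not dropped out, the first-layer parameters $\bfW_1, \bfb_1$ are deterministic, so the only randomness in $f$ comes from the Bernoulli masks $\epsilon_i \iidsim \Bern(1-p)$ applied to the second-layer weights. My first step is to write
\begin{align*}
f(\bfx) = \sum_{i=1}^H v_i\, \epsilon_i\, h_i(\bfx) + c,
\end{align*}
where $h_i(\bfx) = \phi(\mathbf{w}_i^\top \bfx + b_i)$ denotes the deterministic $i$\textsuperscript{th} ReLU hidden activation (with $\mathbf{w}_i, b_i$ the corresponding row of $\bfW_1$ and entry of $\bfb_1$), $v_i$ the corresponding second-layer weight, and $c$ the deterministic output bias.

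The second step is to compute $\Var[f(\bfx)]$ directly. Since the $h_i(\bfx)$, $v_i$ and $c$ are deterministic and the $\epsilon_i$ are mutually independent, all covariance cross-terms vanish and
\begin{align*}
\Var[f(\bfx)] \;=\; p(1-p)\sum_{i=1}^H v_i^2\, h_i(\bfx)^2.
\end{align*}
The third step is a short convexity argument on each summand. The ReLU $\phi(t) = \max(0, t)$ is convex, so $h_i$ is convex in $\bfx$ since pre-composition with an affine map preserves convexity; moreover $h_i \geq 0$. Post-composing with the convex non-decreasing function $t \mapsto t^2$ on $[0,\infty)$ preserves convexity, so each $h_i^2$ is convex. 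A non-negative combination of convex functions is convex, giving the result.

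There is no real obstacle: the whole argument hinges on a single observation --- not dropping out inputs decouples the two sources of randomness that a general MCDO network would have. If $\bfW_1$ were also stochastic, $h_i(\bfx)$ would itself be random, $\Var[f(\bfx)]$ would pick up a cross term of the form $\mathbb{E}[h_i(\bfx)^2] - \mathbb{E}[h_i(\bfx)]^2$, and convexity would generally fail; this is consistent with the paper's remark that input dropout requires fundamentally different analysis. The non-negativity of the ReLU is also essential to the squaring step, so this approach does not transfer directly to activations that take negative values.
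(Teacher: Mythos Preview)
The proposal is correct and takes essentially the same approach as the paper: both reduce $\Var[f(\bfx)]$ to a non-negative linear combination of squared ReLU activations $\psi(a_i(\bfx))^2$ and then establish convexity of each such term. The paper packages the first step as a slightly more general lemma (allowing any factorised distribution on the output weights and bias, with $\Var[w_{k,i}]$ in place of your explicit $p(1-p)v_i^2$) and verifies convexity of $\psi(a_i)^2$ directly from the definition rather than via the composition rule you use, but these are minor stylistic variants.
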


\begin{rem}
In \cref{app:dropout-drop-inputs}, we consider MC dropout with the inputs also dropped out. We prove that the variance at the origin is bounded by the maximum of the variance at any set of points containing the origin in their convex hull. This also applies to variational Gaussian dropout \citep{kingma2015variational}. In the main body, we assume inputs are not dropped out.
\end{rem}

\Cref{thm:dropout} implies the predictive variance on any line segment in input space is bounded by the maximum of the variance at its endpoints. Full proofs of \cref{thm:mfgaussian,thm:dropout} are in \cref{app:SHL-proofs}.  \Cref{thm:mfgaussian,thm:dropout} show that there are simple cases where 1HL approximate BNNs using $\Qffg$ and $\Qmcdo$ cannot represent \emph{in-between uncertainty}: i.e., increased uncertainty in between well separated regions of low uncertainty. As \cref{thm:mfgaussian,thm:dropout} depend only on the approximating family, this cannot be fixed by improving the optimiser, regulariser or prior. \Cref{fig:single-hidden-layer-approximation} shows a numerical verification of \cref{thm:mfgaussian,thm:dropout}. Since we are concerned with whether there are \emph{any} distributions that show in-between uncertainty, we do not maximise the ELBO in this experiment (we consider ELBO maximisation in \cref{sec:deep-empirical,sec:empirical-tests}). Instead, we train 1HL networks of width 50 with $\Qffg$ and $\Qmcdo$ distributions to \emph{directly} minimise the squared error between $\Var[f(x)]$ and a pre-specified target variance function displaying in-between uncertainty. Full details are given in \cref{app:approximator-details}. Although \cref{thm:mfgaussian,thm:dropout} apply only to 1HL BNNs, 1HL BNN regression tasks are a very common benchmark in the BNN literature \citep{mukhoti2018importance, tomczak2018neural,gal2016dropout,hernandez2015probabilistic, sun2019functional}, and have been used to assess different inference methods.

\subsection{Intuition for Theorems 1 and 2}\label{sec:SHL-intuition}
We now provide intuition for the proofs of \cref{thm:mfgaussian,thm:dropout}. Let  $\thetainput$ be the parameters in the first layer. By the law of total variance,  $\Var[f(\bfx)]\! =\! \Exp{\!}{\Var[f(\bfx)|\thetainput]}\!+\! \Var[\Exp{\!}{f(\mathbf{x})|\thetainput}].$ For $\Qmcdo$ the second term is $0$ as $\thetainput$ is deterministic. Hence to prove \cref{thm:dropout}, it suffices to show the first term is convex. We have:
\begin{align}
    \Var[f(\bfx)|\thetainput&] = \Var \bigg[\sum_{i=1}^I w_i \psi(a_i(\bfx; \thetainput)) + b \bigg| \thetainput \bigg] = \sum_{i=1}^I \Var[w_i] \psi(a_i(\bfx; \thetainput)) ^2 + \Var[b], \label{eqn:sum_of_variances}
\end{align}
where $\{w_i\}_{i=1}^I$ and $b$ are the output weights and bias, $\psi(a) = \mathrm{max}(0, a)$, and $a_i(\bfx; \thetainput)$ is the activation of the $i$\textsuperscript{th} neuron. Since $a_i(\bfx; \thetainput)$ is affine in $\bfx$, $\psi(a_i(\bfx; \thetainput)) ^2$ is a `rectified quadratic' in $\bfx$ and therefore convex. This proves \cref{thm:dropout}. The same argument also applies to show that $\Var[f(\bfx)|\thetainput]$ is convex for $\Qffg$. To arrive at \cref{eqn:sum_of_variances}, we used that for $\Qffg$ and $\Qmcdo$, the output weights of each neuron are independent. Correlations between the weights could introduce negative covariance terms, leading to non-convex behaviour. Thus we see how \emph{weight-space} factorisation assumptions can lead to \emph{function-space} restrictions on the predictive uncertainty.

\begin{figure*}[t]
    \centering
    \begin{subfigure}[b]{0.24\columnwidth}
        \includegraphics[width=\textwidth]{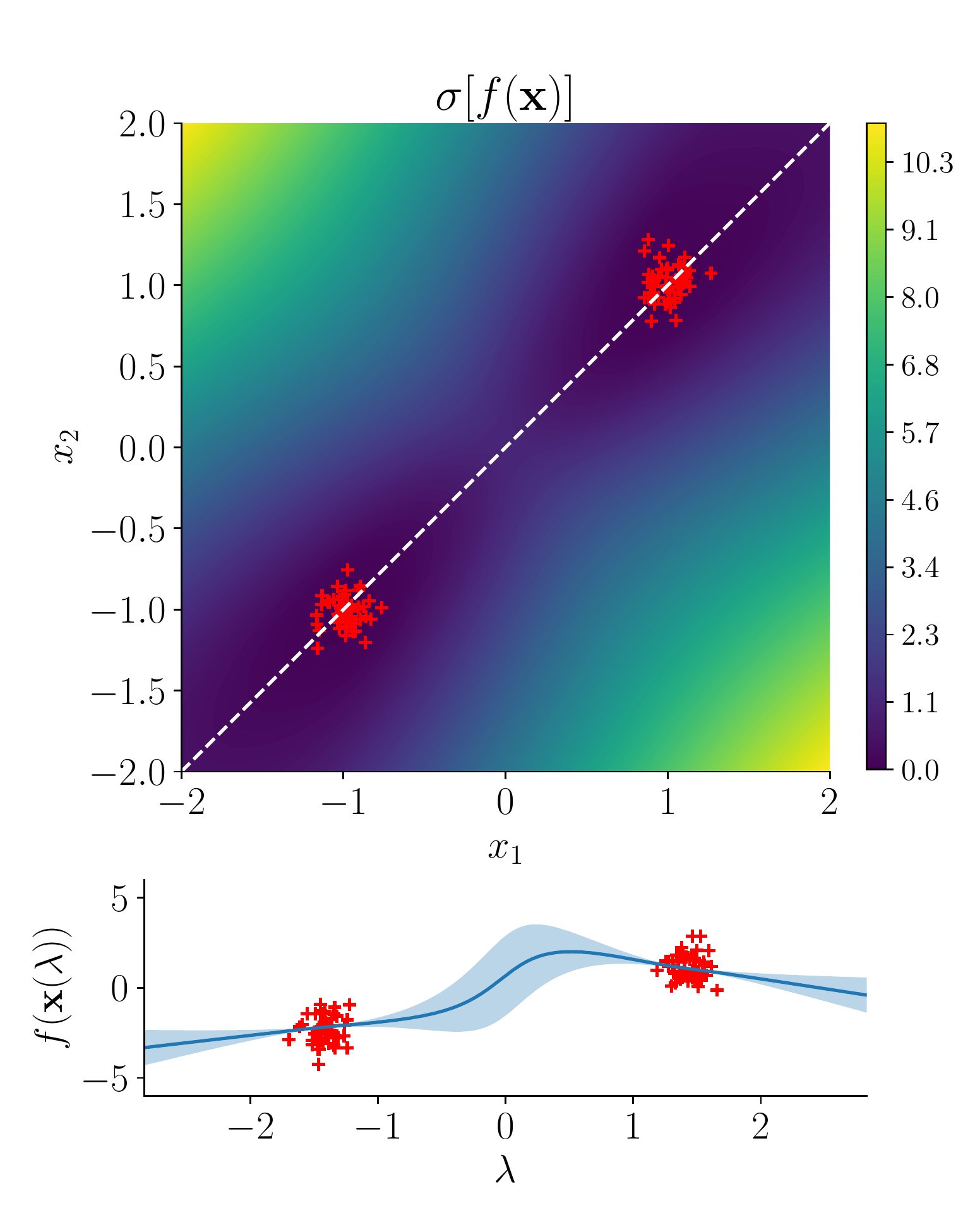}
        \caption{Infinite-width limit GP}
    \end{subfigure}
    \begin{subfigure}[b]{0.24\columnwidth}
        \includegraphics[width=\textwidth]{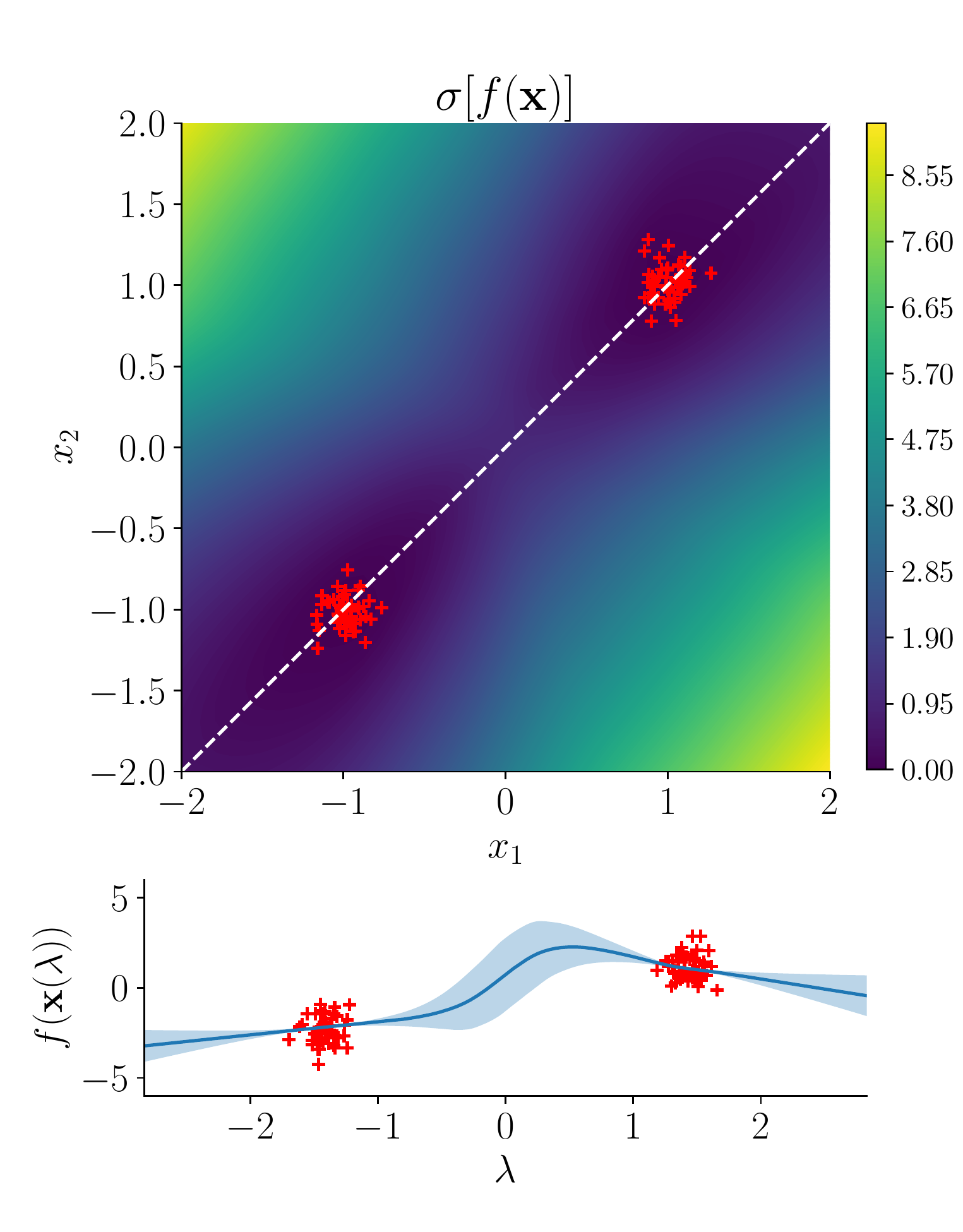}
        \caption{HMC}
    \end{subfigure}
    \begin{subfigure}[b]{0.24\columnwidth}
        \includegraphics[width=\textwidth]{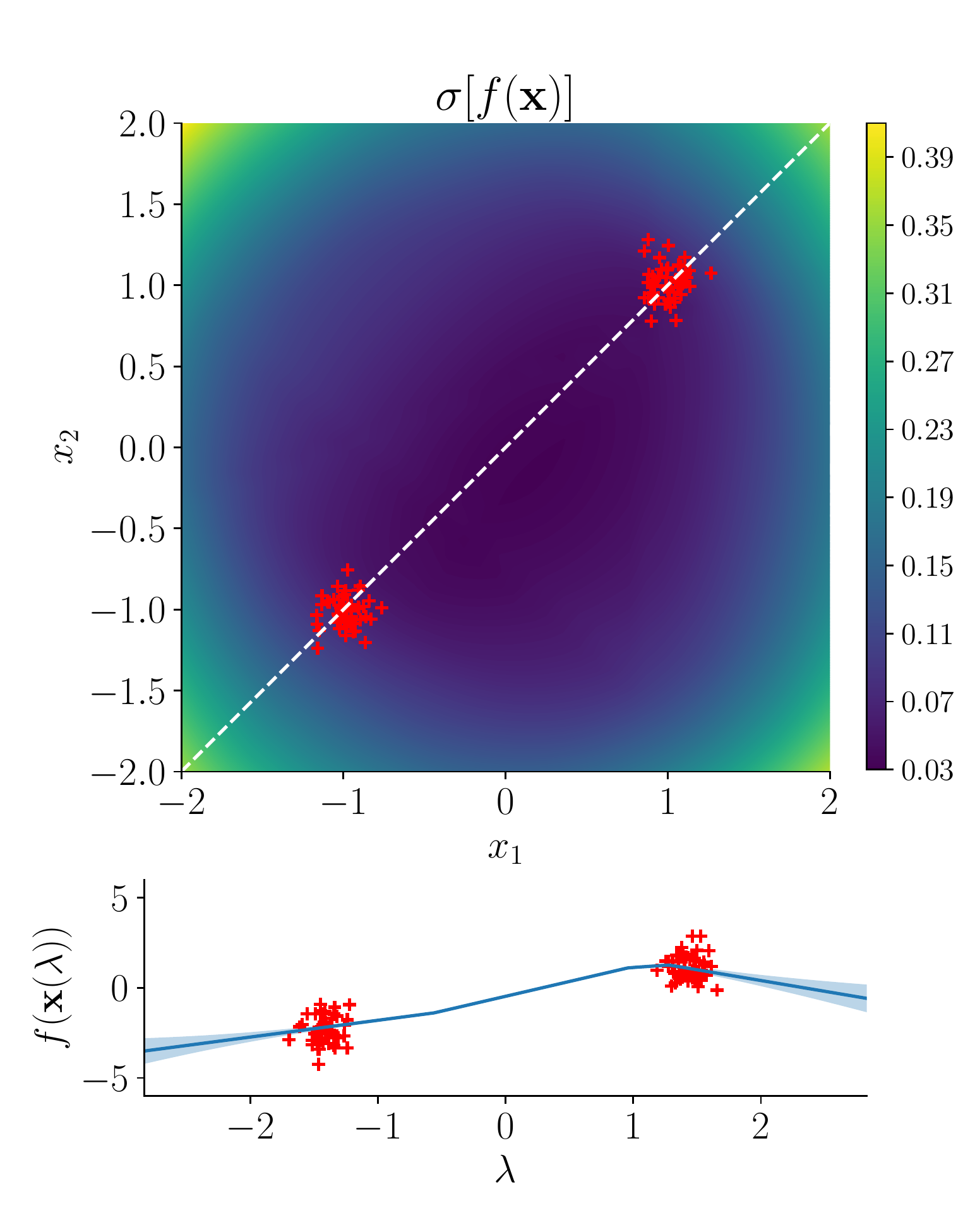}
        \caption{MFVI}
    \end{subfigure}
    \begin{subfigure}[b]{0.24\columnwidth}
        \includegraphics[width=\textwidth]{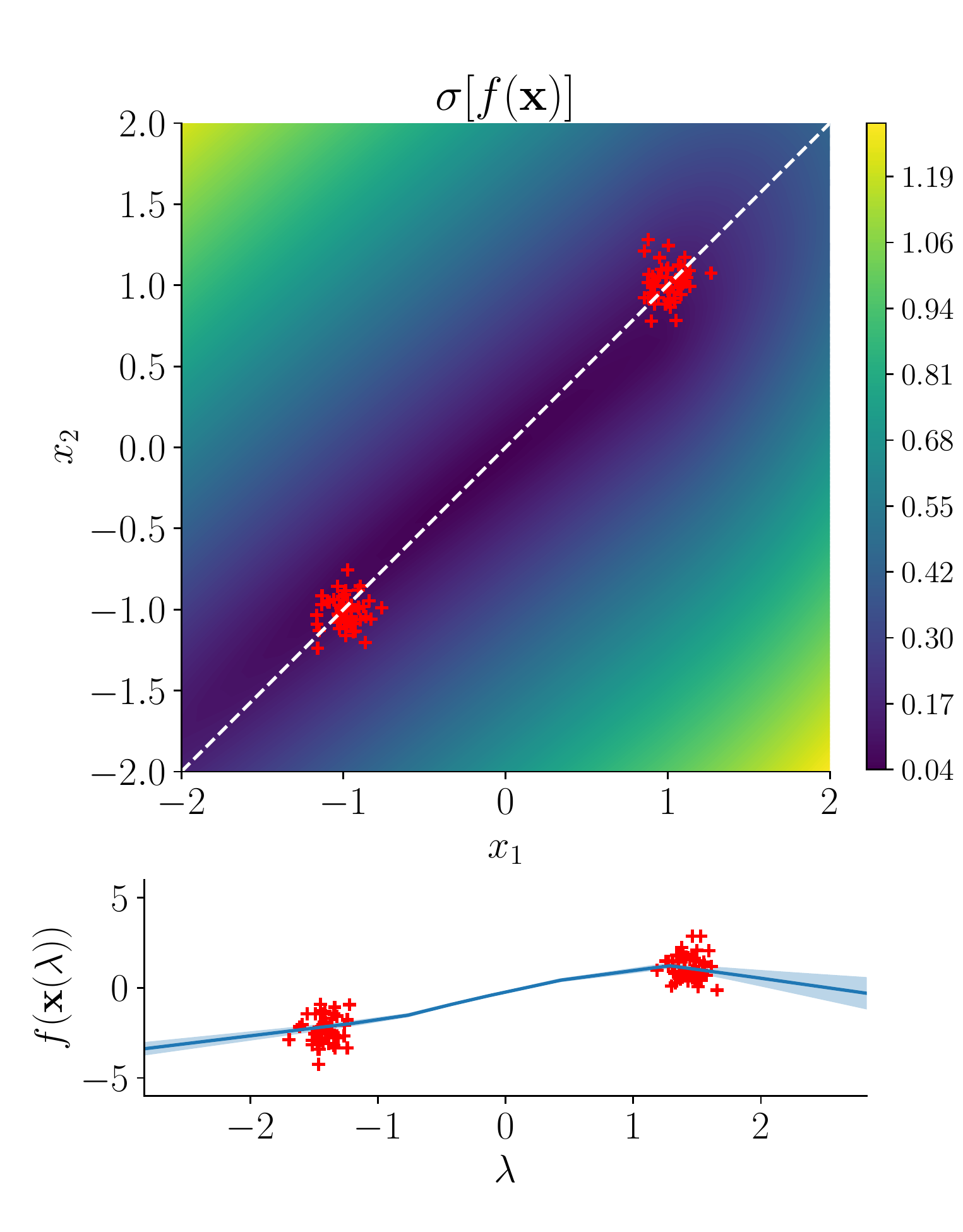}
        \caption{MCDO}
    \end{subfigure}
    \caption{Regression on a 2D synthetic dataset (red crosses). The colour plots show the standard deviation of the output, $\sigma[f(\bfx)]$, in 2D input space. The plots beneath show the mean with 2-standard deviation bars along the dashed white line (parameterised by $\lambda$). MFVI and MCDO are overconfident for $\lambda \in [-1, 1]$. \Cref{thm:mfgaussian,thm:dropout} explain this: given the uncertainty is near zero at the data, there is \emph{no} setting of the variational parameters that has variance much greater than zero in the line segment between them.}
    \label{fig:2D_dataset}
    \vspace{-.3cm}
\end{figure*}
To complete the proof of \cref{thm:mfgaussian}, we need to analyse $\Var[\Exp{\!}{f(\mathbf{x})|\thetainput}]$. Because of the factorisation assumptions on the weights in the first layer, this term is a linear combination of the variances of each activation function. While these variances are not convex, in \cref{app:SHL-proofs} we show they satisfy restrictive conditions that imply bounds on arbitrary positive linear combinations of these functions.  

\subsection{Empirical Tests of Approximate Inference in Single-Hidden Layer BNNs} \label{sec:empirical-tests}
It is not immediately apparent that \cref{thm:mfgaussian,thm:dropout} are problematic from the perspective of Bayesian inference. For example, even exact inference in a Bayesian linear regression model results in a convex predictive variance function. Here we provide strong evidence that, in contrast, the modelling assumptions of 1HL BNNs lead to \emph{exact} posteriors that \emph{do} show in-between uncertainty. \Cref{thm:mfgaussian,thm:dropout} thus imply that it is \emph{approximate} inference with $\Qffg$ or $\Qmcdo$ that fails to reflect this intuitively desirable property of the exact predictive, violating \ref{cond:expressive_family}.

\Cref{fig:2D_dataset} compares the predictive distributions obtained from MFVI and MCDO (here we optimise the ELBO for MFVI and the standard MCDO objective, in contrast with \cref{fig:single-hidden-layer-approximation} --- see \cref{app:details-effect-of-depth} for experimental details) with HMC and the limiting GP on a regression dataset consisting of two clusters of covariates. We use 1HL BNNs with 50 hidden units and ReLU activations. The HMC and limiting GP posteriors are almost indistinguishable, suggesting they both resemble the exact predictive. For these methods $\Var[f(\bfx)]$ is markedly larger near the origin than near the data. In contrast, MFVI and MCDO are as confident in between the data as they are near the data. This provides strong evidence that the lack of in-between uncertainty is not a feature of the BNN model or prior, but is caused by approximate inference.
 
\section{Deeper Networks}\label{sec:deep-stuff}

\Cref{thm:mfgaussian,thm:dropout} pose an important question: is the structural limitation observed in the 1HL case fundamental to $\Qffg$ and $\Qmcdo$ even in deeper networks, or can depth help these approximations satisfy \ref{cond:expressive_family}? In \cref{thm:universal}, we provide universality results for the mean and variance functions of approximate BNNs with at least two hidden layers using $\Qffg$ and $\Qmcdo$. As the predictive mean and variance often determine the performance of BNNs in regression applications, this provides theoretical evidence that approximate inference in \emph{deep} BNNs satisfies \ref{cond:expressive_family}. 

\begin{thm}[Deeper networks]\label{thm:universal}
Let $g$ be any continuous function on a compact set $A \subset \R^D$, and $h$ be any continuous, non-negative function on $A$. For any $\epsilon > 0$, for both $\Qffg$ and $\Qmcdo$ there exists a 2HL ReLU BNN such that $\sup_{\bfx \in A} |\Exp{\!}{f(\bfx)} - g(\bfx)| \!<\! \epsilon $ and $\sup_{\bfx \in A} |\Var[f(\bfx)] - h(\bfx)| \!<\! \epsilon.$
\end{thm}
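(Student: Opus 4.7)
My plan is to use the first two hidden layers to compute essentially deterministic bump-like ReLU features, and then to shape the predictive mean and variance independently via the random output layer. By standard 2HL ReLU universal approximation, I can build features $\{\phi_k\}_{k=1}^M$ that approximate a partition of unity subordinate to a grid $\{B_k\}_{k=1}^M$ of $A$, chosen fine enough that both $g$ and $h$ vary by less than $\epsilon/3$ on each $B_k$ (using uniform continuity on the compact set $A$). These bumps are approximately indicator-like, so $\phi_k(\bfx)^2 \approx \phi_k(\bfx)$; moreover, for any continuous $c$ and representatives $\bfx_k \in B_k$ one has $\sum_k c(\bfx_k)\phi_k(\bfx) \approx c(\bfx)$ uniformly on $A$. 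This reduces the problem to choosing the output layer distribution appropriately for each family.

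\textbf{FFG construction.}
Set the first- and second-layer FFG means to implement the above deterministic $\phi_k$ construction and their variances to zero (either as a degenerate limit in $\Qffg$, or handled by a standard continuity argument if strict positivity is required: as the inner-layer variances shrink to $0$, the resulting $\Exp{\!}{f}$ and $\Var[f]$ converge uniformly to the zero-variance calculation). Set each output weight $W_k$ attached to $\phi_k$ to be independently $\mcN(g(\bfx_k),\, h(\bfx_k))$, and the output bias to a degenerate $\mcN(0,0)$. Using independence of the output weights, $\Exp{\!}{f(\bfx)} = \sum_k g(\bfx_k)\phi_k(\bfx) \approx g(\bfx)$ and $\Var[f(\bfx)] = \sum_k h(\bfx_k)\phi_k(\bfx)^2 \approx h(\bfx)$, both uniformly on $A$.

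\textbf{MCDO construction.}
With keep probability $q = 1-p$, the second-layer features $\phi_k(\bfx;\bfeps_1)$ are random because of the first-layer dropout mask. To make them approximately deterministic, I would exploit the positive homogeneity of ReLU by replicating each effective first-layer unit $N$ times with weights and bias scaled by $1/N$, so each copy outputs $\tilde h_j(\bfx)/N$; the dropout-masked sum entering each second-layer neuron is then $\tilde h_j(\bfx)$ times the empirical mean of $N$ i.i.d.\ Bernoulli$(q)$'s, which has mean $q\tilde h_j(\bfx)$ and variance $O(1/N)$, and ReLU's Lipschitzness gives $\phi_k(\bfx;\bfeps_1) = \phi_k^*(\bfx) + O(N^{-1/2})$ in $L^2$ uniformly on $A$ for an appropriate deterministic target $\phi_k^*$ (with an overall $1/q$ absorbed into the second-layer weights). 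To independently shape mean and variance of $f$, I would triple each bump in the second hidden layer as $(k,m), (k,v+), (k,v-)$, all computing $\phi_k^*$, with output weights $g(\bfx_k)/q$, $+s_k$, $-s_k$ respectively; the $\pm s_k$ pair cancels in mean but contributes twice to variance. A law-of-total-variance calculation then gives $\Exp{\!}{f(\bfx)} \approx g(\bfx)$ and $\Var[f(\bfx)] \approx \frac{1-q}{q} g(\bfx)^2 + 2q(1-q)\sum_k s_k^2 \phi_k^*(\bfx)$. Choosing $q$ close enough to $1$ that $\frac{1-q}{q}\sup_A g^2 < \epsilon/2$ and setting $s_k^2 = \max\bigl\{0,\, [h(\bfx_k) - \tfrac{1-q}{q}g(\bfx_k)^2]/(2q(1-q))\bigr\}$ drives both approximations within $\epsilon$ uniformly on $A$ once $N$ is taken large enough.

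\textbf{Main obstacle.}
The main challenge is the MCDO case: the output weight variances $s_k^2$ can blow up as $q \to 1$ (which is forced when $h$ vanishes at a point where $g$ does not), so the quantifier order matters. One must first fix the grid to control the bump-approximation error, then fix $q$ close to $1$ based on $\sup_A g^2$, then fix $N$ large enough that the $O(N^{-1/2})$ fluctuations of $\phi_k(\bfx;\bfeps_1)$ around $\phi_k^*(\bfx)$ still translate to $O(\epsilon)$ error in $\Var[f]$ despite being multiplied by the now-large $s_k^2$. The FFG case, by contrast, is essentially an exercise in choosing the output weight means and variances independently.
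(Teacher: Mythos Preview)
Your partition-of-unity strategy has a genuine gap at the step ``$\phi_k(\bfx)^2 \approx \phi_k(\bfx)$'', and it does not close by refining the grid. A continuous partition of unity cannot be indicator-like in sup norm: wherever two neighbouring bumps $\phi_j,\phi_{j'}$ are both positive and sum to $1$, say $\phi_j=\phi_{j'}=1/2$, one has $\phi_j^2+\phi_{j'}^2=1/2$, so
\[
\sum_k h(\bfx_k)\,\phi_k(\bfx)^2 \;\approx\; h(\bfx)\bigl(\phi_j^2+\phi_{j'}^2\bigr)\;=\;\tfrac12\, h(\bfx),
\]
an $O(\|h\|_\infty)$ pointwise error that persists no matter how fine the mesh is or how thin the transition layer. (Making the bumps have disjoint supports instead breaks the \emph{mean} approximation on the same transition regions, since then $\sum_k\phi_k<1$ there.) This affects both your $\Qffg$ and your $\Qmcdo$ constructions, because both rely on $\sum_k c_k\phi_k^2 \approx c$ uniformly.

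There is a second, independent issue in the MCDO case: the paper proves the result for an \emph{arbitrary fixed} dropout rate $p\in(0,1)$, whereas your construction requires taking $q=1-p$ close to $1$ to suppress the parasitic $\frac{1-q}{q}g(\bfx)^2$ term. As written this only proves a weaker statement.

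The paper sidesteps both difficulties by not using bumps at all. For $\Qffg$ it uses just \emph{two} second-hidden-layer neurons: $a_1\approx g(\bfx)-\min_A g$ carries the mean (output weight essentially deterministic $1$, bias shifts back), and $a_2\approx\sqrt{h(\bfx)}$ carries the variance (output weight $\mcN(0,1)$), so that $\Var[f(\bfx)]\approx a_2^2\approx h(\bfx)$. Squaring a single universal approximant of $\sqrt{h}$ is sup-norm stable, unlike summing squared bumps. For $\Qmcdo$ with arbitrary $p$, the paper kills the mean-branch variance by averaging over many identical second-layer copies (your replication trick, applied one layer up rather than by sending $p\to 0$), and scales the variance branch to target $\sqrt{h/(2p(1-p))}$ so that the $2p(1-p)$ factor from the $\pm$ pair cancels. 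Your outline can be repaired along the same lines---separate mean and variance features, with the variance feature approximating $\sqrt{h}$---but the ``indicator-like partition of unity'' step as stated does not give uniform control of $\Var[f]$.
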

\begin{rem}
If MC dropout is used with inputs also dropped out, the analogous statement to \cref{thm:universal} is false. In \cref{app:dropout-drop-inputs}, we provide a counterexample that holds for arbitrarily deep networks and shows that if this is the case, $\Var[f]$ cannot be made small at two points $\bfx_1, \bfx_2$ which have significantly different values of $\Exp{\!}{f(\bfx_1)}$ and $\Exp{\!}{f(\bfx_2)}$.
\end{rem}

\Cref{fig:universal} shows the result of directly minimising the squared error between the network output mean and variance and a given target mean and variance function, using the same method and architecture as with the 1HL network in \cref{fig:single-hidden-layer-approximation}. In contrast to \cref{fig:single-hidden-layer-approximation}, the variances of both $\Qffg$ and $\Qmcdo$ are able to fit the target.

 \begin{figure*}[ht]
    \centering
    \includegraphics[width=.48\columnwidth,trim={.1cm .7cm .1cm .25cm}]{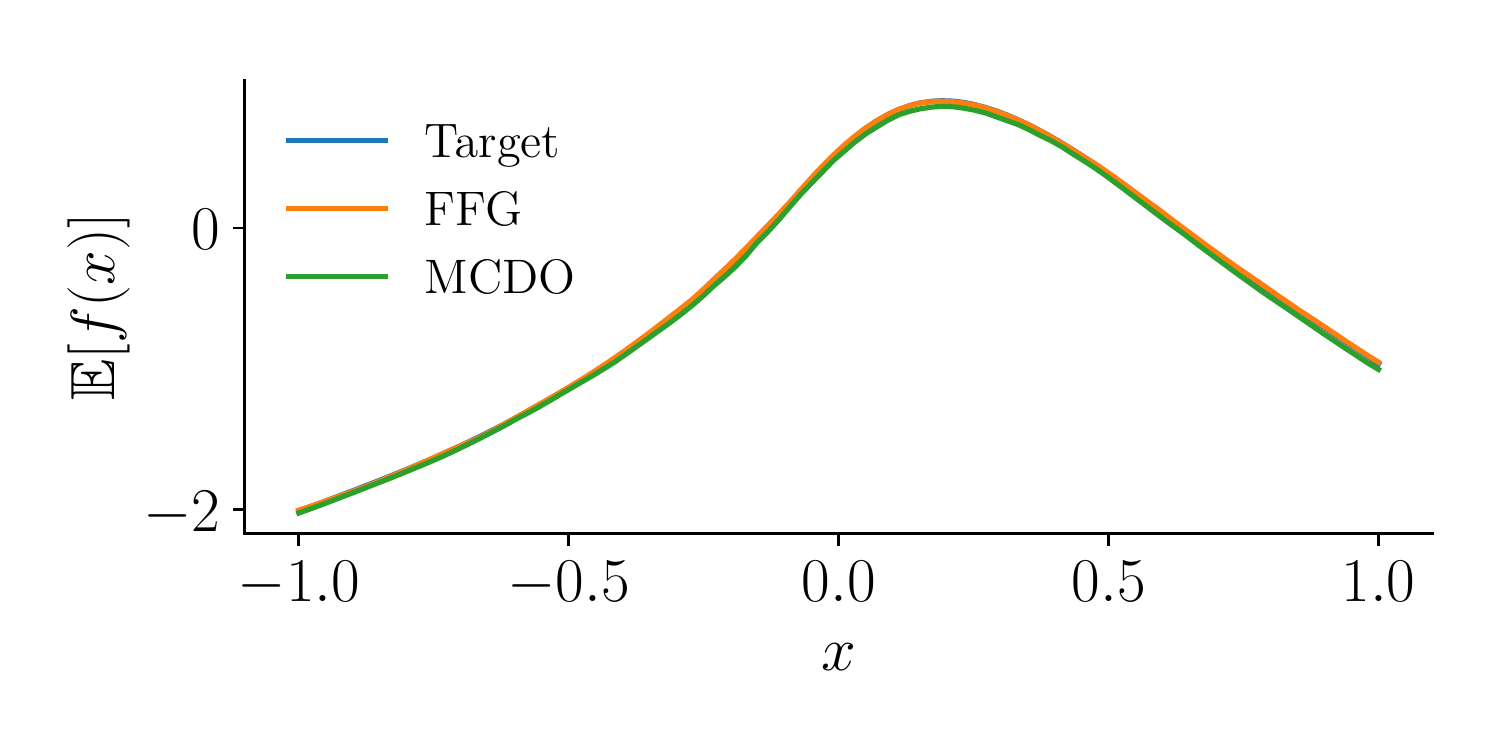}
    \includegraphics[width=.48\columnwidth,trim={.1cm .7cm .1cm .25cm}]{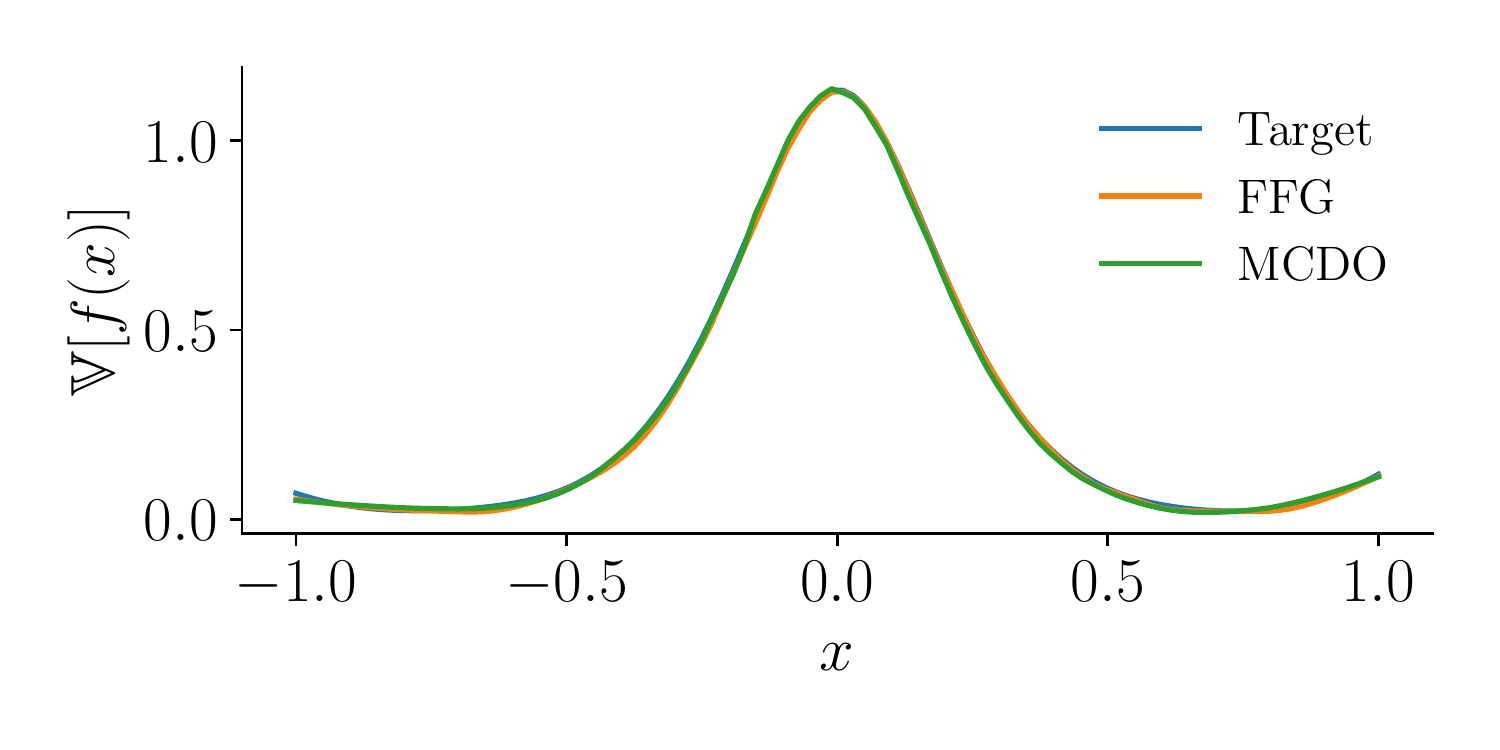}
    \caption{Results of minimising the squared error in function space between $\Exp{\!}{f(x)}$ and a target mean function (left), and between $\Var[f(x)]$ and a target variance function (right), for a 2-hidden layer BNN with FFG and MCDO distributions.}
    \label{fig:universal}
\end{figure*}

While \cref{thm:universal} gives some cause for optimism for approximating family methods with deep BNNs, it only shows that the mean and variance of marginal distributions of the output are universal (it does not tell us about higher moments or covariances between outputs). Additionally, it does not say whether good distributions will actually be \emph{found} by an optimiser when maximising the ELBO, i.e~it does not address \ref{cond:choose_member}.

\subsection{Proof Sketch of Theorem 3}
To prove \cref{thm:universal} for $\Qffg$, we provide a construction that relies on the universal approximation theorem for deterministic NNs \citep{leshno1993multilayer}. Consider a 2HL NN whose second hidden layer has two neurons, with activations $a_1, a_2$. Let $w_1, w_2$ denote the weights connecting $a_1, a_2$ to the output, and $b$ denote the output bias, such that the output $f(\bfx) = w_1\psi(a_1) + w_2\psi(a_2) + b$. In this construction, $a_1$ will be used to control the mean, and $a_2$ the variance, of the BNN output. By setting the variances in the first two linear layers to be sufficiently small, we can consider $a_1$ and $a_2$ to be essentially deterministic functions of $\bfx$. By the universal approximation theorem, $a_1$ and $a_2$ can approximate any continuous functions. Choose $a_1 \!\approx \!g(\bfx)-\min_{\bfx' \in A} g(\bfx')$ and $a_2 \!\approx\! \sqrt{h(\bfx)}$. Choose $\Exp{\!}{b} \!=\! \min_{\bfx' \in A} g(\bfx')$, $\Var[b] \!\approx\! 0$; $\Exp{\!}{w_1} \!=\! 1$, $\Var[w_1] \!\approx\! 0$; and $\Exp{\!}{w_2} \!=\! 0$, $\Var[w_2] \!= \!1$. By linearity of expectation, the factorisation assumptions, and $a_1,a_2\geq 0$:
\begin{align*}
\Exp{\!}{f(\bfx)} &= \Exp{\!}{w_1\psi(a_1)+w_2\psi(a_2)+b} 
=\Exp{\!}{w_1}\Exp{\!}{\psi(a_1)}+\Exp{\!}{w_2}\Exp{\!}{\psi(a_2)} +\Exp{\!}{b} \\
&\approx g(\bfx)-\min_{\bfx' \in A} g(\bfx') + \min_{\bfx' \in A} g(\bfx') =g(\bfx),
\end{align*}
as desired. By the law of total variance, the variance of the network output is \begin{align*}
\Var[f(\bfx)] \!=\!\Exp{\!}{\Var[f(\bfx)|a_1,a_2]}\!+\!\Var[\Exp{\!}{f(\mathbf{x})|a_1,a_2}] \!\approx\! \Exp{\!}{\Var[f(\bfx)|a_1,a_2]}
\!\approx\! \Exp{\!}{\psi(a_2)^2} \!+\! \Var[b] \!\approx\! h(\bfx),\end{align*}
where we used that $w_1, b$ are essentially deterministic and $\Var[\Exp{\!}{f(\mathbf{x})|a_1,a_2}] \approx 0$ since $a_1, a_2$ are essentially deterministic. Also, we have that $\psi(a_2) \approx a_2$ since $a_2 \approx \sqrt{h(\bfx)} \geq 0$. The approximations come from the standard universal function approximation theorem, and the variances of weights not being set exactly to $0$ so that we remain in $\Qffg$. A rigorous proof, along with a proof for $\Qmcdo$ with any dropout rate $p \in (0,1)$, is given in \cref{app:deep-proofs}. The proof for $\Qmcdo$ uses a similar strategy, but is more involved as we cannot set individual weights to be essentially deterministic.

\subsection{Empirical Tests of Approximate Inference in Deep BNNs} \label{sec:deep-empirical}

We now consider empirically whether the distributions found by optimising the ELBO with these families resemble the exact predictive distribution (\ref{cond:choose_member}). 
To do this, we define the `overconfidence ratio' at an input $\bfx$ as $\gamma(\bfx) = (\Var_{\mathrm{GP}}[f(\bfx)]/\Var_{q_\phi}[f(\bfx)])^{1/2}$, where $\Var_{\mathrm{GP}}$ is the predictive variance of exact inference in the infinite-width BNN. We then compute $\gamma(\bfx)$ at 300 points $\{\bfx_n \}_{n=1}^{300}$ evenly spaced along the dashed white line joining the data clusters in \cref{fig:2D_dataset}, i.e., from $\bfx = (-1.2, -1.2)$ to $\bfx = (1.2, 1.2)$. We then create boxplots of the values $\{\gamma(\bfx_n) \}_{n=1}^{300}$ for varying BNN depths, shown in \cref{fig:deep-box}. Accurate inference should lead to similar uncertainty estimates to the limiting GP, i.e.~the boxplot should be tightly centered around $1$ (dashed line). For the 1HL and 2HL BNNs, the GP and HMC agree closely, suggesting both resemble the exact predictive. In contrast, MFVI and MCDO are often an order of magnitude overconfident ($\gamma(\bfx) > 1$) \emph{between} the data clusters (upper tail of the boxplot) and somewhat underconfident ($\gamma(\bfx) < 1$) \emph{at} the data clusters (lower tail of the boxplot). Increased depth does not alleviate this behaviour. See \cref{app:details-effect-of-depth} for experimental details and figures demonstrating this for different priors. In addition, in \cref{app:random-regression}, we plot the uncertainty on line segments in between \emph{random} clusters of data in a 5-dimensional input space, with similar results, showing that this phenomenon is not specific to the dataset from \cref{fig:2D_dataset}. 

\begin{figure}[t]
    \centering
    \includegraphics[width=.5\columnwidth]{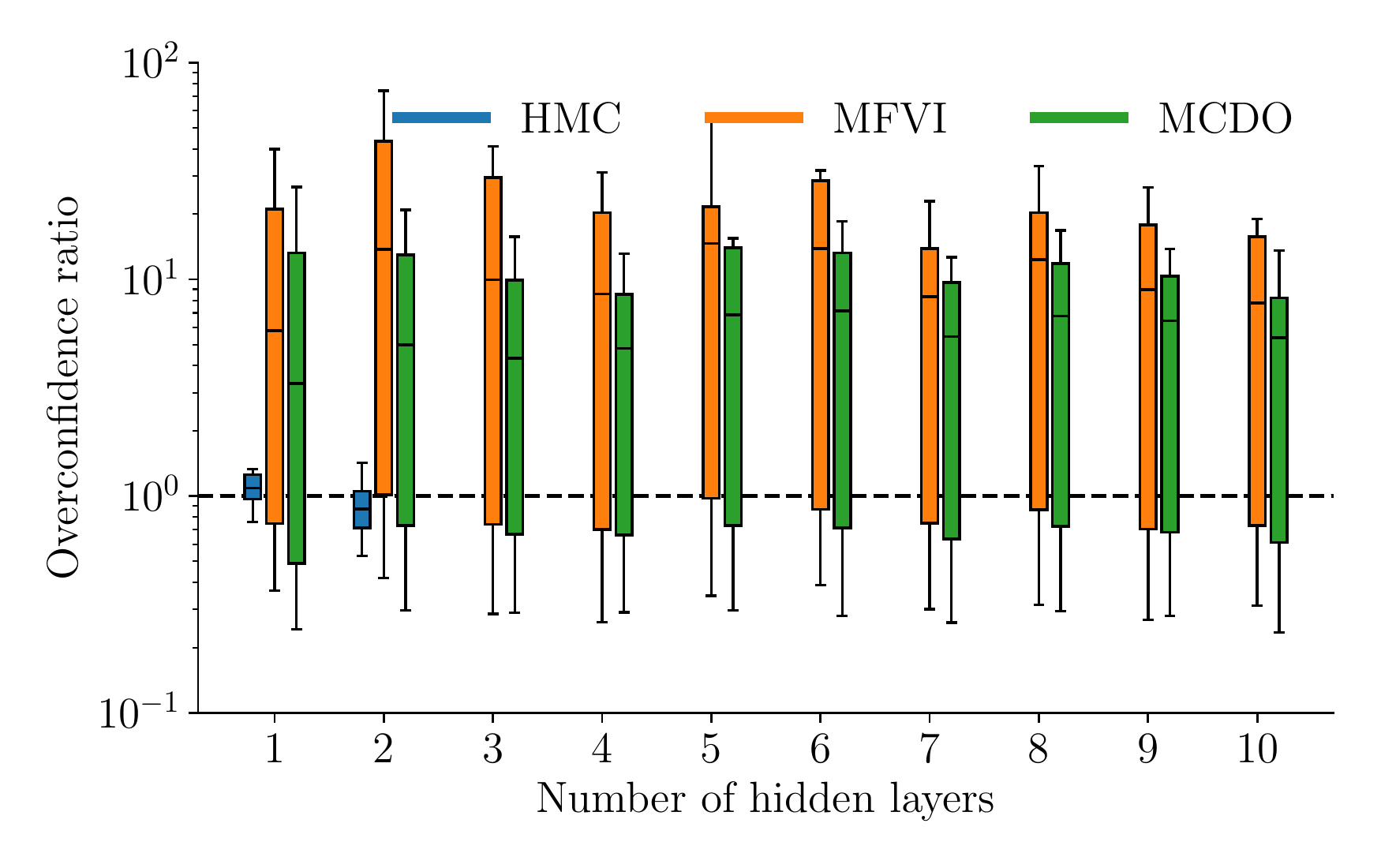}
    \caption{Box and whisker plots of the overconfidence ratios of HMC, MFVI and MCDO  relative to exact inference in the corresponding infinite-width limit GP. The whiskers show the smallest and largest overconfidence ratios computed, and the the box extends from the lower to upper quartile values of the overconfidence ratios, with a line at the median. HMC is only run for 1 and 2 hidden layers due to difficulty ensuring convergence in larger models. We fix the BNN width to 50.}
    \label{fig:deep-box}
    \vspace{-.3cm}
\end{figure}

In light of \cref{thm:universal}, it is perhaps surprising that VI fails to capture important properties of the predictive with deep networks. In \cref{app:initialise-VI} we initialise the variational parameters such that the approximate predictive has mean and variance functions that closely match a reference predictive that exhibits in-between uncertainty. This is done by directly minimising the squared loss between the BNN mean and variance functions and the references. We find that proceeding to optimise the ELBO from this initialisation \emph{still} leads to a lack of in-between uncertainty. This suggests that the objective function is at least partially at fault for the mismatch between the approximate and exact posteriors.

\section{Case Study: Active Learning with BNNs}\label{sec:experiments}
 We now consider the impact of the pathologies described in \cref{sec:SHL-theory,sec:deep-stuff} on active learning \citep{settles2009active} on a real-world dataset, where the task is to use uncertainty information to intelligently select which points to label. Active learning with approximate BNNs has been considered in previous works, often showing improvements over random selection of datapoints \citep{hernandez2015probabilistic,gal2017deep}. However, in cases when active learning fails, common metrics such as RMSE are insufficient to diagnose the causes. In particular, it is difficult to attribute the failure to the model or to poor approximate inference. In this section, we specifically analyse a dataset where we have observed active learning with approximate BNNs to fail --- the Naval regression dataset \citep{Coraddu2013Machine}, which is 14-dimensional and consists of 11,934 datapoints. We find via PCA that this dataset has most of its variance along a single direction. It hence may be especially problematic for methods that struggle with in-between uncertainty, as points are more likely to lie roughly in between others.
 
The main questions we address are: i) Is a lack of in-between uncertainty indicative of pathological behaviour in the 1HL case? In higher dimensional datasets such as Naval, it is not immediately apparent that \cref{thm:dropout,thm:mfgaussian} are problematic, since the convex hull of the datapoints may have low volume in high dimensions. However, these theorems may be symptomatic of \emph{related} problems with 1HL BNN uncertainty estimates. ii) Given \cref{thm:universal}, will deeper approximate BNNs usefully reflect BNN modelling assumptions for active learning? 

\subparagraph{Experimental Set-up and Results}
We compare MFVI, MCDO and the limiting GP. We do not run HMC as this would take too long to wait for convergence at each iteration of active learning. We normalise the dataset to have zero mean and unit standard deviation in each dimension. 5 datapoints are chosen randomly as an initial active set, with the rest being the pool set. Models are trained on the active set, then the datapoint from the pool set that has the highest predictive variance is added to the active set, following \citet{hernandez2015probabilistic}. We train MFVI and MCDO for 20,000 iterations of ADAM at each step of active learning. This process is repeated 50 times. \Cref{tab:active} shows the RMSE of each model on a held-out test set after this process, compared to a baseline where points are chosen randomly. Full details are in \cref{app:details-active-learning}. Active learning significantly reduces RMSE for the GP compared to random selection, often by more than a factor of three. However it \emph{increases} RMSE for 1HL MFVI and MCDO, and either increases it or does not significantly decrease it for deeper networks. The one exception is 3HL MCDO, where active performs about 10\% better than random, which is still far less than the factor of three improvement suggested by exact inference in the infinite-width BNN. 

 \begin{figure*}[t]
    \centering
    \begin{subfigure}[b]{0.24\columnwidth}
        \includegraphics[width=\textwidth]{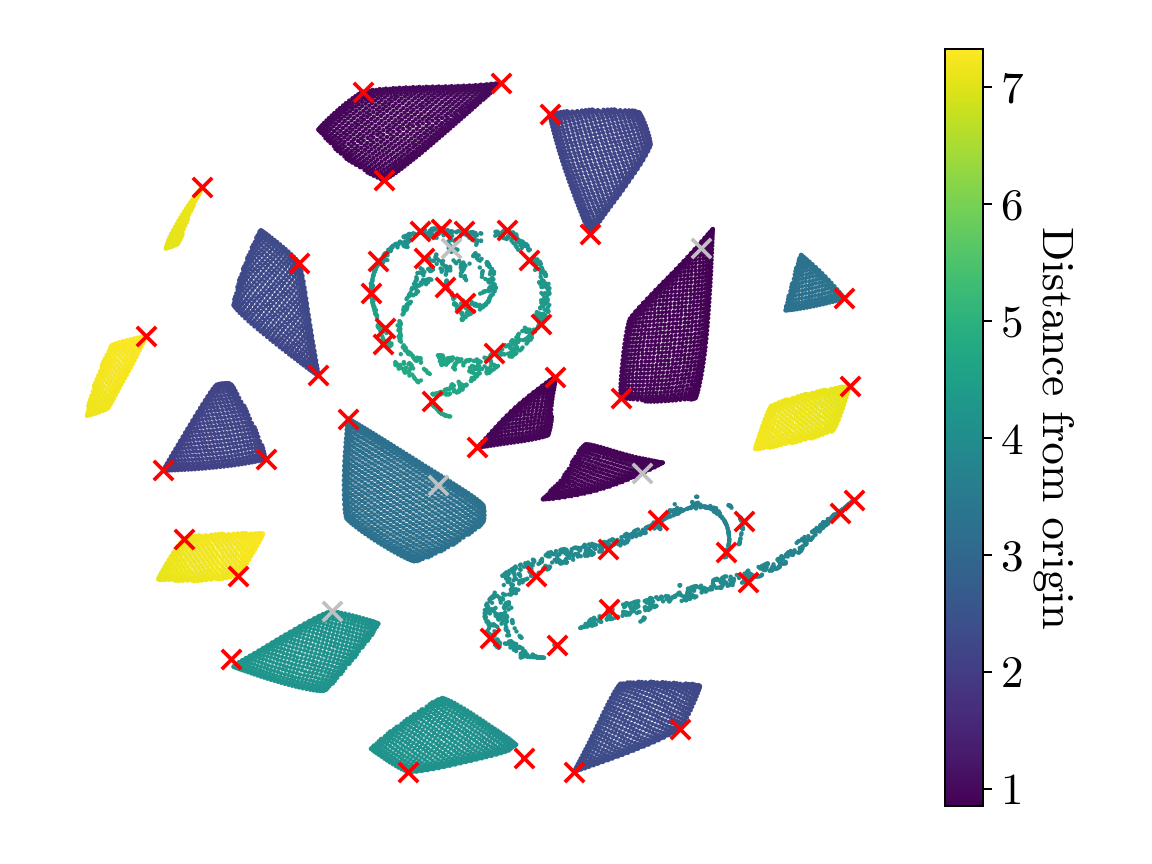}
        \caption{Wide-limit GP}
    \end{subfigure}
    \begin{subfigure}[b]{0.24\columnwidth}
        \includegraphics[width=\textwidth]{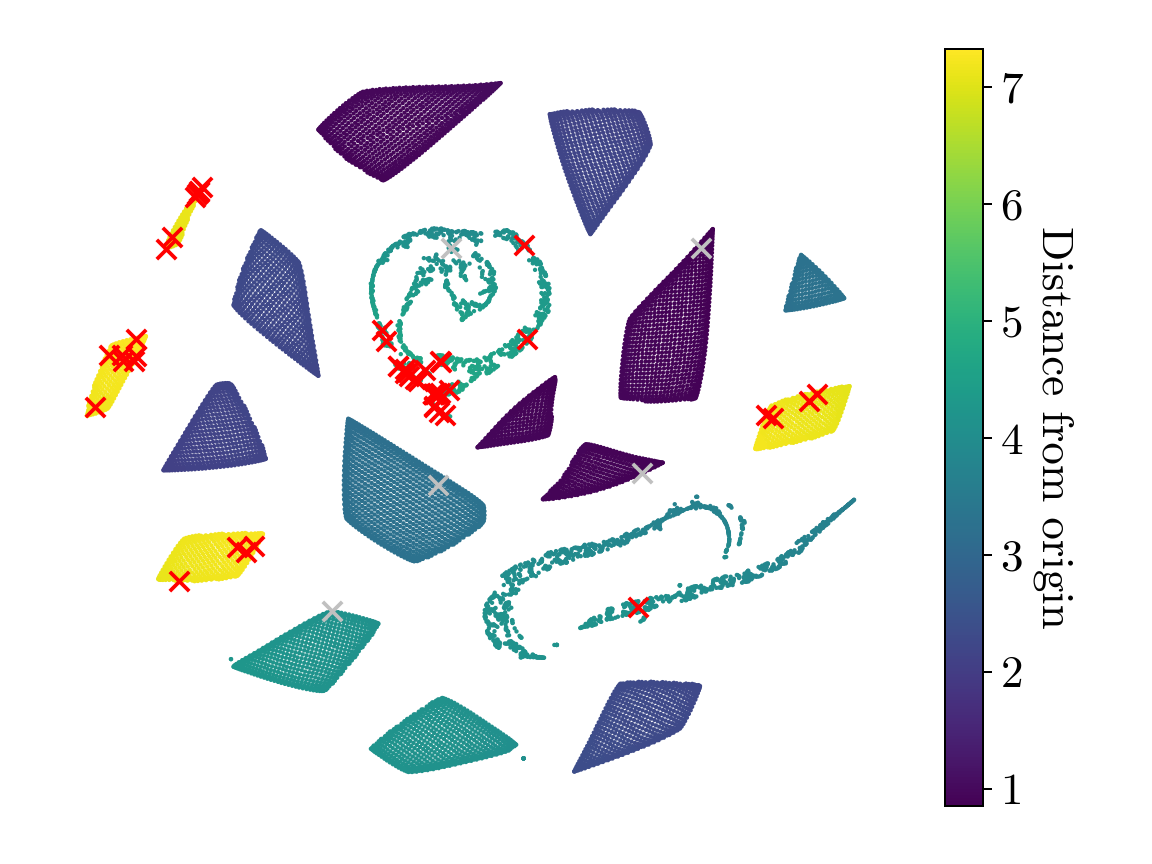}
        \caption{MFVI}
    \end{subfigure}
    \begin{subfigure}[b]{0.24\columnwidth}
        \includegraphics[width=\textwidth]{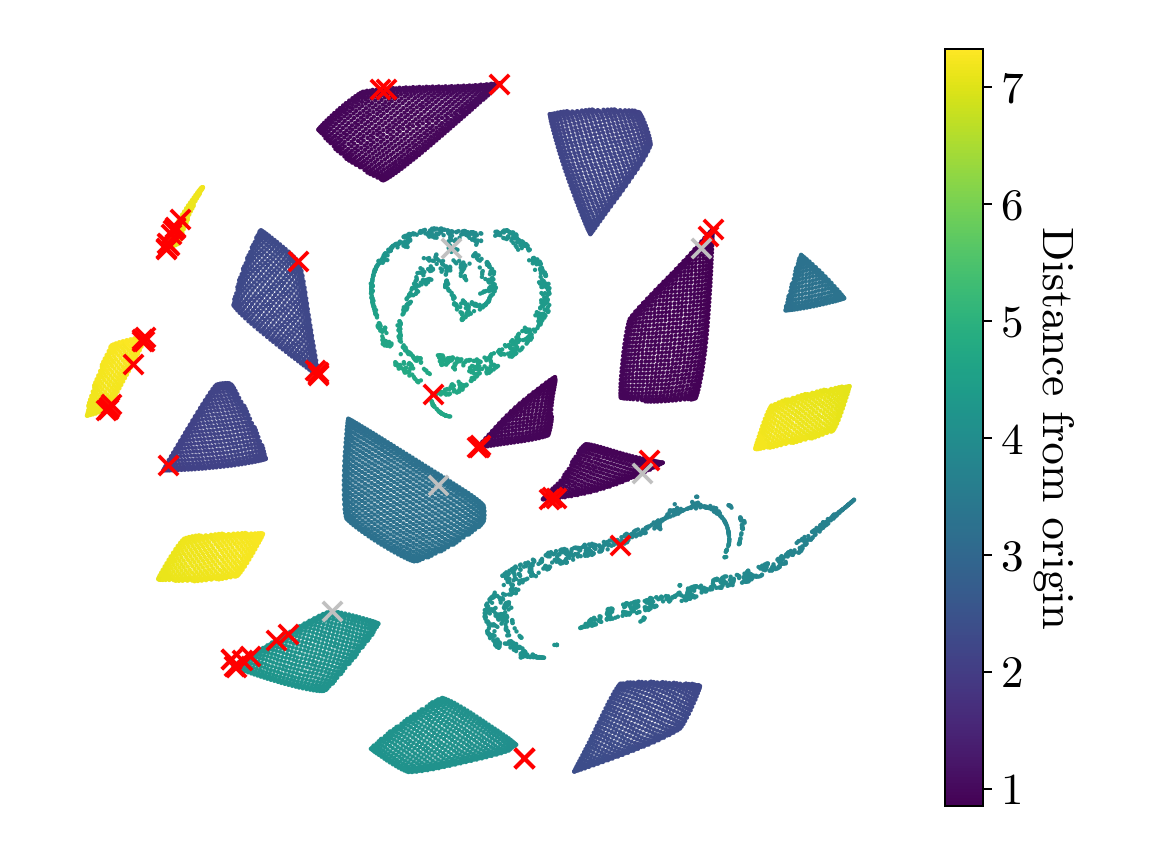}
        \caption{MCDO}
    \end{subfigure}
    \begin{subfigure}[b]{0.24\columnwidth}
        \includegraphics[width=\textwidth]{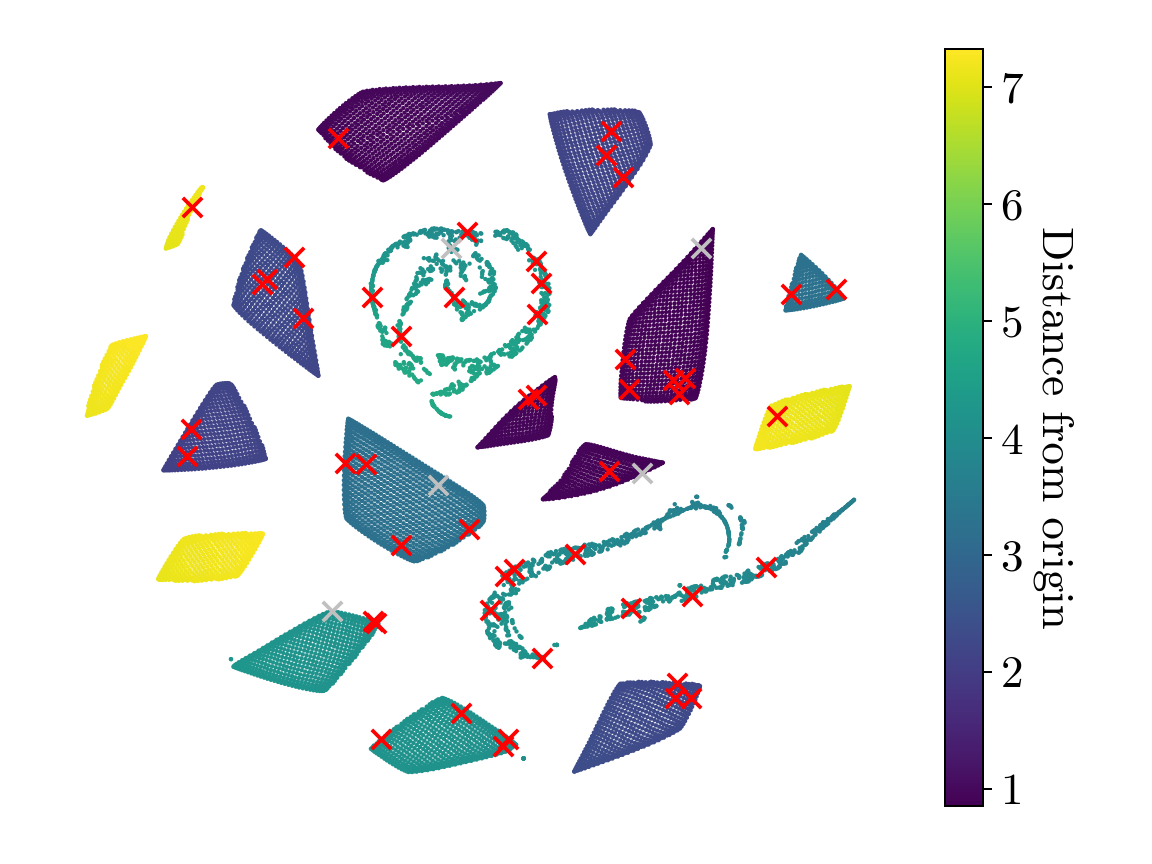}
        \caption{Random}
    \end{subfigure}
    \caption{Points chosen during active learning in the 1HL case. Colours denote distance from the origin in 14-dimensional input space. Grey crosses (\textcolor{gray}{\ding{53}}) denote the five points randomly chosen as an initial training set. Red crosses (\textcolor{red}{\ding{53}}) denote the 50 points selected by active learning. Both MFVI and MCDO entirely miss some clusters which are nearer the origin, and oversample certain clusters which are far from the origin, as might be expected of methods that struggle to represent in-between uncertainty. In contrast, the limiting GP samples the `corners' of each cluster, without missing any entirely. Note that t-SNE does not preserve relative positions, so that clusters near the origin may appear on the ‘outside’ of the t-SNE plot.}
    \label{fig:tsne1}
    \vspace{-.3cm}
\end{figure*}

\begin{table}[b]
    \vspace{-.5cm}
    \centering
    \caption{Test RMSEs ($\pm$ 1 standard error) after the 50\textsuperscript{th} iteration of active learning, averaged over 20 random seeds. As the data is normalised, a method that predicts $0$ will have an RMSE near $1$.}
    \label{tab:active}
   {
    \begin{tabular}{ccccc}
    & 1 HL & 2 HL & 3 HL & 4 HL \\ \hline 
GP Active & $0.04 \pm 0.00$ & $0.04 \pm 0.00$ & $0.04 \pm 0.00$ & $0.05 \pm 0.00$ \\ 
 GP Random & $0.12 \pm 0.01$ & $0.13 \pm 0.01$ & $0.15 \pm 0.01$ & $0.16 \pm 0.01$ \\ 
 MFVI Active& $0.94 \pm 0.11$ & $0.46 \pm 0.04$ & $0.35 \pm 0.03$ & $0.31 \pm 0.02$ \\ 
 MFVI Random & $0.15 \pm 0.01$ & $0.23 \pm 0.01$ & $0.28 \pm 0.01$ & $0.32 \pm 0.01$ \\ 
 MCDO Active & $0.69 \pm 0.04$ & $0.36 \pm 0.02$ & $0.38 \pm 0.02$ & $0.45 \pm 0.02$ \\ 
 MCDO Random & $0.22 \pm 0.01$ & $0.35 \pm 0.01$ & $0.43 \pm 0.01$ & $0.47 \pm 0.02$ \\
    \end{tabular}}
    \vspace{-.2cm}
\end{table}

\subparagraph{Discussion} 
In \cref{fig:tsne1} we visualise the dataset using t-SNE \citep{maaten2008visualizing}. The covariates of Naval are clustered, with points in the same cluster roughly the same distance from the origin. Since the dataset is mean-centred, points closer to the origin are in a sense `in between' others. We see that although the GP chooses points from every cluster during active learning, MFVI fails to select any points from many clusters --- including all the clusters closest to the origin. It ignores points in the `inside' and oversamples points on the `outside', leading to a selection strategy worse than random. This behaviour is consistent with \cref{thm:mfgaussian}. MCDO also fails to sample from many clusters; in \cref{app:figures-active-learning} we show this is because it fails to reduce its uncertainty at clusters it has already heavily sampled. Interestingly, it sometimes chooses from clusters near the origin, even though its variance function is provably convex. This may be because the minimum of the variance function for MCDO is not centred at the origin, or because the variance has the shape of an elongated valley. In contrast, the GP seems to select the `corners' of each cluster, which is intuitively efficient. The success of the infinite-width GP provides evidence that this BNN model combined with exact inference has desirable inductive biases for this task; it is \emph{approximate} inference that has caused active learning to fail. In deeper networks, \cref{thm:universal} gives hope that the BNN predictive variance may be useful for active learning. While we find the problems are indeed less severe than in the 1HL case, MFVI still oversamples points away from the origin compared to those near the origin (see \cref{app:figures-active-learning}).

\section{Related Work}\label{sec:related-work}
Concerns have been raised about the suitability of $\Qffg$ since early work on BNNs. \citet[Figure 1]{mackay1992practical} noted that a full-covariance Gaussian family was needed to obtain predictions with increased uncertainty away from data with the Laplace approximation, although no detailed explanation was provided. The desire to go beyond $\Qffg$ has motivated a great deal of research into more flexible approximating families \citep{barber1998ensemble,louizos2017multiplicative,ritter2018scalable}. However, to our knowledge, \cref{thm:mfgaussian} is the first theoretical result showing that $\Qffg$ can have a pathologically restrictive effect on BNN predictive uncertainties. 

Recently, \Citet{farquhar2020try} argued that MFVI becomes a less restrictive approximation as depth increases in BNNs. 
However, they use different criteria to assess the quality of approximate inference. \Citet{farquhar2020try} use performance on classification benchmarks such as ImageNet and also the KL-divergence between certain Gaussian approximations to HMC samples in weight space to evaluate inference. In contrast, we investigate the resemblance between the function-space predictive distributions of the approximate and exact posteriors with the same prior, and focus on separating the effects of modelling and approximate inference. Additionally, we do not consider posterior tempering, and we use a different scaling for our BNN priors (see \cref{sec:priors_and_references}). Recently, \citet{wenzel2020good} also performed a study of the quality of approximate inference in deep BNNs. They focused on stochastic gradient Markov Chain Monte Carlo (SGMCMC) \citep{welling2011bayesian,chen2014stochastic,zhang2019cyclical} in deep convolutional networks, and concluded that SGMCMC is accurate enough for inference, suggesting that the prior is at fault for poor performance. In contrast, we provide theoretical and empirical evidence for a specific pathology in VI in lower-dimensional regression problems, and demonstrate cases where BNN priors \emph{do} encode useful inductive biases which are subsequently lost by approximate inference.

\Citet{osband2018randomized} note that the MCDO predictive distribution is invariant to duplicates of the data, and in the linear case predictive uncertainty does not decrease as dataset size increases (if the dropout rate and regulariser are fixed\footnote{Note that for a fixed prior, the `KL condition' \citep[Section 3.2.3]{gal2016uncertainty} requires the $\ell_2$ regularisation constant to decrease with increasing dataset size.}). \Cref{thm:dropout} shows that in the non-linear 1HL case, predictive uncertainty in the MCDO posterior is fundamentally restricted even for datasets without repeated entries.

\section{Conclusions} 
Principled approximate Bayesian inference involves defining a reasonable model, then finding an approximate posterior that retains the relevant properties of the exact posterior. We have shown that for BNNs, in-between uncertainty is a feature of the predictive that is often lost by variational inference. Although this is of greatest relevance for lower-dimensional regression tasks, the fact that MFVI and MCDO often fail these simple sanity checks indicates that these methods might generally have predictive distributions which are qualitatively different from the exact predictive. While BNNs have previously been shown to provide uncertainty estimates that are useful for a range of tasks, it remains an open question as to what extent this is attributable to a resemblance between the approximate and exact predictive posteriors. 

\section*{Broader Impact}
Bayesian approaches to deep learning problems are often proposed in situations where uncertainty estimation is critical. Often the justification given for this approach is the probabilistic framework of Bayesian inference. However, in cases where approximations are made, the quality of these approximations should also be taken into account. Our work illustrates that the uncertainty estimates given by approximate inference with commonly used algorithms may not qualitatively resemble the uncertainty estimates implied by Bayesian modelling assumptions. This may possibly have adverse consequences if Bayesian neural networks are used in safety-critical applications. Our work motivates a careful consideration of these situations.

\begin{ack}
We thank Wessel Bruinsma for the proof of lemma 5, and Jos\'e Miguel Hern\'andez-Lobato, Ross Clarke and Sebastian W. Ober for helpful discussions. AYKF gratefully acknowledges funding from the Trinity Hall Research Studentship and the George and Lilian Schiff Foundation. DRB gratefully acknowledges funding from the Herchel Smith Fellowship through Williams College, as well as the Qualcomm Innovation Fellowship. RET is supported by Google, Amazon, ARM, Improbable, EPSRC grants EP/M0269571 and EP/L000776/1, and the UKRI Centre for Doctoral Training in the Application of Artificial Intelligence to the study of Environmental Risks (AI4ER).
\end{ack}

\bibliographystyle{plainnat}
\bibliography{references}

\clearpage
\appendix

\section{In-between Uncertainty in Other Regions of Input Space}\label{app:random-regression}

In this appendix, we show plots generated by placing two Gaussian clusters of data with centers randomly chosen on the sphere of radius $\sqrt{D}$, where $D=5$ denotes input dimension. We generate synthetic data by sampling from a wide-limit Gaussian process. For each plot, we show the predictive mean and 2 standard deviations along the line segment in input space joining the centres of these two clusters. For all plots, we choose $\sigma_w = \sqrt{2}$, $\sigma_b = 1$, networks of width 50 and dropout probability of $p=0.05$ for MCDO. We set the observation noise standard deviation to 0.01, which is the ground truth value used to generate the synthetic data. The initialisation of MFVI and MCDO is the same as discussed in \cref{app:details-effect-of-depth}.

In the 1HL case, \Cref{thm:dropout-general} implies that MCDO's predictive variance will be convex along any line, including the line plotted. In contrast, \cref{thm:mfgaussian-general} only applies to certain lines in input space, and does not bound the variance in these cases. However, we suspect that \cref{thm:mfgaussian-general} is indicative of a lack of in-between uncertainty in more general cases. \Cref{fig:1HL_random_regression} shows that although that exact inference with the GP with the limiting BNN prior exhibits in-between uncertainty, this is lost by both MFVI and MCDO, even on general lines in input space. MFVI and MCDO are often \emph{more} confident in between the data clusters than at the data clusters.

We next run the same experiment but with 3HL BNNs and their limiting GP. In this case \cref{thm:universal-supplement} implies that for sufficiently wide BNNs, there exist variational parameters that can approximate \emph{any} predictive mean and standard deviation. However, in \cref{fig:3HL_random_regression} we see that compared to exact inference in the limiting GP, MFVI and MCDO both underestimate in-between uncertainty --- or sometimes show as large uncertainty \emph{at} the data as \emph{in between} the data. 

\begin{figure}[h] 
\begin{tabular}{ccc}
GP & MFVI  & MCDO \\
    \includegraphics[trim =.15cm 0cm 0cm 0cm, clip, width=.30\textwidth]{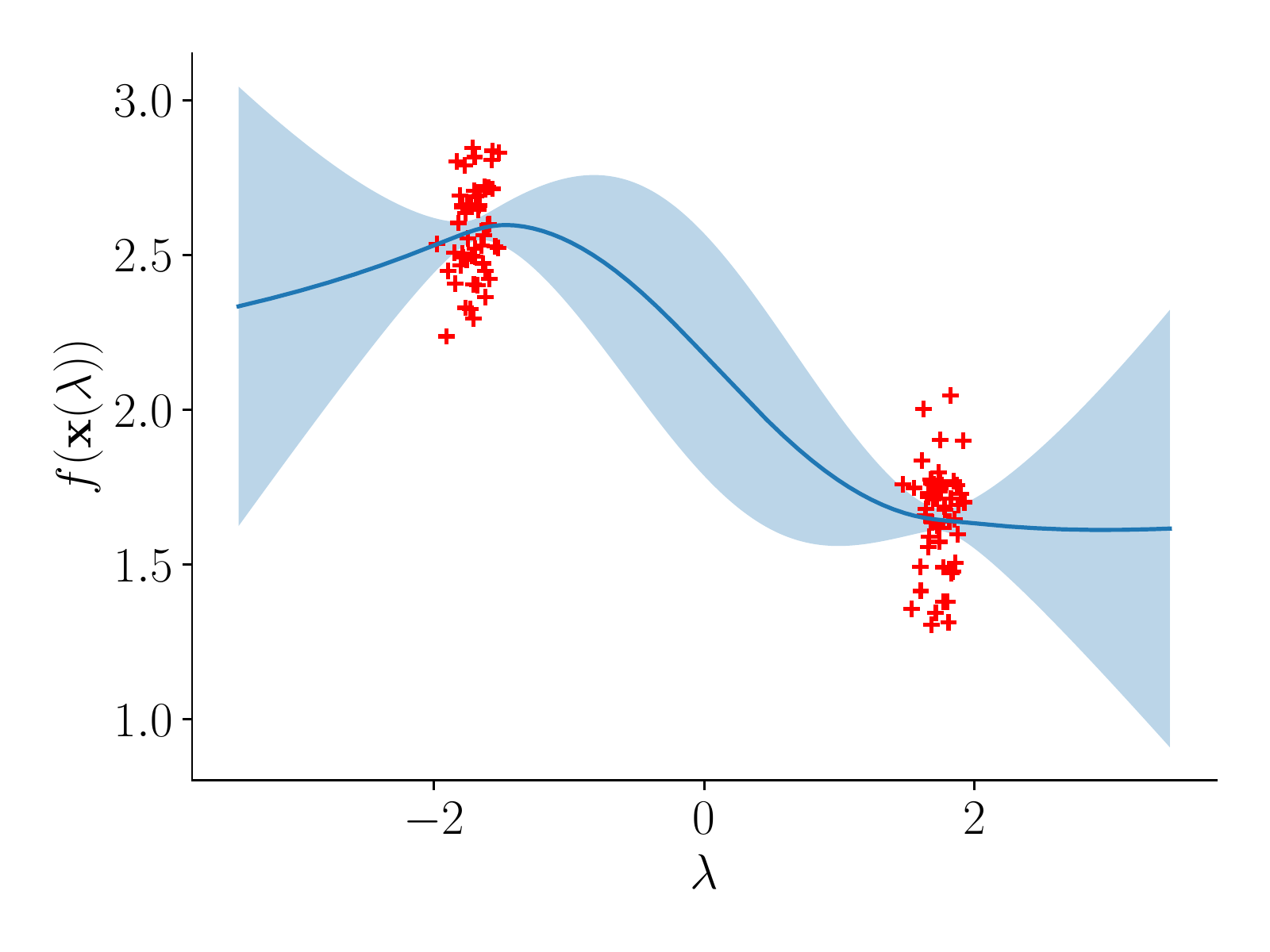} &\includegraphics[trim=.15cm 0cm 0cm 0cm, clip, width=.30\textwidth]{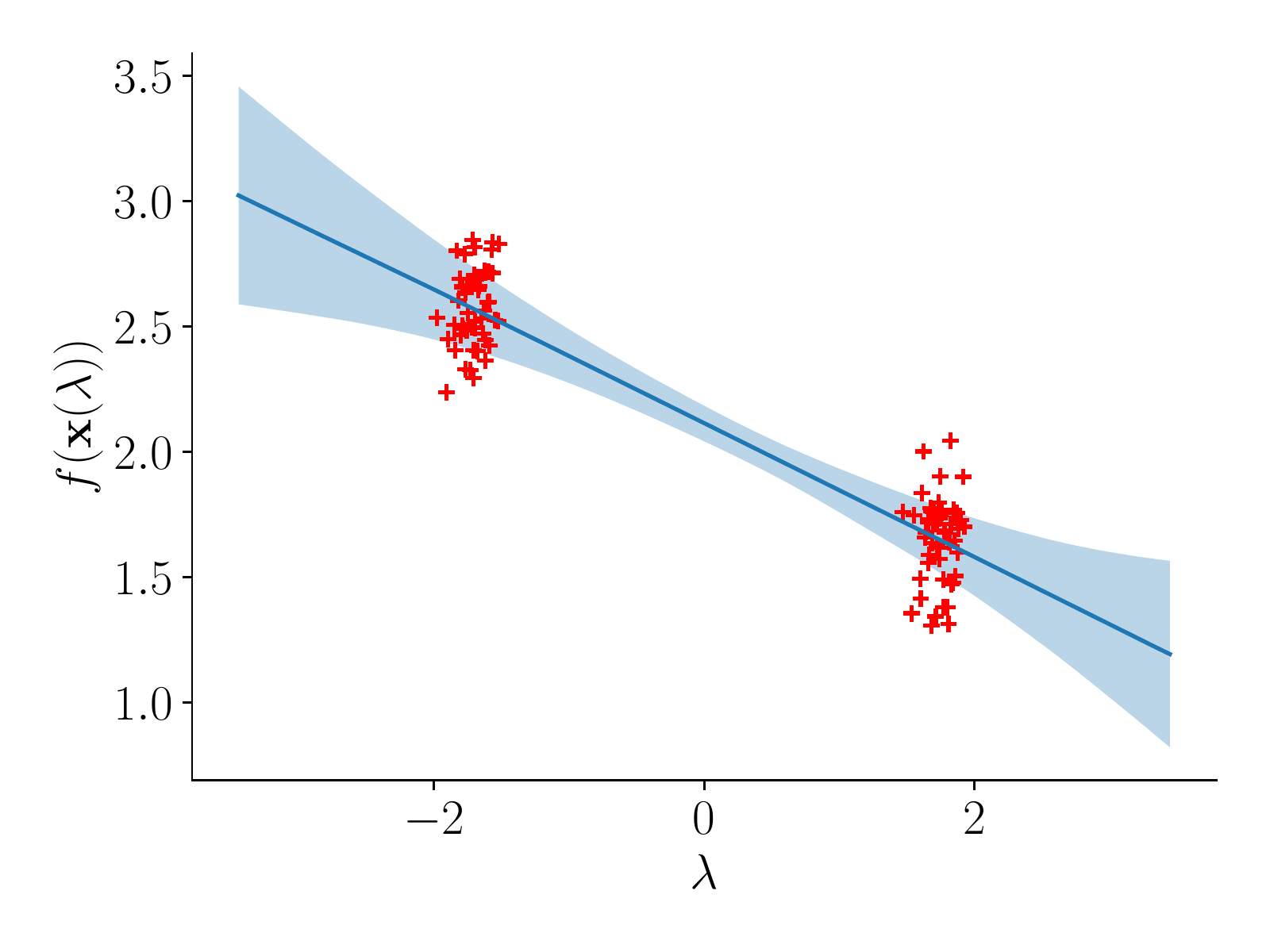}  &\includegraphics[trim=.15cm 0cm 0cm 0cm, clip, width=.30\textwidth]{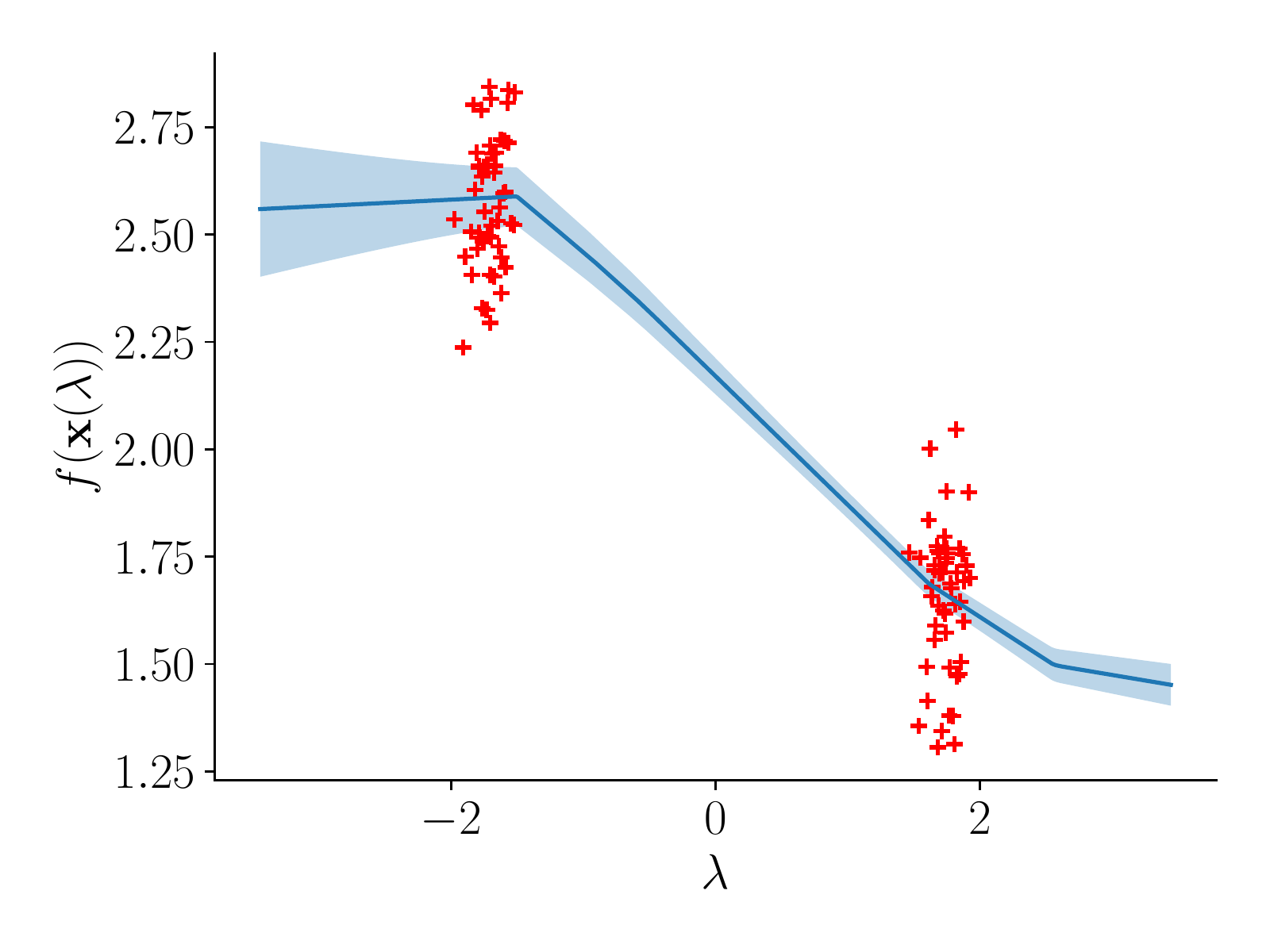}\\
    \includegraphics[trim=.15cm 0cm 0cm 0cm, clip, width=.30\textwidth]{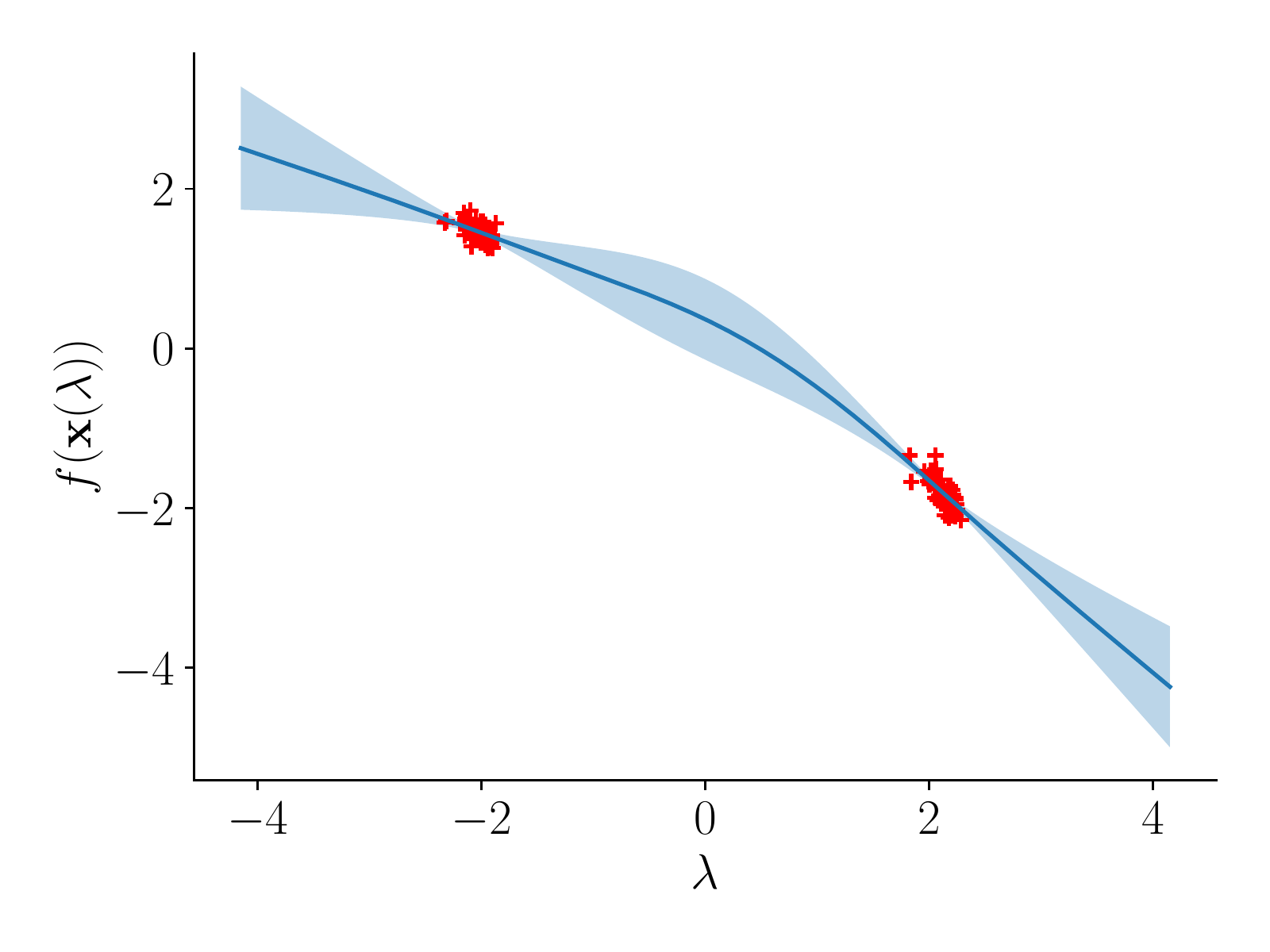} &\includegraphics[trim=.15cm 0cm 0cm 0cm, clip, width=.30\textwidth]{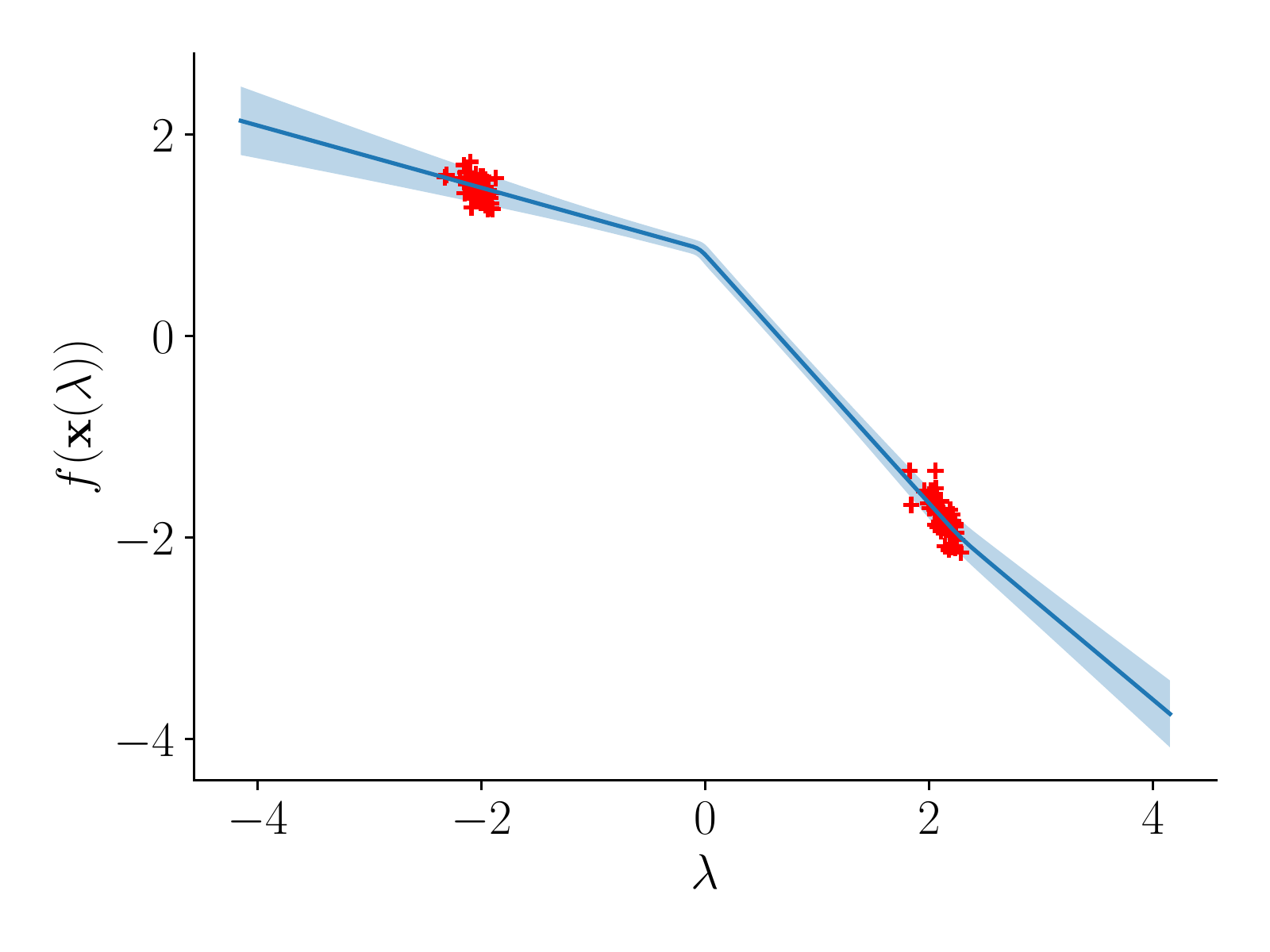} & \includegraphics[trim=.15cm 0cm 0cm 0cm, clip, width=.30\textwidth]{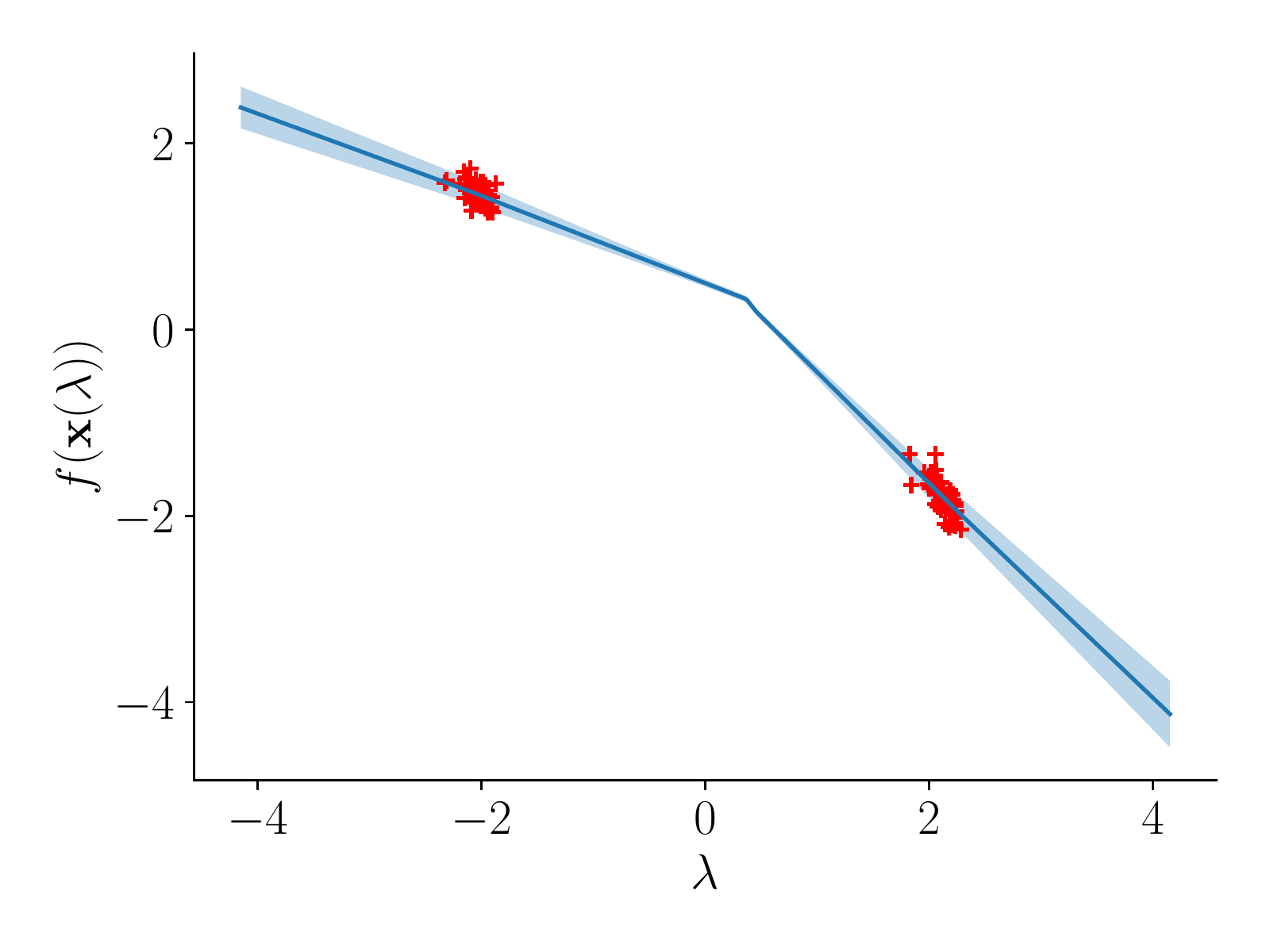}\\
   \includegraphics[trim=.15cm 0cm 0cm 0cm, clip, width=.30\textwidth]{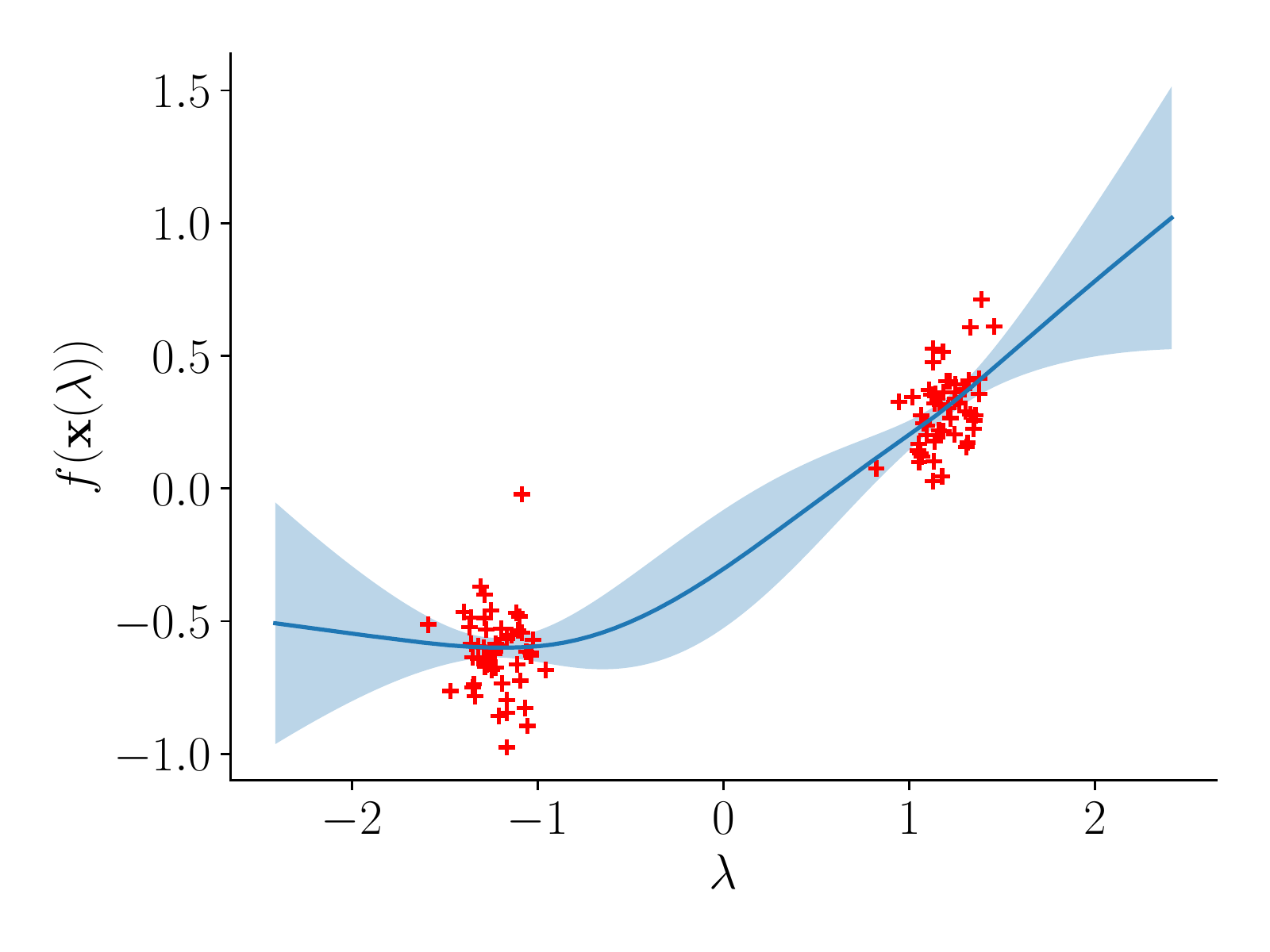}  & \includegraphics[trim=.15cm 0cm 0cm 0cm, clip, width=.30\textwidth]{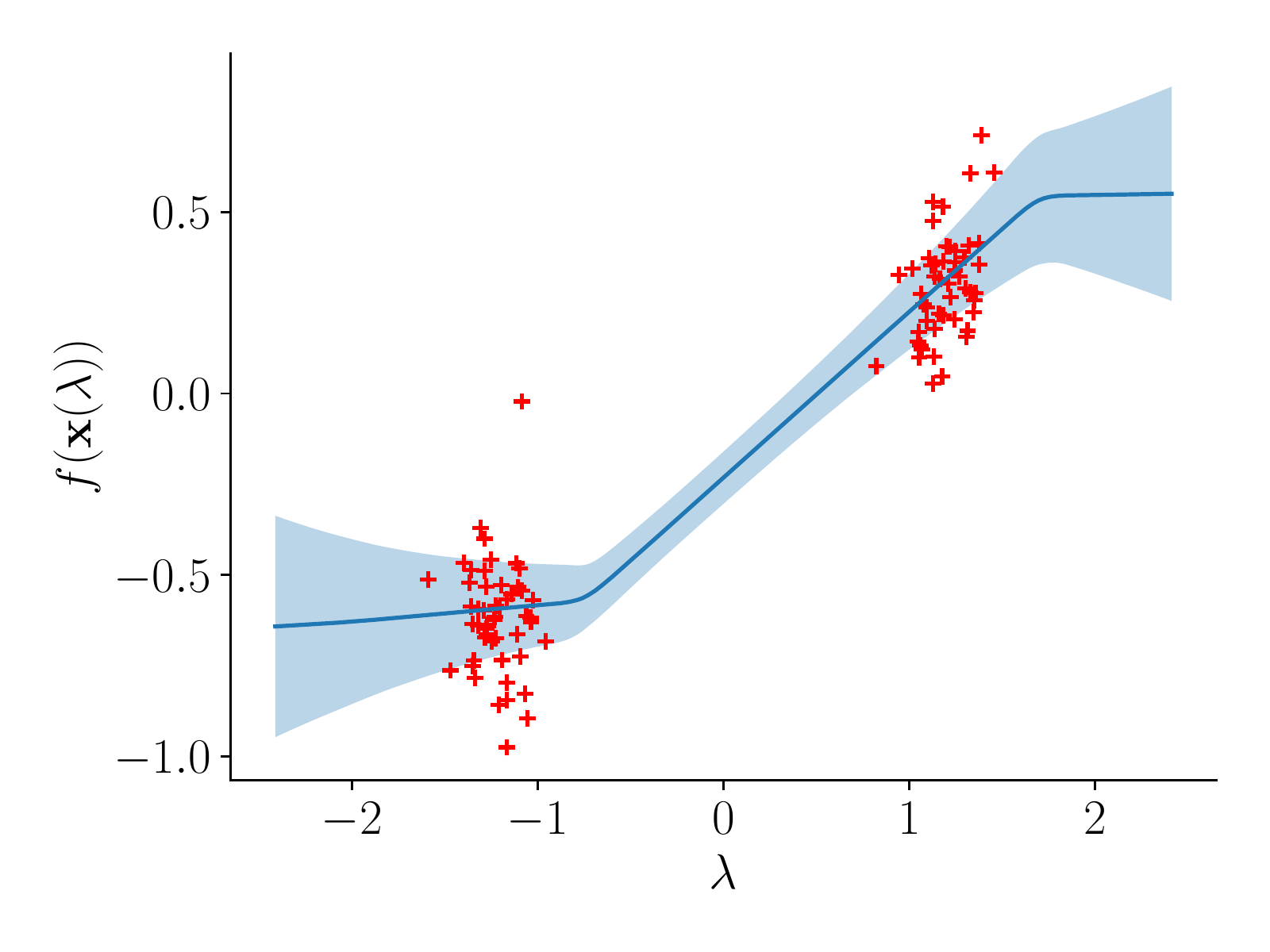}&\includegraphics[trim=.15cm 0cm 0cm 0cm, clip, width=.30\textwidth]{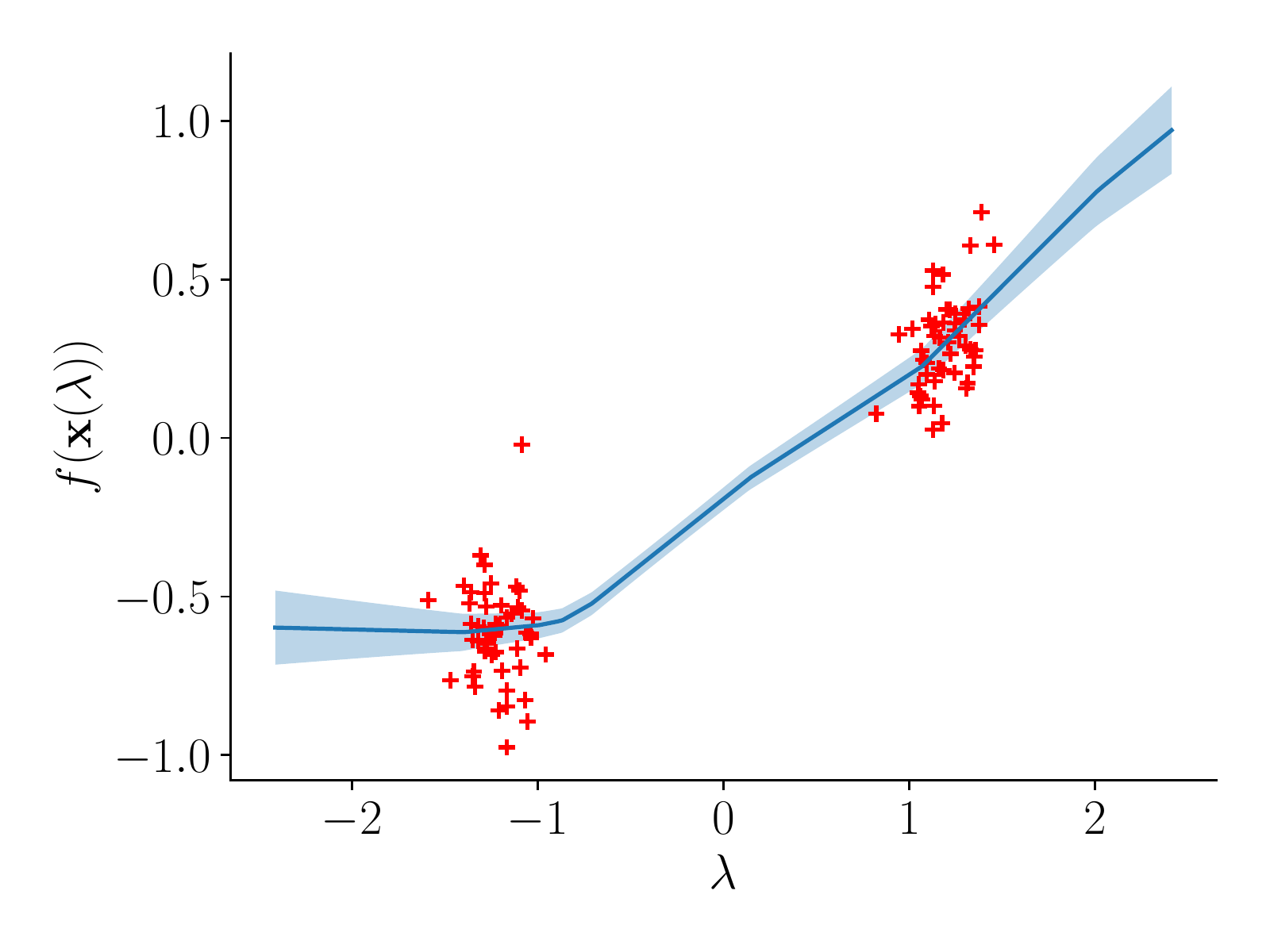}
\end{tabular}
\caption{Mean and 2 standard deviation bars of the predictive distribution on lines joining random clusters of data, for 1HL BNNs. Each row represents the same random dataset. We also plot the projection of the 5-dimensional data onto this line segment as its $\lambda$-value, along with the output value of the data. Note that the data appears noisy, but this is due to the projection onto a lower-dimensional space.}\label{fig:1HL_random_regression}
\end{figure}

\begin{figure}[h]
\begin{tabular}{ccc}
GP & MFVI  & MCDO \\
    \includegraphics[trim =.15cm 0cm 0cm 0cm, clip, width=.30\textwidth]{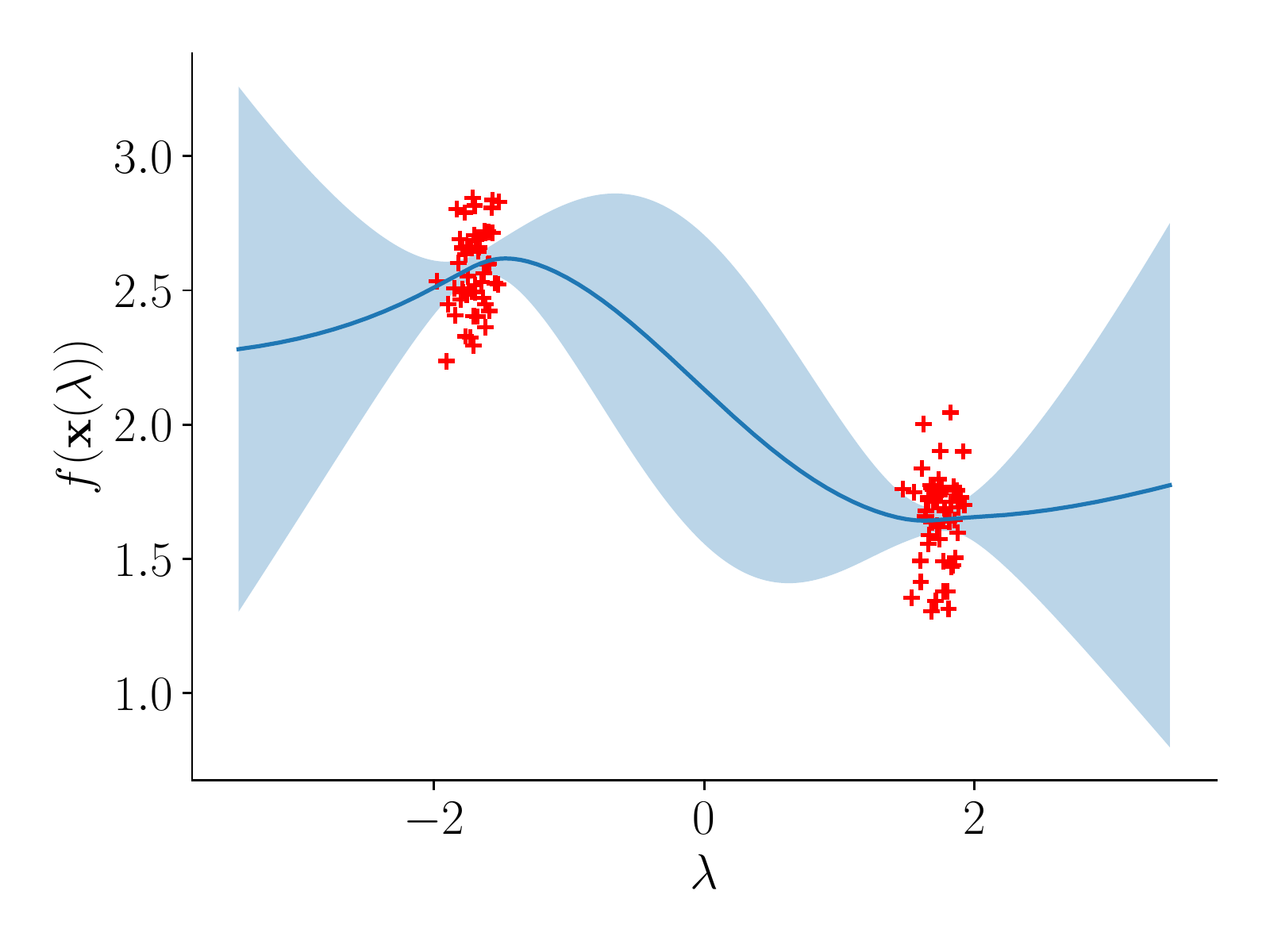} &\includegraphics[trim=.15cm 0cm 0cm 0cm, clip, width=.30\textwidth]{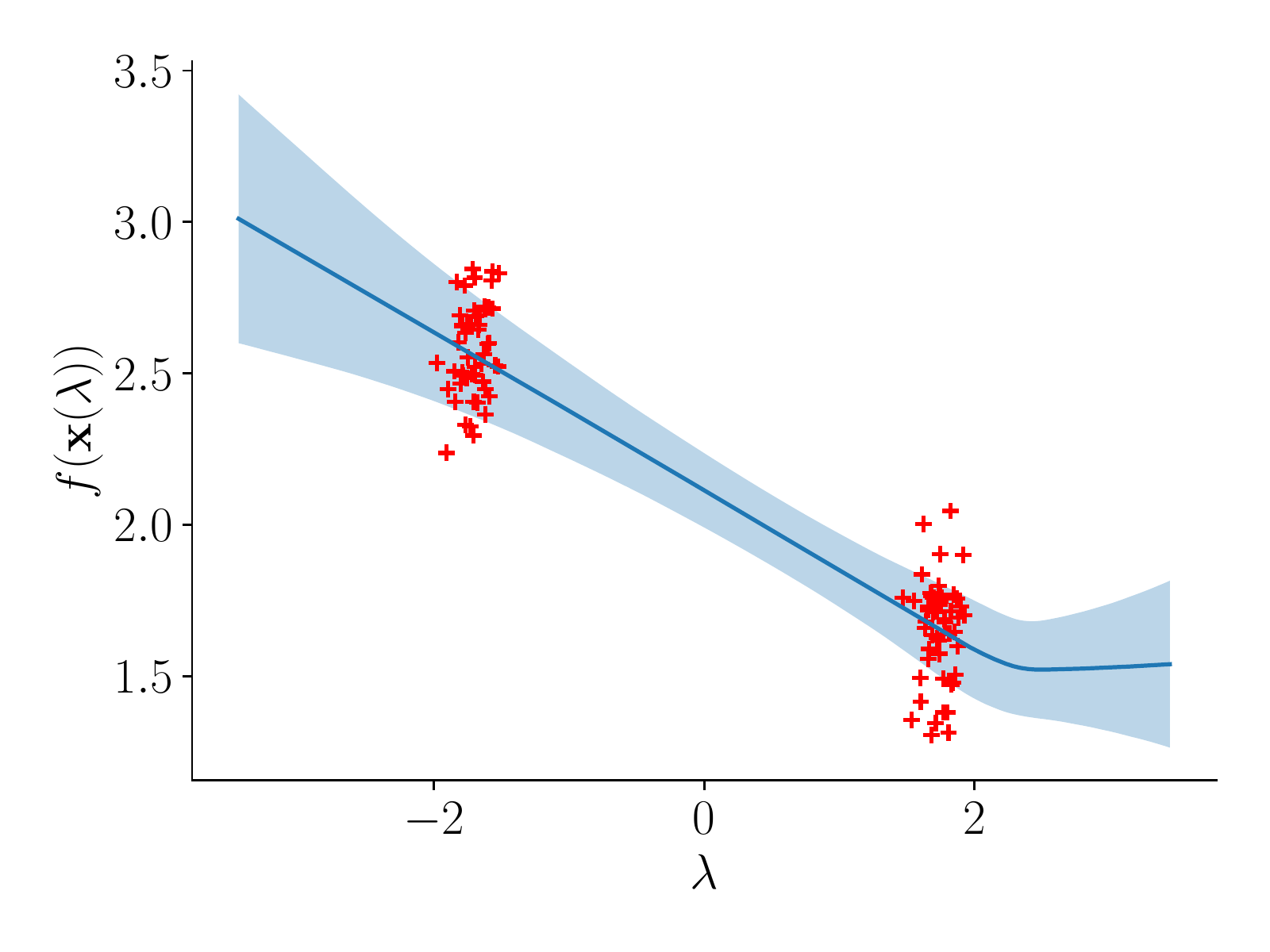}  &\includegraphics[trim=.15cm 0cm 0cm 0cm, clip, width=.30\textwidth]{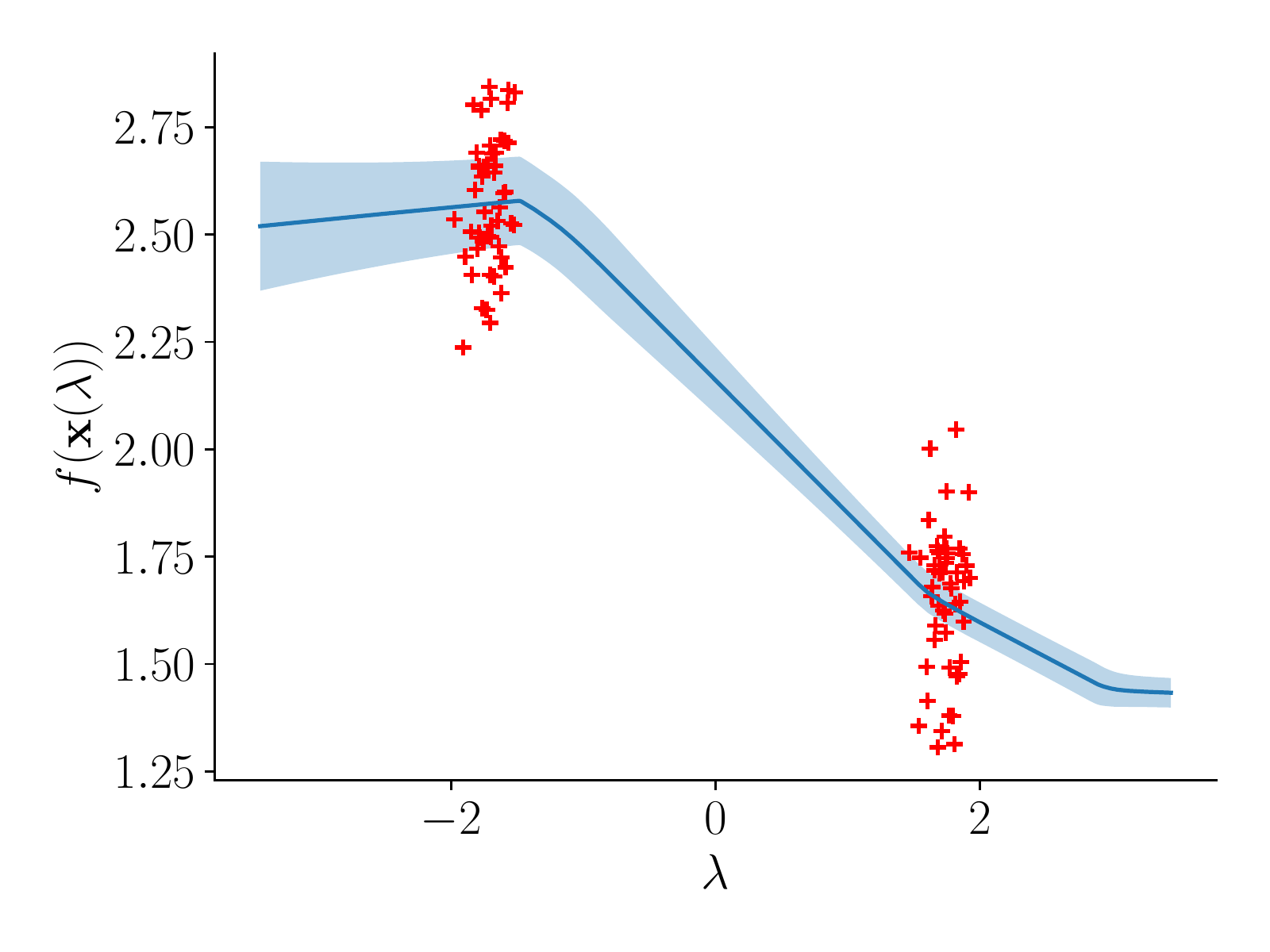}\\
    \includegraphics[trim=.15cm 0cm 0cm 0cm, clip, width=.30\textwidth]{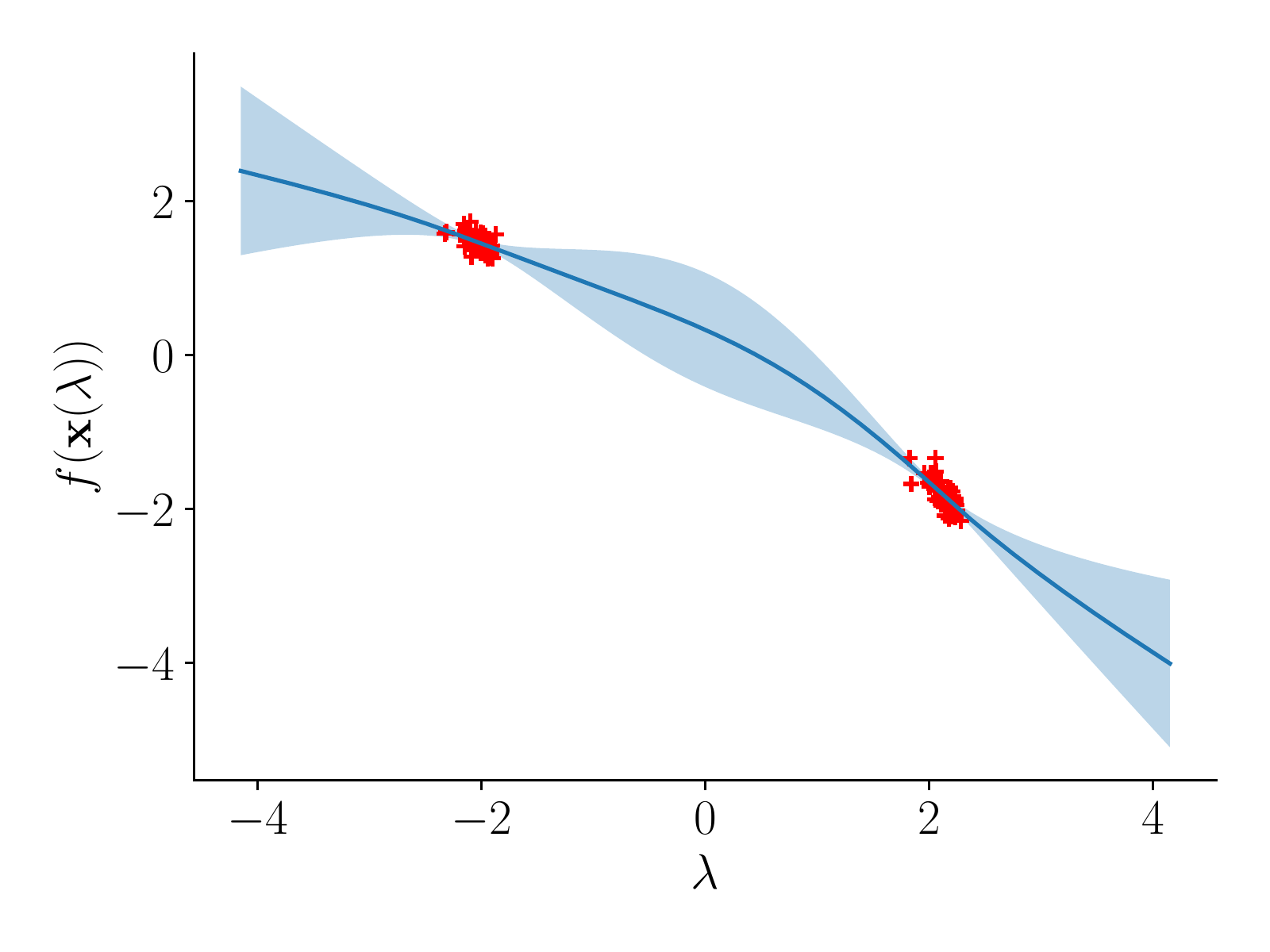} &\includegraphics[trim=.15cm 0cm 0cm 0cm, clip, width=.30\textwidth]{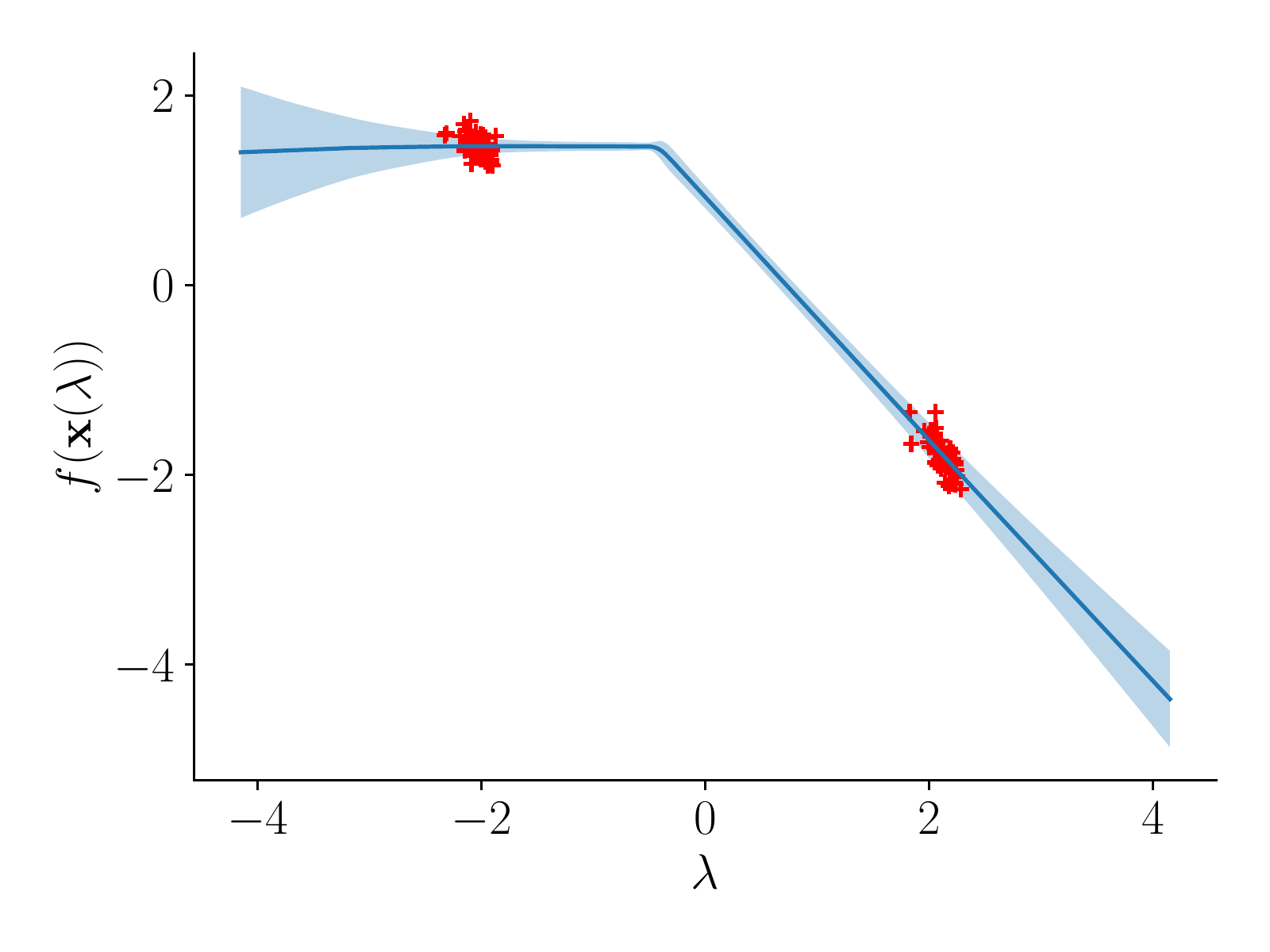} & \includegraphics[trim=.15cm 0cm 0cm 0cm, clip, width=.30\textwidth]{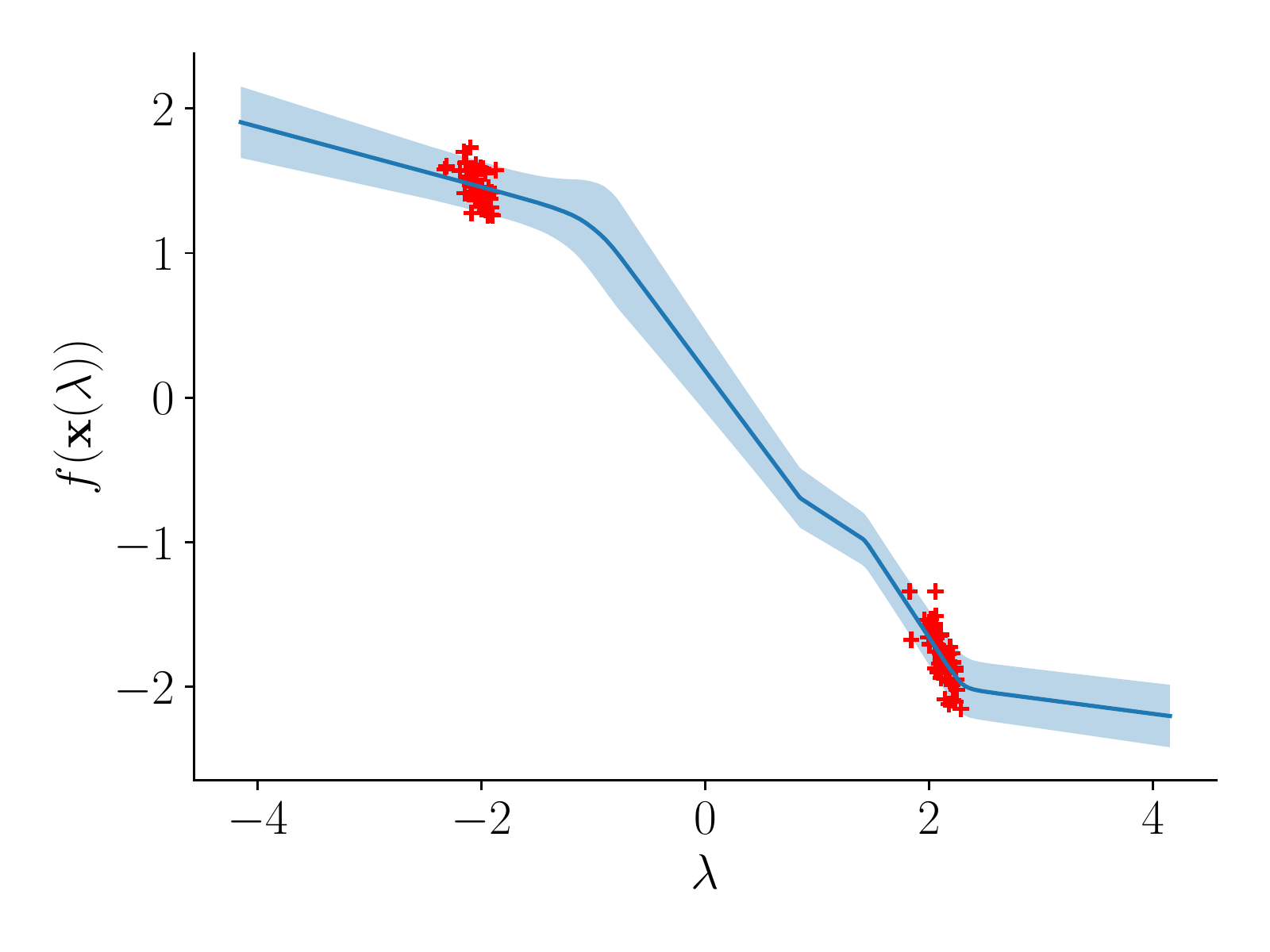}\\
   \includegraphics[trim=.15cm 0cm 0cm 0cm, clip, width=.30\textwidth]{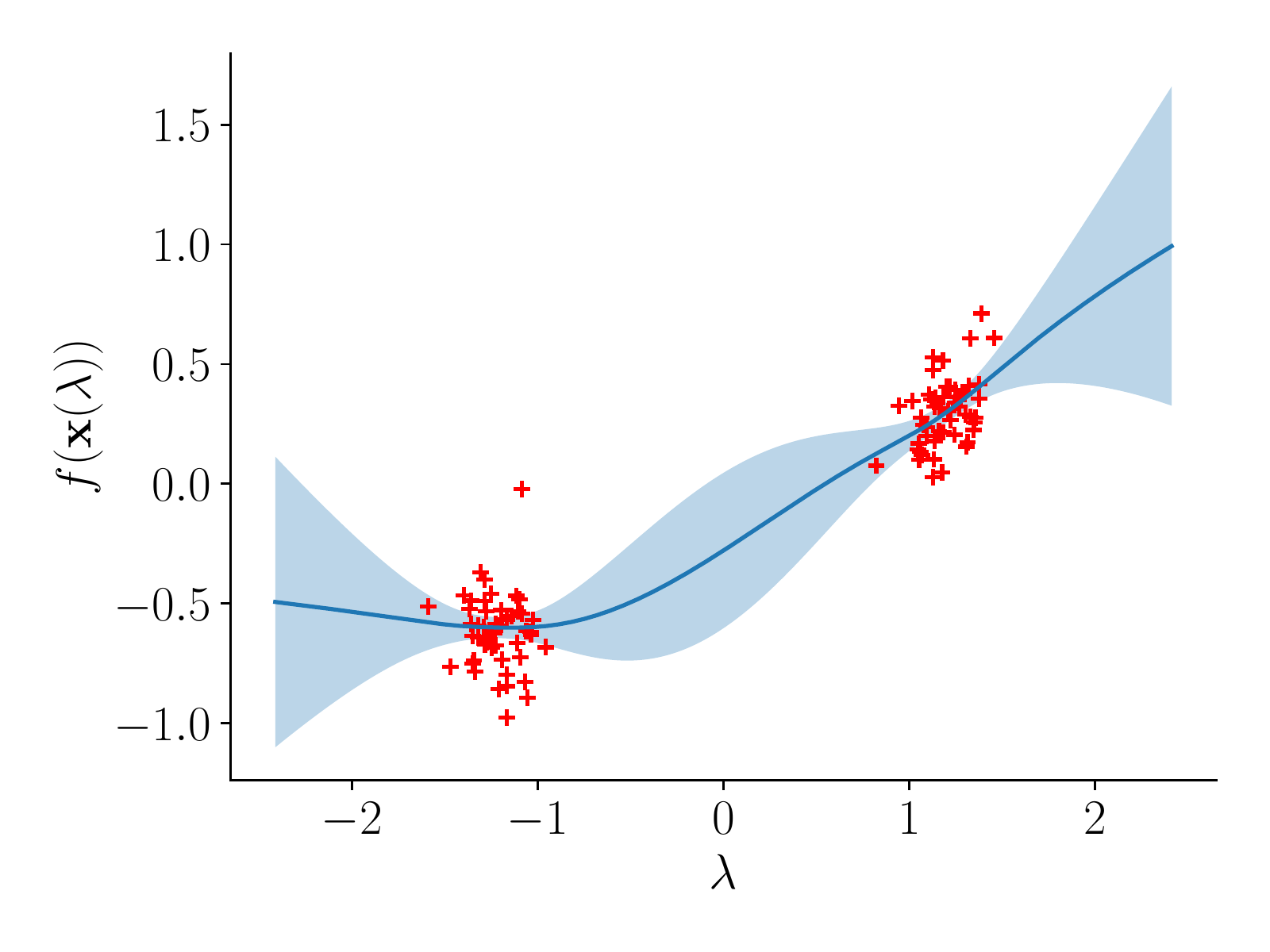}  & \includegraphics[trim=.15cm 0cm 0cm 0cm, clip, width=.30\textwidth]{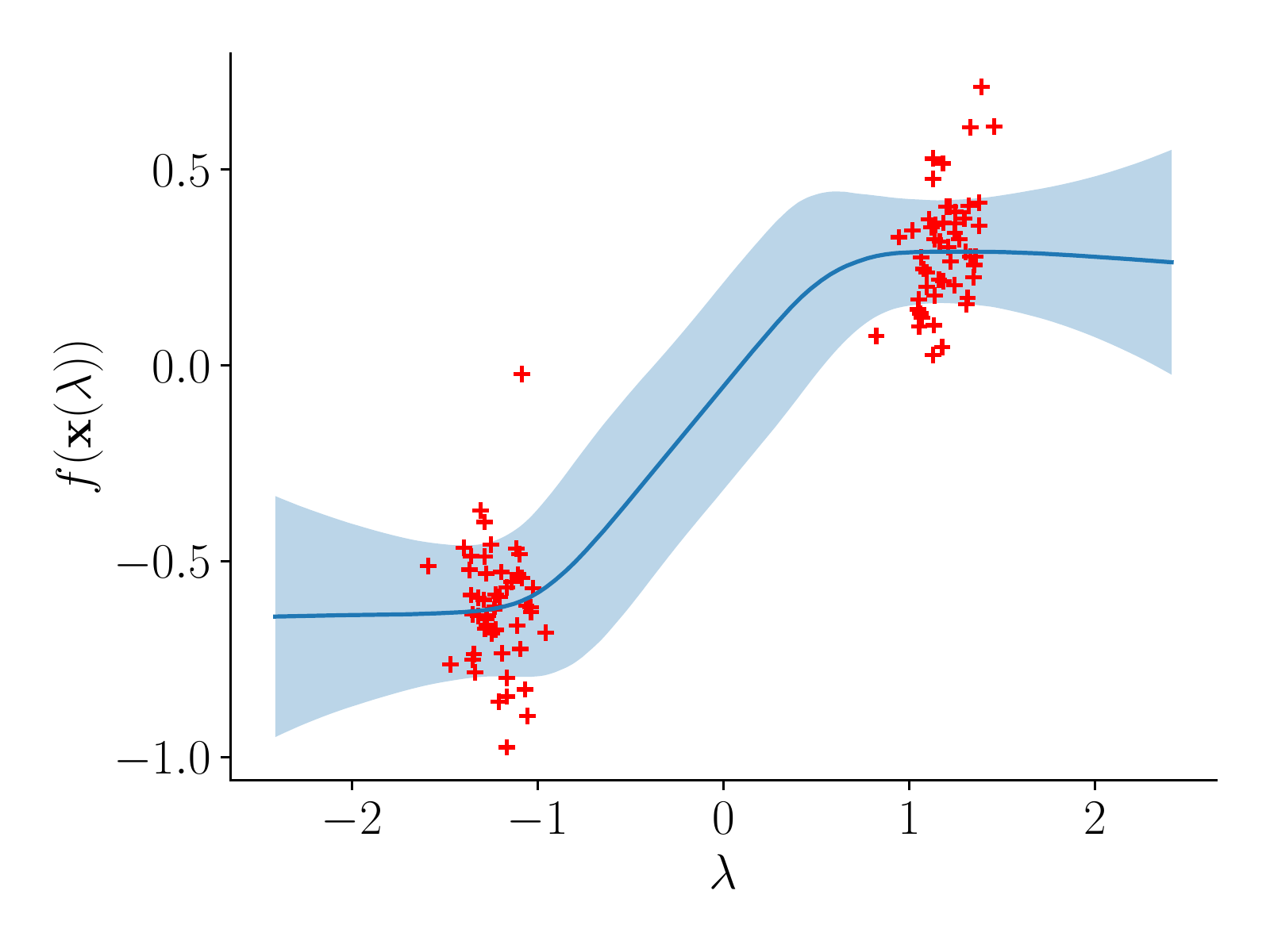}&\includegraphics[trim=.15cm 0cm 0cm 0cm, clip, width=.30\textwidth]{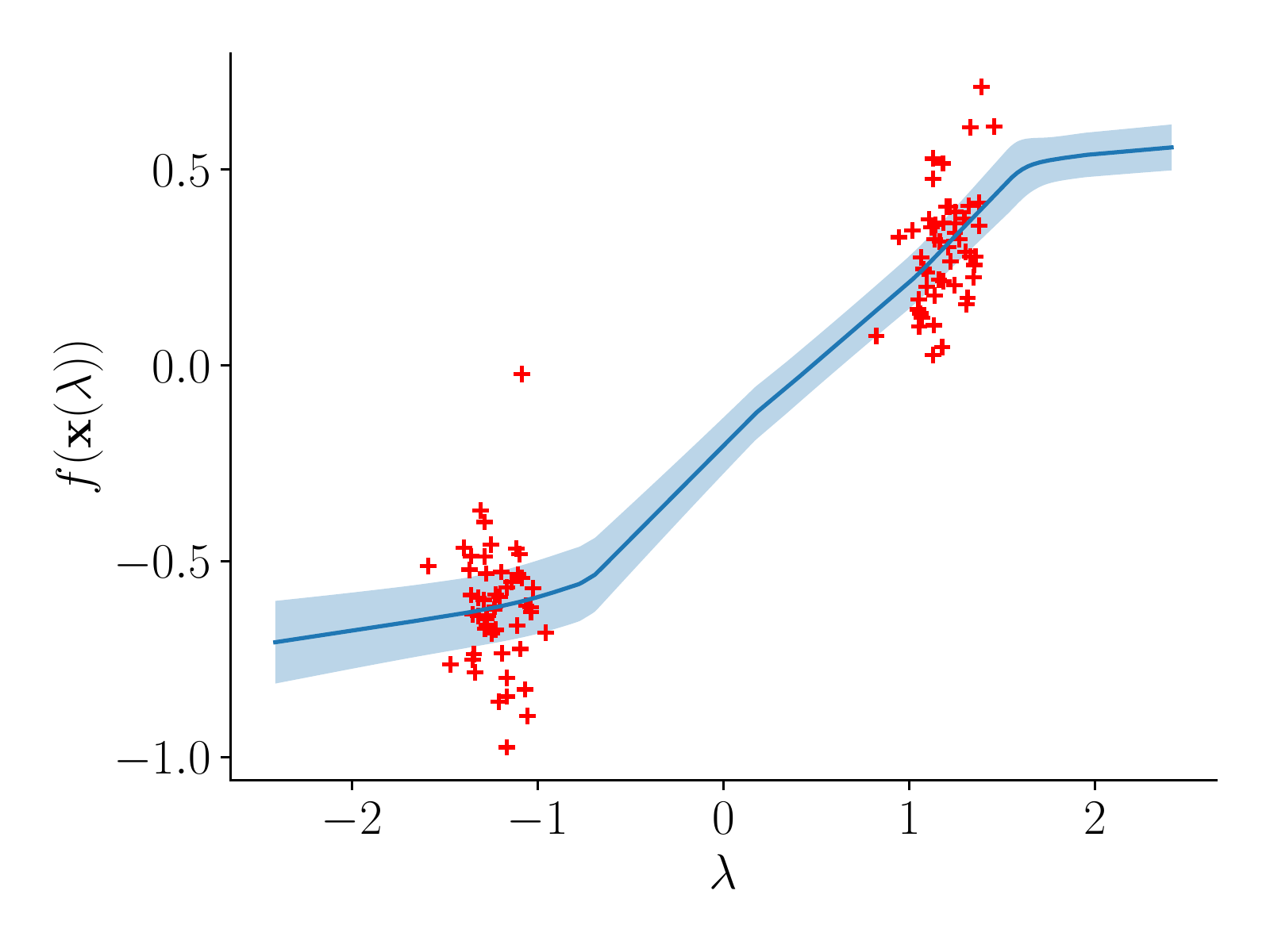}
\end{tabular}
\caption{Same experimental set-up as in \cref{fig:1HL_random_regression} for the 3HL case.}\label{fig:3HL_random_regression}
\end{figure}

\section{General Statements and Proofs of Theorems 1 and 2}\label{app:SHL-proofs}
In \cref{sec:SHL-theory} we stated simplified versions of bounds concerning the variance of single-hidden layer networks with certain approximating families. The two main results we prove in this section are the following generalisations of \cref{thm:mfgaussian,thm:dropout} respectively:

\begin{thm}\label{thm:mfgaussian-general}
    Consider a single-hidden layer ReLU neural network mapping from $\R^D \to \R^K$ with $I \in \N$ hidden units. The corresponding mapping is given by $f^{(k)}(\bfx) =  \sum_{i=1}^I w_{k,i}\psi \left( \sum_{d=1}^D u_{i,d}x_d+v_i \right)+b_k$ for $1\leq k \leq K$, where $\psi(a) = \max(0, a)$. Suppose we have a distribution over network parameters with density of the form:
\begin{align}
    q(\bfW,\bfb,\bfU,\bfv) \! = \! \prod_{i=1}^I q_{i}(\bfw_{i}|\bfU, \bfv)q(\bfb|\bfU, \bfv)\prod_{i=1}^I\prod_{d=1}^D\mcN(u_{i,d};\mu_{u_{i,d}},\sigma^2_{u_{i,d}})\prod_{i=1}^I\mcN(v_i;\mu_{v_{i}},\sigma^2_{v_{i}}) , \label{eqn:approx_family}
\end{align}
where $\bfw_i = \{ w_{k,i}\}_{k=1}^K$ are the weights out of neuron $i$ and $\bfb = \{b_k\}_{k=1}^K$ are the output biases, and $q_i(\bfw_i|\bfU, \bfv)$ and $q(\bfb|\bfU,\bfv)$ are arbitrary probability densities with finite first two moments. Consider a line in $\R^D$ parameterised by $\bfx(\lambda)_d=\gamma_d \lambda + c_d$
for $\lambda \in \R$ such that $\gamma_dc_d = 0$ for $1 \leq d \leq D.$ Then for any $\lambda_1 \leq 0 \leq \lambda_2,$ and any $\lambda_*$ such that $|\lambda_*|\leq \min(|\lambda_1|,|\lambda_2|),$
\begin{align}
\Var\lbrack f^{(k)}(\bfx(\lambda_*))\rbrack \leq \Var\lbrack f^{(k)}(\bfx(\lambda_1))\rbrack +\Var\lbrack f^{(k)}(\bfx(\lambda_2))\rbrack \text{\quad \emph{for} \, \,} 1\leq k \leq K. \label{eqn:variance_bound}
\end{align}
\end{thm}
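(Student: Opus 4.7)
The plan is to parametrize the line by $\lambda$ and observe that, under the hypothesis $\gamma_d c_d = 0$ for each $d$, the pre-activation of neuron $i$ takes the form $a_i(\bfx(\lambda);\theta_i) = \alpha_i\lambda + \beta_i$, with $\alpha_i := \sum_d \gamma_d u_{i,d}$ and $\beta_i := \sum_d c_d u_{i,d} + v_i$. The hypothesis makes the index sets $\{d : \gamma_d \neq 0\}$ and $\{d : c_d \neq 0\}$ disjoint, so together with the factorized prior in \eqref{eqn:approx_family} the $\alpha_i$ and $\beta_i$ are independent Gaussians, and the pairs $(\alpha_i,\beta_i)$ are mutually independent across neurons.

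Next, I would apply the law of total variance conditioning on $\thetainput := (\bfU,\bfv)$, writing $\Var[f^{(k)}(\bfx(\lambda))] = T_1(\lambda) + T_2(\lambda)$. The factorization in \eqref{eqn:approx_family} makes $\{w_{k,i}\}_i$ and $b_k$ mutually independent given $\thetainput$, so $\Var[f^{(k)}|\thetainput] = \sum_i \Var[w_{k,i}|\thetainput]\psi(\alpha_i\lambda+\beta_i)^2 + \Var[b_k|\thetainput]$. Each $\psi(\alpha_i\lambda+\beta_i)^2$ is a rectified quadratic and hence convex in $\lambda$, so with nonnegative coefficients $T_1(\lambda)$ is convex and nonnegative in $\lambda$ (expectation preserving convexity). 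The conditions $\lambda_1 \leq 0 \leq \lambda_2$ and $|\lambda_*| \leq \min(|\lambda_1|,|\lambda_2|)$ place $\lambda_* \in [\lambda_1,\lambda_2]$, and convexity plus nonnegativity immediately yields $T_1(\lambda_*) \leq T_1(\lambda_1) + T_1(\lambda_2)$.

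The main obstacle is bounding $T_2(\lambda) = \Var\bigl[\sum_i \bar{w}_i(\thetainput)\phi_i(\lambda) + \bar{b}(\thetainput)\bigr]$, where $\phi_i(\lambda) := \psi(\alpha_i\lambda+\beta_i)$ and $\bar{w}_i(\thetainput) := \E[w_{k,i}|\thetainput]$, because $\Var[\phi_i(\lambda)]$ is not in general convex in $\lambda$. I would expand the variance by bilinearity and use the tower property together with the mutual independence of the $\theta_i$'s to reduce each covariance term $\mathrm{Cov}[\bar{w}_i\phi_i,\bar{w}_j\phi_j]$ to integrals against single- and pair-wise conditional moments of the output weights; in the $\Qffg$ sub-case these cross terms vanish, leaving $T_2(\lambda) = \sum_i \mu_{w,k,i}^2 \Var[\phi_i(\lambda)] + \Var[b_k]$ and reducing the problem to a single-neuron lemma: for independent Gaussians $\alpha,\beta$, the function $V(\lambda) := \Var[\psi(\alpha\lambda+\beta)]$ satisfies the subadditivity bound on the prescribed region. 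I would attack this lemma using the scale representation $V(\lambda) = s(\lambda)^2 V_0(m(\lambda)/s(\lambda))$, with $m(\lambda) = \mu_\alpha\lambda+\mu_\beta$, $s(\lambda)^2 = \sigma^2_\alpha\lambda^2+\sigma^2_\beta$ convex in $\lambda$, and $V_0(t) = \Var[\psi(Z+t)]$ for $Z\sim\mcN(0,1)$ --- which is monotone nondecreasing and bounded above by $1$ by the Lipschitz property of $\psi$ --- establishing subadditivity by direct analysis on the bounded symmetric region around $\lambda = 0$. The fully general family is then handled by bounding the remaining cross terms via Cauchy--Schwarz in terms of these single-neuron quantities.
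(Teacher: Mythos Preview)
Your overall architecture---law of total variance conditioning on $(\bfU,\bfv)$, convexity of $T_1$ via rectified quadratics, and reduction of $T_2$ to a single-neuron variance lemma---matches the paper exactly, including the use of the condition $\gamma_dc_d=0$ to make $\alpha_i$ and $\beta_i$ independent Gaussians (this is the paper's Corollary~\ref{cor:multivariatecase}).

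The substantive gap is in the single-neuron lemma. Your proposed ingredients---$s(\lambda)^2$ convex, $V_0(t)=\Var[\psi(Z+t)]$ monotone and bounded by $1$---are correct but \emph{not sufficient} to conclude the inequality $V(\lambda_*)\le V(\lambda_1)+V(\lambda_2)$. The difficulty is that $r(\lambda)=m(\lambda)/s(\lambda)$ is not monotone in $\lambda$: one has $r'(\lambda)=(\sigma_\beta^2\mu_\alpha-\sigma_\alpha^2\mu_\beta\lambda)/s(\lambda)^3$, which changes sign when $\mu_\beta>0$. Thus $V_0(r(\lambda))$ can decrease even while $s(\lambda)^2$ increases, and ``direct analysis'' of the product does not obviously yield the bound. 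The paper proves the sharper statement that $V(\lambda)$ itself lies in one of two structured classes $\mcT_1,\mcT_2$ (monotone on one half-line \emph{and} satisfying a reflection inequality $V(\lambda)\ge V(-\lambda)$ on that half-line), and this requires a separate argument for the regime where $r'(\lambda)<0$: the paper switches to the representation $V(\lambda)=m(\lambda)^2\beta(r(\lambda))$ with $\beta(r)=V_0(r)/r^2$ and proves $\beta'(r)<0$ via explicit bounds on Gaussian integrals (their Lemma~\ref{lem:sgn-beta-derivative}). The $\mcT_1/\mcT_2$ structure is then what makes the combination lemma (their Lemma~\ref{lem:positivecombinations}) go through for arbitrary nonnegative linear combinations with a convex summand. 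Your sketch does not contain this idea.

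Separately, the Cauchy--Schwarz step for the ``fully general family'' does not work as stated: Cauchy--Schwarz bounds $|\mathrm{Cov}[\bar w_i\phi_i,\bar w_j\phi_j]|$ from above, but to establish $T_2(\lambda_*)\le T_2(\lambda_1)+T_2(\lambda_2)$ you would need to control the \emph{sign} of the cross terms at each of $\lambda_*,\lambda_1,\lambda_2$, and there is no reason these signs align. The paper avoids this by taking $\mu_{w_{k,i}}=\E[w_{k,i}]$ as constants, so the cross terms vanish exactly; your proposal to treat $\bar w_i(\thetainput)$ as random and bound covariances is more general but does not lead to the required inequality.
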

We now briefly show how the statement of \cref{thm:mfgaussian-general} in the main text can be deduced from this more general version. The fully factorised Gaussian family $\Qffg$ is of the form in \cref{eqn:approx_family}. It remains to show that both conditions $i.$ and $ii.$ imply that $\gamma_dc_d=0$. Consider any line intersecting the origin (i.e. satisfying condition i)). Such a line can be written in the form $\bfx(\lambda)_d=\gamma_d \lambda$ by choosing the origin to correspond to $\lambda = 0$. As $c_d=0$ for all $d$, $\gamma_dc_d=0$ for all $d$. In \cref{thm:mfgaussian} $\bfp= \bfx(\lambda_1)$ and $\bfq= \bfx(\lambda_2)$ are on opposite sides of the origin, hence the signs of $\lambda_1$ and $\lambda_2$ are opposite. Finally, the condition that $\bfr=\bfx(\lambda_*)$ is closer to the origin than both $\bfp$ and $\bfq$ is exactly that $|\lambda_*|\leq \min(|\lambda_1|,|\lambda_2|)$. 

In order to verify condition ii), note that any line orthogonal to a hyperplane $x_{d'} = 0$ can be parameterised as $\bfx(\lambda)_{d} = \gamma_{d}\lambda + c_{d}$, where $\gamma_{d} = 0$ for $d \neq d'$ and $c_{d'}=0$. Hence $\gamma_{d}c_{d} = 0$ for all $d$. The condition that the line segment $\overrightarrow{\bfp\bfq}$ intersects the plane, with $\bfp = \bfx(\lambda_1)$ and $\bfq = \bfx(\lambda_2)$ is exactly that the signs of $\lambda_1$ and $\lambda_2$ are opposite, and that $|\lambda_*|\leq \min(|\lambda_1|,|\lambda_2|)$.

As a corollary of \cref{thm:mfgaussian-general}, we can obtain bounds on higher-dimensional objects than lines, such as on hypercubes. For instance, consider the case where $\bfx \in \R^2$. Let $\bfp, \bfq, \bfr, \bfs$ be the four corners of a rectangle centered the origin. For any point $\bfa$ in the rectangle, we can upper bound $\Var[f(\bfa)]$ by the sum of the variances at the top and bottom edges of the rectangle. These in turn can be upper bounded by the variances at the corners of the rectangle. Hence we have that for any point $\bfa$ in the rectangle, $\Var[f(\bfa)] \leq \Var[f(\bfp)] + \Var[f(\bfq)] + \Var[f(\bfr)] + \Var[f(\bfs)]$. Similarly the variance at any point in a hypercube centered at the origin can be bounded by the sum of the variances on its vertices, and we can obtain tighter bounds on diagonals and faces of the hypercube, by repeatedly applying \cref{thm:mfgaussian-general}.

\begin{thm}[MC dropout]\label{thm:dropout-general}
    Consider a single-hidden layer ReLU neural network mapping from $\R^D \to \R^K$ with $I \in \N$ hidden units. The corresponding mapping is given by $f^{(k)}(\bfx) =  \sum_{i=1}^I w_{k,i}\psi \left( \sum_{d=1}^D u_{i,d}x_d+v_i \right)+b_k$ for $1\leq k \leq K$, where $\psi(a) = \max(0, a)$.
    Assume $\bfU,\bfv$ are set deterministically and 
\begin{align*}
    q(\mathbf{W}, \mathbf{b}) = q(\mathbf{b})\prod_{i=1}^Iq_{i}(\bfw_{i}),
\end{align*}
where $\bfw_i = \{w_{k,i}\}_{k=1}^K$ are the weights out of neuron $i$, $\bfb = \{b_k\}_{k=1}^K$ are the output biases and $q(\bfb)$ and $q_i(\bfw_i)$ are arbitrary probability densities with finite first two moments. Then, $\Var[f^{(k)}(\bfx)]$ is convex in $\bfx$ for $1 \leq k \leq K$.
\end{thm}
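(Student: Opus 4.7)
\textbf{Proof plan for Theorem \ref{thm:dropout-general}.} The plan is to expand the variance directly using the factorisation hypothesis, then identify the resulting expression as a non-negative combination of convex functions of $\bfx$. Because $\bfU$ and $\bfv$ are set deterministically, for each fixed $\bfx$ the pre-activation $a_i(\bfx) = \sum_{d=1}^D u_{i,d} x_d + v_i$ is a non-random affine function of $\bfx$, and therefore so is $\psi(a_i(\bfx))$ viewed as a quantity depending on the random parameters. This means that conditional on $\bfx$ the only randomness in $f^{(k)}(\bfx) = \sum_{i=1}^I w_{k,i}\psi(a_i(\bfx)) + b_k$ comes from $\bfw = \{\bfw_i\}_{i=1}^I$ and $\bfb$.

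The first step is to exploit the assumed factorisation $q(\bfW,\bfb) = q(\bfb)\prod_i q_i(\bfw_i)$, which makes $\bfw_1,\dots,\bfw_I,\bfb$ mutually independent. Since each $\psi(a_i(\bfx))$ is a deterministic scalar at fixed $\bfx$, I can pull these scalars outside the variance and use independence to turn the variance of the sum into a sum of variances:
\begin{align*}
\Var[f^{(k)}(\bfx)] = \sum_{i=1}^I \Var[w_{k,i}]\,\psi(a_i(\bfx))^2 + \Var[b_k].
\end{align*}
This is the analogue for $\Qmcdo$ of \cref{eqn:sum_of_variances} in the main text, and requires only that the first two moments of $q_i$ and $q(\bfb)$ are finite, which is assumed.

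The second step is to establish convexity of each term $\psi(a_i(\bfx))^2$ in $\bfx$. The scalar function $t \mapsto \max(0,t)^2$ is a rectified quadratic: it equals $0$ for $t \le 0$ and $t^2$ for $t > 0$. It is continuously differentiable with non-decreasing derivative $2\max(0,t)$, hence convex on $\R$. Composing this convex, non-decreasing-on-its-support function with the affine map $\bfx \mapsto a_i(\bfx)$ preserves convexity (composition of a convex function with an affine map is convex). So each $\psi(a_i(\bfx))^2$ is convex in $\bfx$.

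The final step is to combine these observations: $\Var[w_{k,i}] \ge 0$ for every $i$, and $\Var[b_k]$ is a non-negative constant. Hence $\Var[f^{(k)}(\bfx)]$ is a non-negative linear combination of convex functions of $\bfx$ plus a constant, which is convex. I do not anticipate a real obstacle here; the only point that requires any care is the variance decomposition, and that follows immediately from the joint independence of $\bfw_1,\dots,\bfw_I,\bfb$ guaranteed by the factorisation of $q$ together with the fact that $\psi(a_i(\bfx))$ is non-random once $\bfx$ is fixed.
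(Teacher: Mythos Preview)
Your proposal is correct and is essentially the paper's own argument: the paper proves this theorem by invoking \cref{lem:convexconditioned}, whose proof expands the variance using the factorisation to obtain $\sum_i \Var[w_{k,i}]\psi(a_i)^2 + \Var[b_k]$ and then shows each $\psi(a_i(\bfx))^2$ is convex in $\bfx$ (via a direct convexity inequality rather than your ``convex scalar function composed with affine'' phrasing, but these are equivalent). Since $\bfU,\bfv$ are deterministic here, the conditional variance in that lemma is simply the variance, so your direct route and the paper's route coincide.
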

\begin{proof}
The theorem follows immediately from \cref{lem:convexconditioned} since $\bfU$ and $\bfv$ are deterministic.
\end{proof}

\begin{rem}
\Cref{thm:dropout-general} applies for any activation function $\psi$ such that $\psi^2$ is convex. This is the only property of $\psi$ used in \cref{lem:convexconditioned}.
\end{rem}

\subsection{Preliminary Lemmas}\label{sec:preliminary-lemmas}
In order to prove \cref{thm:mfgaussian,thm:dropout} we first collect a series of preliminary lemmas.
\begin{lem}\label{lem:convexconditioned}
Assume a distribution for $\bfW,\bfb | \bfU, \bfv$ with density of the form 
\begin{align*}
    q(\mathbf{W}, \mathbf{b}|\bfU, \bfv) = q(\mathbf{b}|\bfU, \bfv)\prod_{i}q_{i}(\bfw_{i}|\bfU, \bfv).
\end{align*}
Then, $\Var[f^{(k)}(\bfx)| \bfU, \bfv]$ is a convex function of $\bfx.$
\end{lem}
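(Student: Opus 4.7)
The plan is to exploit the factorisation of $q(\bfW, \bfb \mid \bfU, \bfv)$ to expand the conditional variance as a non-negative linear combination of squared ReLU activations, and then observe that each such squared activation is a convex function of $\bfx$.

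First I would write out the conditional variance explicitly. Since $\bfU$ and $\bfv$ are held fixed, the pre-activation $a_i(\bfx) \coloneqq \sum_{d=1}^D u_{i,d} x_d + v_i$ is a deterministic affine function of $\bfx$, and so $\psi(a_i(\bfx))$ is a deterministic function of $\bfx$. The only randomness in $f^{(k)}(\bfx) = \sum_i w_{k,i}\psi(a_i(\bfx)) + b_k$ therefore comes from the $\{w_{k,i}\}_{i=1}^I$ and $b_k$. By the assumed factorisation, conditioned on $\bfU, \bfv$, the random variables $w_{k,1}, \ldots, w_{k,I}, b_k$ are mutually independent. All covariance cross-terms in the expansion of the variance therefore vanish, yielding
\begin{equation*}
\Var[f^{(k)}(\bfx) \mid \bfU, \bfv] = \sum_{i=1}^I \Var[w_{k,i} \mid \bfU, \bfv] \, \psi(a_i(\bfx))^2 + \Var[b_k \mid \bfU, \bfv].
\end{equation*}

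Next I would verify that $\bfx \mapsto \psi(a_i(\bfx))^2$ is convex. The scalar map $t \mapsto \psi(t)^2 = \max(0,t)^2$ is convex: it is identically zero for $t \leq 0$ and equal to $t^2$ for $t \geq 0$, is $C^1$ across the join at $t=0$, and has non-negative second derivative everywhere it is twice differentiable. Composing this convex function with the affine map $\bfx \mapsto a_i(\bfx)$ preserves convexity, so each $\psi(a_i(\bfx))^2$ is convex in $\bfx$. Since $\Var[w_{k,i} \mid \bfU, \bfv] \geq 0$ and the constant $\Var[b_k \mid \bfU, \bfv]$ adds no curvature, the displayed expression is a non-negative linear combination of convex functions plus a constant, hence convex in $\bfx$.

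The whole argument is essentially routine once the independence step is done; the only thing to be careful about is that the factorisation in the lemma is a factorisation of the \emph{conditional} density given $(\bfU, \bfv)$, so I do need to stress that all expectations and variances in the expansion are conditional on $(\bfU, \bfv)$ and that independence of the $w_{k,i}$ across $i$, and of $b_k$ from the $w_{k,i}$, holds under this conditional law. No obstacle is anticipated; the step to flag is the justification of convexity of $t \mapsto \max(0,t)^2$ (either by the piecewise argument above or, equivalently, by noting $\max(0,t)^2 = \tfrac14(|t|+t)^2$ and that $|t|+t$ is non-negative and convex, so its square is convex).
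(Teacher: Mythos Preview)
Your proposal is correct and follows essentially the same approach as the paper: expand the conditional variance using the factorisation to obtain $\sum_i \Var[w_{k,i}\mid \bfU,\bfv]\,\psi(a_i(\bfx))^2 + \Var[b_k\mid \bfU,\bfv]$, then argue that $t\mapsto \psi(t)^2$ is convex and that composition with the affine map $\bfx\mapsto a_i(\bfx)$ preserves convexity. The only cosmetic difference is that the paper writes out the convexity inequality for the composition explicitly, whereas you invoke the standard ``convex composed with affine is convex'' fact directly.
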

The proof of \cref{lem:convexconditioned} is in \cref{sec:proof-convex-conditioned}. 

\begin{lem}\label{lem:halfmonotonicfunction}
Consider the variance of a single neuron in the one dimensional case, with activation $a(x)\sim\mcN(\mu(x),\sigma^2(x)),$ $\mu(x)=\mu_ux+\mu_v$ and $\sigma^2(x)=\sigma^2_ux^2+\sigma^2_v$ . Let  \[\mcT_1=\{f\geq 0: \forall 0\leq b<a, f(a)\geq f(-a) \text{ and } f(b)\leq f(a)\}\] and \[\mcT_2=\{f\geq 0: \forall a< b \leq 0, f(a)\geq f(-a) \text{ and } f(b) \leq f(a)\}.\]
If $\mu_u \geq 0$, then $\Var[\psi(a(x))] \in \mcT_1$. If $\mu_u \leq 0$, then $\Var[\psi(a(x))] \in \mcT_2$. 
\end{lem}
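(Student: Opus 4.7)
The plan is to reduce the lemma to monotonicity properties of the scalar function $V(\mu,\sigma) := \Var[\psi(Z)]$ for $Z\sim\mathcal{N}(\mu,\sigma^2)$. Writing $h(x) := \Var[\psi(a(x))] = V(\mu(x), \sigma(x))$, the key fact to establish is that $V$ is jointly non-decreasing in $\mu$ and in $\sigma^2$. Granted this, both clauses defining $\mathcal{T}_1$ (and by symmetry $\mathcal{T}_2$) drop out immediately from how $\mu(x) = \mu_u x + \mu_v$ and $\sigma^2(x) = \sigma_u^2 x^2 + \sigma_v^2$ vary with $x$.

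To analyse $V$, I would use the reparameterisation $Z = \sigma W + \mu$ with $W\sim\mathcal{N}(0,1)$ to factor $V(\mu,\sigma) = \sigma^2\tilde V(\alpha)$, where $\alpha := \mu/\sigma$ and $\tilde V(\alpha) := \Var[\psi(W+\alpha)] = \mu_2(\alpha) - \mu_1(\alpha)^2$, with the standard rectified-normal moments $\mu_1(\alpha) = \alpha\Phi(\alpha) + \phi(\alpha)$ and $\mu_2(\alpha) = (1+\alpha^2)\Phi(\alpha) + \alpha\phi(\alpha)$. Using $\Phi' = \phi$ and $\phi'(\alpha) = -\alpha\phi(\alpha)$, a short computation yields $\mu_1'(\alpha) = \Phi(\alpha)$ and $\mu_2'(\alpha) = 2\mu_1(\alpha)$, hence $\tilde V'(\alpha) = 2\mu_1(\alpha)\bigl(1-\Phi(\alpha)\bigr) \geq 0$ globally. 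Monotonicity of $V$ in $\mu$ is then immediate from $\partial V/\partial\mu = \sigma\,\tilde V'(\mu/\sigma)$.

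The main obstacle is monotonicity in $\sigma^2$. Differentiating $V = \sigma^2\tilde V(\mu/\sigma)$ at fixed $\mu$ yields $\partial V/\partial\sigma^2 = \tilde V(\alpha) - \tfrac{\alpha}{2}\tilde V'(\alpha) = \tilde V(\alpha) - \alpha\mu_1(\alpha)\bigl(1-\Phi(\alpha)\bigr)$, which is manifestly non-negative for $\alpha\leq 0$. For $\alpha>0$ the plan is to substitute the explicit forms of $\mu_1, \mu_2$ and use the identity $\mu_1(\alpha) + \alpha(1-\Phi(\alpha)) = \alpha + \phi(\alpha)$ to reduce the required inequality to $G(\alpha) := \Phi(\alpha)\bigl(1-\alpha\phi(\alpha)\bigr) - \phi(\alpha)^2 \geq 0$ on $[0,\infty)$. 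I would then verify this by noting $G(0) = \tfrac12 - \tfrac{1}{2\pi} > 0$, and observing that a short computation with $\phi' = -\alpha\phi$ collapses $G'(\alpha)$ to $\phi(\alpha)\bigl[(1-\Phi(\alpha)) + \alpha\phi(\alpha) + \alpha^2\Phi(\alpha)\bigr]$, which is a sum of non-negative terms on $\alpha\geq 0$.

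With both monotonicities established, the two conclusions are bookkeeping. For $\mu_u \geq 0$ and $0\leq b<a$: both $\mu(x)$ and $\sigma^2(x)$ are non-decreasing on $[0,\infty)$, so chaining gives $h(b) = V(\mu(b),\sigma(b)) \leq V(\mu(a),\sigma(b)) \leq V(\mu(a),\sigma(a)) = h(a)$; and for $a\geq 0$, $\sigma^2(a) = \sigma^2(-a)$ while $\mu(a)\geq\mu(-a)$, whence $h(a)\geq h(-a)$, placing $h\in\mathcal{T}_1$. The case $\mu_u\leq 0$ is symmetric: either repeat the argument observing that on $(-\infty,0]$ both $\mu(x)$ and $\sigma^2(x)$ decrease as $|x|$ decreases, or apply the $\mu_u\geq 0$ result under the reflection $\tilde x := -x$, which replaces $\mu_u$ by $-\mu_u\geq 0$ and swaps $\mathcal{T}_1$ with $\mathcal{T}_2$.
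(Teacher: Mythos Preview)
Your argument is correct and arrives at the same core inequality as the paper, but via a cleaner route. Both proofs use the factorisation $V=\sigma^2\tilde V(\mu/\sigma)$ and establish $\tilde V'(\alpha)=2\mu_1(\alpha)(1-\Phi(\alpha))\ge 0$, which immediately gives $h(a)\ge h(-a)$. The difference is in proving that $h$ is non-decreasing on $[0,\infty)$. The paper differentiates $v(x)$ directly, case-splits on the sign of $r'(x)$, and in the hard case rewrites $v=\mu(x)^2\beta(r)$ with $\beta(r)=\alpha(r)/r^2$; this requires an auxiliary Lemma~5 showing $\beta'(r)<0$ for $r>0$. You instead prove the more conceptual statement that $V(\mu,\sigma)$ is monotone in $\sigma^2$ at fixed $\mu$, and then simply chain the monotonicities along $x\mapsto(\mu(x),\sigma^2(x))$. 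Amusingly, the scalar inequality you isolate, $G(\alpha)=\Phi(\alpha)(1-\alpha\phi(\alpha))-\phi(\alpha)^2\ge 0$, is exactly the negation of the paper's $I(r)$ from Lemma~5. But whereas the paper bounds $I(r)$ using the Abramowitz--Stegun error-function inequalities (7.1.13) and a string of crude numerical estimates, your verification that $G(0)>0$ and $G'(\alpha)=\phi(\alpha)\bigl[(1-\Phi(\alpha))+\alpha\phi(\alpha)+\alpha^2\Phi(\alpha)\bigr]\ge 0$ on $[0,\infty)$ is entirely elementary and self-contained. Your approach is shorter, avoids the case split and the external special-function bounds, and yields the reusable fact that $\Var[\psi(Z)]$ is jointly monotone in $(\mu,\sigma^2)$.
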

The proof of \cref{lem:halfmonotonicfunction} is in \cref{sec:proofofhalfmonotonic}.

\begin{cor}[Corollary of \cref{lem:halfmonotonicfunction}]\label{cor:multivariatecase}
Consider a line in $\R^D$ parameterized by $[\bfx(\lambda)]_d=\gamma_d\lambda + c_d$ for $\lambda \in \R$ such that $\gamma_dc_d = 0$ for $1 \leq d \leq D.$  Let $a(\bfx)\coloneqq \sum_{d=1}^D u_d x_d + v$ with $\{u_d\}_{d=1}^D$ and $v$ independent and Gaussian distributed. Then, $\Var[\psi(a(\bfx(\lambda)))] \in \mcT_1 \cup \mcT_2$ (as a function of $\lambda$).
\end{cor}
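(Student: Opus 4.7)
The plan is to reduce the multivariate statement to the univariate setting of \cref{lem:halfmonotonicfunction} by substituting the line parameterisation into the pre-activation and showing that, under the assumption $\gamma_d c_d = 0$, the resulting random variable $a(\bfx(\lambda))$ has exactly the Gaussian-in-$\lambda$ structure required by that lemma.

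First I would substitute to obtain
\begin{equation*}
a(\bfx(\lambda)) \;=\; \sum_{d=1}^D u_d(\gamma_d \lambda + c_d) + v \;=\; \Bigl(\sum_{d=1}^D u_d \gamma_d\Bigr)\lambda + \Bigl(\sum_{d=1}^D u_d c_d + v\Bigr).
\end{equation*}
Since $\{u_d\}$ and $v$ are independent and Gaussian, $a(\bfx(\lambda))$ is Gaussian with
\begin{align*}
\tilde{\mu}(\lambda) &= \tilde{\mu}_u \lambda + \tilde{\mu}_v, \qquad \tilde{\mu}_u := \sum_d \mu_{u_d}\gamma_d, \quad \tilde{\mu}_v := \sum_d \mu_{u_d} c_d + \mu_v, \\
\tilde{\sigma}^2(\lambda) &= \lambda^2 \sum_d \sigma_{u_d}^2 \gamma_d^2 \;+\; 2\lambda \sum_d \sigma_{u_d}^2 \gamma_d c_d \;+\; \sum_d \sigma_{u_d}^2 c_d^2 + \sigma_v^2.
\end{align*}
The key step is then to invoke the hypothesis $\gamma_d c_d = 0$ for every $d$, which kills the cross term, giving
\begin{equation*}
\tilde{\sigma}^2(\lambda) = \tilde{\sigma}_u^2 \lambda^2 + \tilde{\sigma}_v^2, \qquad \tilde{\sigma}_u^2 := \sum_d \sigma_{u_d}^2 \gamma_d^2, \quad \tilde{\sigma}_v^2 := \sum_d \sigma_{u_d}^2 c_d^2 + \sigma_v^2.
\end{equation*}
This is exactly the univariate form $a(x) \sim \mcN(\mu_u x + \mu_v, \sigma_u^2 x^2 + \sigma_v^2)$ assumed in \cref{lem:halfmonotonicfunction}, with $(x, \mu_u, \mu_v, \sigma_u, \sigma_v)$ replaced by $(\lambda, \tilde{\mu}_u, \tilde{\mu}_v, \tilde{\sigma}_u, \tilde{\sigma}_v)$.

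Finally, I would split on the sign of $\tilde{\mu}_u$: if $\tilde{\mu}_u \geq 0$, \cref{lem:halfmonotonicfunction} yields $\Var[\psi(a(\bfx(\lambda)))] \in \mcT_1$, and if $\tilde{\mu}_u \leq 0$ it yields $\Var[\psi(a(\bfx(\lambda)))] \in \mcT_2$; in either case the function lies in $\mcT_1 \cup \mcT_2$, completing the proof. The only non-routine part is recognising that the orthogonality condition $\gamma_d c_d = 0$ (coordinate-wise) is precisely what is needed to eliminate the linear-in-$\lambda$ term of $\tilde{\sigma}^2(\lambda)$; everything else is a direct translation of the multivariate activation into the univariate form of \cref{lem:halfmonotonicfunction}, so there is no substantive obstacle beyond this observation.
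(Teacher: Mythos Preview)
Your proposal is correct and follows essentially the same approach as the paper: both compute the mean and variance of the Gaussian $a(\bfx(\lambda))$, use the hypothesis $\gamma_d c_d = 0$ to eliminate the linear-in-$\lambda$ term in the variance, and then apply \cref{lem:halfmonotonicfunction} with the resulting effective parameters. Your write-up is slightly more explicit (you spell out the mean and the case split on the sign of $\tilde{\mu}_u$), but the argument is the same.
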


\begin{proof}
The activation $a(\bfx(\lambda))$ is a linear combination of Gaussian random variables, and is therefore Gaussian distributed. Moreover the mean is linear in $\lambda.$ The variance of $a(\bfx(\lambda))$ is given by:
\begin{align*}
\Var[a(\bfx(\lambda))]&=\sum_{d=1}^D \Var[u_d] (\gamma_d\lambda+c_d)^2 + \Var[v] \\ 
&= \sum_{d=1}^D \sigma^2_{u_d} (\gamma_d\lambda+c_d)^2 + \sigma^2_{v}\\
&= \lambda^2\left(\sum_{d=1}^D \sigma^2_{u_d} \gamma_d^2\right) +2 \lambda \left(\sum_{d=1}^D \sigma^2_{u_d}\gamma_dc_d\right) + \left(\sum_{d=1}^D \sigma^2_{u_d}c_d^2+ \sigma^2_{v}\right) \\
&= \lambda^2\left(\sum_{d=1}^D \sigma^2_{u_d} \gamma_d^2\right) +\left(\sum_{d=1}^D \sigma^2_{u_d}c_d^2+ \sigma^2_{v}\right). \\
\end{align*}

Defining $\sigma^2_{\tilde{u}}=\sum_{d=1}^D \sigma^2_{u_d} \gamma_d^2$ and $\sigma^2_{\tilde{v}}=\sum_{d=1}^D \sigma^2_{u_d}c_d^2+ \sigma^2_{v}$, the corollary follows from \cref{lem:halfmonotonicfunction}.
\end{proof}

\begin{lem}\label{lem:positivecombinations}
Let $\mcC$ be the set of convex functions from $\R \to [0,\infty).$ Fix any $a<0<b$ and $c$ such that $|c|\leq \min(|a|,|b|)$. Then any function $f$ that can be written as a linear combination of functions in $\mcT_1 \cup \mcT_2 \cup \mcC$ with non-negative weights satisfies, $f(c) \leq f(a)+f(b)$.
\end{lem}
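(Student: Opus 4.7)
The plan is to establish the inequality $g(c) \le g(a) + g(b)$ \emph{separately} for each of the three classes $\mcC$, $\mcT_1$, $\mcT_2$, and then invoke the observation that any inequality of the form $g(c) \le g(a) + g(b)$ is preserved under non-negative linear combinations. That is, if $f=\sum_j \alpha_j g_j$ with $\alpha_j \ge 0$ and $g_j(c) \le g_j(a) + g_j(b)$ for each $j$, then $f(c) = \sum_j \alpha_j g_j(c) \le \sum_j \alpha_j (g_j(a)+g_j(b)) = f(a)+f(b)$. So the entire content of the lemma is three class-wise bounds.

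First I would dispatch $\mcC$: since $a < 0 < b$ and $|c| \le \min(|a|,|b|)$, we have $a \le c \le b$, so $c$ is a convex combination $\lambda a + (1-\lambda) b$ for some $\lambda \in [0,1]$. Convexity gives $g(c) \le \lambda g(a) + (1-\lambda) g(b)$, and using $g \ge 0$ this is at most $g(a)+g(b)$. This is the easy case.

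Next I would handle $\mcT_1$ by splitting on the sign of $c$. If $c \ge 0$, then $0 \le c \le b$, and the monotonicity clause ($g(\beta)\le g(\alpha)$ for $0\le \beta<\alpha$) gives $g(c) \le g(b)$, which is at most $g(a)+g(b)$ by non-negativity. If $c<0$, set $\alpha=|c|>0$; the symmetry clause of $\mcT_1$ yields $g(c)=g(-\alpha) \le g(\alpha)=g(|c|)$, and then since $|c|\le b$ the monotonicity clause gives $g(|c|)\le g(b)$, so again $g(c)\le g(a)+g(b)$. (The boundary case $|c|=b$ is handled by the symmetry clause directly with $\alpha=b$.) The argument for $\mcT_2$ is entirely symmetric, swapping the roles of $a$ and $b$ and of the positive/negative halves of the line: if $c\le 0$, then $a\le c \le 0$ and the monotonicity clause of $\mcT_2$ ($g(b')\le g(a')$ for $a'<b'\le 0$) gives $g(c)\le g(a)$; if $c>0$, the symmetry clause gives $g(c)\le g(-c)$, and then monotonicity on $[a,0]$ gives $g(-c)\le g(a)$ since $-c\in[a,0]$.

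The only subtlety is keeping the two defining properties of $\mcT_1$ (and $\mcT_2$) straight and checking the boundary cases $c=0$, $|c|=|a|$, $|c|=|b|$, but these are all covered by the same clauses. Once these three class-wise inequalities are in place, the lemma follows immediately by linearity as noted above, and no further machinery is needed.
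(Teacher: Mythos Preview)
Your proof is correct and is essentially the same argument as the paper's. The only presentational difference is that the paper first uses closure of $\mcT_1$, $\mcT_2$, and $\mcC$ under non-negative combinations to write $f=t_1+t_2+s$ and then chains the class-wise bounds in a single string of inequalities, whereas you establish $g(c)\le g(a)+g(b)$ separately for each class and then combine by linearity; the underlying use of convexity, the symmetry clause, the monotonicity clause, and non-negativity is identical.
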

 The proof of \cref{lem:positivecombinations} can be found in \cref{app:proof-positive-combinations}.
 
\subsection{Proof of Theorem 1}
Having collected the necessary preliminary lemmas we now prove \cref{thm:mfgaussian}.  
\begin{proof}[Proof of Theorem 1]
By the law of total variance,
\[ \Var[f^{(k)}(\mathbf{x})] = \mathbb{E}[ \Var[f^{(k)}(\mathbf{x})|\mathbf{U}, \mathbf{v}]] + \Var[\mathbb{E}[f^{(k)}(\mathbf{x})|\mathbf{U}, \mathbf{v}]]. \]
Using \cref{lem:convexconditioned}, $\Var[f^{(k)}(\mathbf{x})|\mathbf{U}, \mathbf{v}]$ is convex as a function of $\bfx.$ As the expectation of a convex function is convex, the first term is a convex function of $\bfx.$ For the second term we have
\begin{align*}
    \mathbb{E}[f^{(k)}(\mathbf{x})|\mathbf{U}, \mathbf{v}] &= \mathbb{E} \left[\sum_{i=1}^I w_{k,i}\psi (a_i) + b_k \bigg\vert \mathbf{U}, \mathbf{v} \right]= \sum_{i=1}^I \mu_{w_{k,i}}\psi (a_i) + \mu_{b_k},
\end{align*}
where $\mu_{w_{k,i}} \coloneqq \mathbb{E}[w_{k,i}], \mu_{b_k} \coloneqq \mathbb{E}[b_k].$ In the second line we used linearity of expectation and that conditioned on $(\mathbf{U}, \mathbf{v})$, the $a_i$ are deterministic. Next,
\begin{align}
    \Var [\mathbb{E}[f^{(k)}(\mathbf{x})|\mathbf{U}, \mathbf{v}]] &= \Var \left[ \sum_{i=1}^I \mu_{w_{k,i}}\psi (a_i) + \mu_{b_k} \right] = \sum_{i=1}^I \mu_{w_{k,i}}^2 \Var[\psi (a_i)], \label{eq:non_convex_part} 
\end{align}
since the $a_i$ are independent of each other. 

Consider a line in $\R^D$ parameterised by $[\bfx(\lambda)]_d=\gamma_d\lambda + c_d$ for $\lambda \in \R$ such that $\gamma_dc_d = 0$ for $1 \leq d \leq D.$ 

By \cref{cor:multivariatecase}, $\Var[\psi(a_i(\bfx(\lambda)))] \in \mcT_1 \cup \mcT_2$ (as a function of $\lambda$). Since $\Var[f^{(k)}(\mathbf{x})|\mathbf{U}, \mathbf{v}]$ is convex as a function of $\bfx,$ it is also convex as a function of $\lambda.$ We have written $\Var[f^{(k)}(\mathbf{x}(\lambda))]$ in the form assumed in \cref{lem:positivecombinations}, completing the proof.
\end{proof}

\section{Proof of Lemmas}
In this section we prove the preliminary lemmas stated in \cref{sec:preliminary-lemmas}.
\subsection{Proof of Lemma \ref{lem:convexconditioned}}\label{sec:proof-convex-conditioned}
\begin{proof}
We assume a distribution for the network weights such that: 
\begin{align*}
    q(\mathbf{W}, \mathbf{b}|\bfU, \bfv) = q(\mathbf{b}|\bfU, \bfv)\prod_{i=1}^Iq_{i}(\bfw_{i}|\bfU, \bfv).
\end{align*}
By this factorisation assumption, the outgoing weights from each neuron are conditionally independent. This means the conditional variance of the output under this distribution can be written
 \begin{align}\label{eqn:sum_variance}
     \Var[f^{(k)} (\mathbf{x})\vert \bfU,\bfv] &= \sum_i \Var[w_{k,i}|\bfU, \bfv] \psi(a_i)^2 + \Var[b_k|\bfU, \bfv]. 
 \end{align}
with $a_i \coloneqq a_i(\bfx) = \sum_{d=1}^D u_{i,d}x_d+v_i.$

Since $\Var[f^{(k)} (\mathbf{x})\vert \bfU,\bfv]$ is a linear combination of the $\psi(a_i)^2$ with non-negative weights (plus a constant), to prove convexity it suffices to show that each $\psi(a_i)^2$ is convex as a function of $\bfx$. $\psi(a_i)^2$ is convex as a function of $a_i$, since it is $0$ for $a_i \leq 0$ and $a_i^2$ for $a_i > 0$. To show that it is convex as a function of $\bfx$, we write
\begin{align*}
    \psi\left(a_i(t\bfx_1 + (1-t)\bfx_2) \right)^2 &= \psi\left(\sum_d u_{i,d}\left(t[\bfx_1]_d + (1-t)[\bfx_1]_d \right) + v_i \right)^2\\
    &= \psi\left( t\left( \sum_d u_{i,d}[\bfx_1]_d + v_i  \right) + (1-t)\left( \sum_d u_{i,d}[\bfx_2]_d + v_i  \right) \right)^2\\
    &\leq t \psi \left( \sum_d u_{i,d}[\bfx_1]_d + v_i  \right)^2 + (1-t) \psi \left( \sum_d u_{i,d}[\bfx_2]_d + v_i \right)^2\\
    &= t \psi \left( a_i(\bfx_1)  \right)^2 + (1-t) \psi \left( a_i(\bfx_2) \right)^2.
\end{align*}
The inequality uses convexity of $\psi(a)$ as a function of $a$.
\end{proof}

\subsection{Proof of Lemma \ref{lem:halfmonotonicfunction}}\label{sec:proofofhalfmonotonic}

Throughout, we assume $\sigma_u,\sigma_v$ and $\mu_v$ are fixed and suppress dependence on these parameters. Let $v_{\mu_u}(x)\coloneqq \Var[\psi(a(x))]$ where the variance is taken with respect to a distribution with parameter $\mu_u$. Then, $v_{\mu_u}(x) =v_{-\mu_u}(-x)$ since $\mu(x)$ and $\sigma^2(x)$ are unchanged by the transformation $\mu_u, x \to -\mu_u, -x$.

Suppose  $v_{\mu_u} \in \mcT_1$ for $\mu_u>0$, then for $x\leq0$, 
\[
v_{-\mu_u}(x) = v_{\mu_u}(-x) \geq v_{\mu_u}(x) = v_{-\mu_u}(-x),
\]
and for $x<y\leq0$,
\[
v_{-\mu_u}(y) = v_{\mu_u}(-y) \leq v_{\mu_u}(-x) = v_{-\mu_u}(x).
\]
In words, if $v_{\mu_u} \in \mcT_1$ then $v_{-\mu_u} \in \mcT_2$. It therefore suffices to consider the case when $\mu_u\geq0$.

We first show that if $x\geq 0, v_{\mu_u}(x)\geq v_{\mu_u}(-x)$. Henceforth, we assume $\mu_u \geq 0$ is fixed and suppress it notationally. From \citet{frey1999variational},
\begin{equation}
v(x) = \sigma(x)^2 \alpha(r(x)), \label{eqn:variance-of-activation}
\end{equation}
Here $r(x)=\mu(x)/\sigma(x)$. We define $h(r)=N(r)+r\Phi(r)$, where $N$ is the standard Gaussian pdf, $\Phi$ is the standard Gaussian cdf. We define $\alpha(r)= \Phi(r)+ r h(r) -h(r)^2$. 

As $\sigma(x)^2=\sigma(-x)^2$, it suffices to show $\alpha(r(x)) \geq \alpha(r(-x))$ for $x>0$. To show this, we first show that $r(x)\geq r(-x)$ for $x>0$, then show that $\alpha(r)$ is monotonically increasing.
\[
r(x) = \mu(x)/\sigma(x) = \mu(-x)/\sigma(-x) + 2\mu_ux/\sigma(-x) \geq \mu(-x)/\sigma(-x) =r(-x). 
\]
The inequality uses that both $\mu_u$ and $x$ are non-negative. It remains to show that $\alpha(r)$ is monotonically increasing. A straightforward calculation shows that,
\[
\alpha'(r) = 2h(r)(1-\Phi(r)).
\]
As $1-\Phi(r)>0$, we must show $h(r) \geq 0$. We have $\lim_{r \to -\infty} h(r) = 0$ and $h'(r) = \Phi(r)>0$, implying $h(r)>0$. We conclude $\alpha'(r)>0$ for all $r$, showing that $v_{\mu_u}(x)\geq v_{\mu_u}(-x)$ for $x \geq 0$.

To complete the proof, we must show that $v(x)$ is monotonically increasing for $x\geq0$.  As $\sigma(x)^2$ is increasing as a function of $x$ and $\alpha(r)$ is increasing as a function of $r$, $v(x)$ is increasing as a function of $x$ whenever $r(x)$ is increasing as a function of $x$. As $r'(x)= \frac{\sigma_v^2 \mu_u-\sigma_u^2\mu_v x}{\sigma(x)^3}$, this completes the proof if $\sigma_v^2 \mu_u-\sigma_u^2\mu_vx \geq 0$.  In particular, we need only consider cases when $\mu_v>0$. In this case, we write,
\begin{align} \label{eqn:beta}
    v(x) = \mu(x)^2 \beta(r(x))
\end{align}
where $\beta(r)=\alpha(r)/r^2$. Also in this region, we have the inequality,
\[
r'(x)\sigma(x) = \frac{\sigma_v^2 \mu_u-\sigma_u^2\mu_v x}{\sigma_u^2x^2+\sigma_v^2} \leq \frac{\sigma_v^2 \mu_u}{\sigma_u^2x^2+\sigma_v^2} \leq \frac{\sigma_v^2 \mu_u}{\sigma_v^2}  = \mu_u,
\]
which leads to $r'(x) \leq \mu_u/\sigma(x)$.

Differentiating \cref{eqn:beta},
\begin{align*}
v'(x) &= 2\mu_u \mu(x)\beta(r(x)) +\mu(x)^2 \left(\frac{\sigma_v^2 \mu_u-\sigma_u^2\mu_v x}{\sigma(x)^3}\right)\beta'(r(x)) \\
&\geq  2\mu_u\mu(x)\left(\beta(r(x)) +\frac{1}{2}r(x) \beta'(r(x))\right). 
\end{align*}
The inequality uses that $r(x)>0$, so that by \cref{lem:sgn-beta-derivative}, $\beta'(r(x))<0$. It suffices to show that $\beta(r) +\frac{1}{2} r \beta'(r)>0$ for $r>0$.
\[
\beta(r) +\frac{1}{2} r \beta'(r) = \beta(r) +\frac{1}{2} r \frac{d}{dr}\left(\frac{\alpha(r)}{r^2}\right) = \frac{\alpha(r)}{r^2} +\frac{1}{2} r\frac{\alpha'(r)r^2- 2r\alpha(r)}{r^4} = \frac{\alpha'(r)}{2r}\geq 0.
\]
We conclude that $v'(x)\geq 0$ for $x \geq 0$, implying that $v(x)$ is monotonically increasing in this region. This completes the proof that $v_{\mu_u}(x) \in \mcT_1$ for $\mu_u>0$. 
\begin{lem}\label{lem:sgn-beta-derivative}
For $\beta$ defined as in the proof of \cref{lem:halfmonotonicfunction} and for $r>0$, $\beta'(r)<0$ 
\end{lem}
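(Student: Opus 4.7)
The plan is to reduce $\beta'(r) < 0$ to the scalar inequality $\Phi(r) > N(r) h(r)$ by an algebraic simplification, and then verify this inequality by a short calculus argument.

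First, I would compute
\[
\beta'(r) = \frac{r\alpha'(r) - 2\alpha(r)}{r^3},
\]
so that for $r > 0$ it suffices to show the numerator is strictly negative. Substituting the formula $\alpha'(r) = 2h(r)(1-\Phi(r))$ established earlier in the proof of Lemma 4, together with $\alpha(r) = \Phi(r) + rh(r) - h(r)^2$, I obtain
\[
r\alpha'(r) - 2\alpha(r) = 2h(r)^2 - 2rh(r)\Phi(r) - 2\Phi(r) = 2h(r)\bigl[h(r) - r\Phi(r)\bigr] - 2\Phi(r).
\]
The defining identity $h(r) = N(r) + r\Phi(r)$ gives $h(r) - r\Phi(r) = N(r)$, so the expression telescopes to $-2F(r)$, where $F(r) \coloneqq \Phi(r) - N(r) h(r)$. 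Since $r^3 > 0$ for $r > 0$, the proof therefore reduces to showing $F(r) > 0$ for all $r > 0$.

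Next, I would verify $F > 0$ on $(0,\infty)$ via a boundary value together with a derivative sign check. At the boundary, $F(0) = \tfrac{1}{2} - N(0)^2 = \tfrac{1}{2} - \tfrac{1}{2\pi} > 0$. Differentiating and using $N'(r) = -rN(r)$ together with $h'(r) = \Phi(r)$,
\[
F'(r) = N(r) + rN(r)h(r) - N(r)\Phi(r) = N(r)\bigl[1 - \Phi(r) + rh(r)\bigr].
\]
For $r \geq 0$ each factor on the right is positive: $N(r) > 0$, $1-\Phi(r) > 0$, and $rh(r) \geq 0$. Hence $F$ is strictly increasing on $[0,\infty)$, and combined with $F(0) > 0$ this yields $F(r) > 0$ for all $r \geq 0$, completing the proof.

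The only mildly delicate step is the reduction in the first step: although $r\alpha'(r) - 2\alpha(r)$ is rather messy when expanded naively, recognising that the bracket $h(r) - r\Phi(r)$ in $h(r)\bigl[h(r) - r\Phi(r)\bigr]$ is precisely $N(r)$ (by the definition of $h$) is what makes the whole expression collapse to the very clean $-2(\Phi - Nh)$. Once this simplification is spotted, the remainder uses only the derivatives of $N$, $\Phi$, and $h$ that have already appeared in the appendix, and no Mill's-ratio-type estimates are needed.
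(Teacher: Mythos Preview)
Your proof is correct and considerably cleaner than the paper's. Both arguments begin with the same algebraic reduction: writing $\beta'(r)=\bigl(r\alpha'(r)-2\alpha(r)\bigr)/r^3$ and simplifying the numerator to $-2\bigl(\Phi(r)-N(r)h(r)\bigr)$, so that the claim is equivalent to $I(r):=-\Phi(r)+N(r)^2+rN(r)\Phi(r)<0$ for $r>0$ (your $F=-I$). From that point the two diverge. The paper bounds $\Phi$ above and below using the Abramowitz--Stegun inequalities for the complementary error function, substitutes these into $I(r)$, and then applies several crude numerical bounds ($N(r)\le 1/\sqrt{2\pi}$, $rN(r)\le 1/\sqrt{2\pi e}$, etc.) to conclude $I(r)\le -\tfrac{1}{2}+\tfrac{1}{\sqrt{2\pi e}}+\tfrac{1}{2\pi}<0$. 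Your route instead checks $F(0)=\tfrac{1}{2}-\tfrac{1}{2\pi}>0$ and then shows $F'(r)=N(r)\bigl[1-\Phi(r)+rh(r)\bigr]>0$ on $[0,\infty)$ using only $N'(r)=-rN(r)$ and $h'(r)=\Phi(r)$, both already established in the appendix. This is genuinely more elementary: it needs no external tail bounds on $\Phi$ and no numerical verification, and it yields the strict inequality directly. The paper's approach, while heavier, does have the minor byproduct of an explicit uniform upper bound on $I(r)$ (about $-0.098$), which yours does not give; but that quantitative margin is not used anywhere else.
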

\begin{proof}
For $r \neq 0$, $\beta'(r)= \left(-2\Phi(r)+2N(r)^2+2N(r)\Phi(r)\right)/r^3$. As $r>0$,
\[
\beta'(r) \leq 0 \Leftrightarrow I(r):= -\Phi(r)+N(r)^2+N(r)r\Phi(r) \leq 0.
\]
Rearranging \citep[7.1.13]{abramowitz1965handbook} yields:
 \begin{equation}
  1- \frac{2}{r+\sqrt{r^2+8/\pi}} N(r)  \le \Phi(r) < 1- \frac{2}{r+\sqrt{r^2+4}} N(r).
\end{equation}
for $r\geq 0.$
\begin{align}
I(r) & = -\Phi(r)+N(r)^2+rN(r)\Phi(r) \nonumber \\
& \leq -\Phi(r)+N(r)^2+rN(r)\left( 1- \frac{2}{r+\sqrt{r^2+4}} N(r)\right) \nonumber \\
& \leq  -1 + \frac{2}{r+\sqrt{r^2+8/\pi}} N(r)+N(r)^2+rN(r)\left( 1- \frac{2}{r+\sqrt{r^2+4}} N(r)\right) \nonumber \\
& = -1 +\frac{2}{r+\sqrt{r^2+8/\pi}}N(r)+rN(r)  +N(r)^2\left(1 - \frac{2r}{r+\sqrt{r^2+4}} \right) \label{eqn:Iupper1}
\end{align}
We now make use of numerous crude bounds which hold for $r>0$:
\begin{enumerate}
     \item $N(r) \leq 1/\sqrt{2\pi},$ 
     \item $\frac{2}{r+\sqrt{r^2+8/\pi}} \leq \sqrt{\pi/2},$
    \item $rN(r) \leq 1/\sqrt{2\pi e} $ 
    \item $\frac{2r}{r+\sqrt{r^2+4}}  \geq 0$.
\end{enumerate}
Plugging these into \cref{eqn:Iupper1},
\[
I(r) \leq -1 +\frac{\sqrt{\pi/2}}{\sqrt{2\pi}}+\frac{1}{\sqrt{2\pi e}}  +\frac{1}{2\pi}  = -\frac{1}{2} + \frac{1}{\sqrt{2\pi e}}  +\frac{1}{2\pi} \approx -0.098 < 0. \qedhere
\]
\end{proof}
\subsection{Proof of Lemma \ref{lem:positivecombinations}}\label{app:proof-positive-combinations}
\begin{proof}
Recall that \[\mcT_1=\{f\geq 0: \forall 0\leq b<a, f(a)\geq f(-a) \text{ and } f(b)\leq f(a)\}\] and \[\mcT_2=\{f\geq 0: \forall a< b \leq 0, f(a)\geq f(-a) \text{ and } f(b) \leq f(a)\}.\]
First, note that $\mcT_1,\mcT_2$ and the set of non-negative convex functions, $\mcC$ are all closed under addition and positive scalar multiplication. We can therefore write $f$ as a sum of three functions, $f(x) = t_1(x)+t_2(x)+s(x)$ with $t_1 \in \mcT_1,t_2 \in \mcT_2$ and $s\in \mcC.$ We prove the case when $a\leq c \le 0 \le-c\le b.$ The case $a\le -c \le 0 \le c \le b$ follows a symmetric argument.
\begin{align*}
f(c) &=  t_1(c)+t_2(c)+s(c) \text{ \quad (def.)}\\ 
& \leq t_1(c)+t_2(a)+s(c) \text{ \quad (second condition for $\mcT_2$)}\\
& \leq t_1(-c)+t_2(a)+s(c) \text{ \quad (first condition for $\mcT_1$)}\\
& \leq t_1(b)+t_2(a)+s(c)\text{ \quad (second condition for $\mcT_1$)}\\
& \leq t_1(b)+t_2(a)+\max(s(a),s(b)) \text{\quad ($s$ convex)}\\
& \leq t_1(b)+t_2(a)+s(a)+s(b) \\ 
& \leq t_1(a)+t_1(b)+t_2(a)+t_2(b)+s(a)+s(b) \text{\quad (non-negativity)} \\
&= f(a)+f(b). \qedhere
\end{align*}
\end{proof}

\section{Proof of Theorem \ref{thm:universal}}\label{app:deep-proofs}
We now restate and prove \Cref{thm:universal} from the main body:
\begin{thm}\label{thm:universal-supplement}
Let $A \subset \R^D$ be compact, and let $C(A)$ be the space of continuous functions on $A$ to $\R$. Similarly, let $C^+(A)$ be the space of continuous functions on $A$ to $\R_{\geq 0}$. Then for any $g \in C(A)$ and $h \in C^+(A)$, and any $\epsilon > 0$, for both the mean-field Gaussian and MC dropout families, there exists a 2-hidden layer ReLU NN such that 
\[
\sup_{\bfx \in A} |\Exp{\!}{f(\bfx)} - g(\bfx)| < \epsilon \quad \mathrm{\,and \,} \quad \sup_{\bfx \in A} |\Var[f(\bfx)] - h(\bfx)| < \epsilon,
\]
where $f(\bfx)$ is the (stochastic) output of the network.
\end{thm}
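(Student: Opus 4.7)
The plan is to follow the paper's own hint for $\Qffg$ and to extend it to $\Qmcdo$. In both cases I would exhibit a 2HL ReLU BNN whose first hidden layer is wide enough for the classical universal approximation theorem of \citet{leshno1993multilayer} to apply, then route the intended mean function $g$ through one pathway in the second hidden layer and the intended standard deviation $\sqrt{h}$ through another, using factorisation (FFG) or sign-symmetrisation (MCDO) on the output weights to zero out unwanted cross terms. Compactness of $A$ and continuity of $g, h$ play a double role: they make the target functions bounded (so errors can be made uniformly small) and they let the two preactivations in the second hidden layer, obtained by universal approximation, be arranged to be bounded as well.

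For $\Qffg$ the concrete steps are: (1) set $m = \min_{\bfx' \in A} g(\bfx')$, and by universal approximation pick deterministic 1HL weights realising continuous functions $\tilde{g}, \tilde{h}$ that are uniformly $\delta$-close to $g-m$ and $\sqrt{h}$ on $A$; (2) build the 2HL BNN whose first hidden layer stacks these two 1HL approximators and whose second hidden layer has two neurons with preactivations $a_1, a_2$ equal to $\tilde{g}, \tilde{h}$ in the deterministic limit; (3) place Gaussian factors over all first-layer weights with the chosen deterministic values as means and variances $\sigma^2 > 0$; (4) set $q(w_1) = \mcN(1, \sigma^2)$, $q(w_2) = \mcN(0,1)$, and $q(b) = \mcN(m, \sigma^2)$; (5) apply the law of total variance and control $\Exp{\!}{\psi(a_i(\bfx))}$ and $\Exp{\!}{\psi(a_i(\bfx))^2}$, showing they converge uniformly in $\bfx \in A$ to $\psi(\tilde{g}(\bfx))$ and $\psi(\tilde{g}(\bfx))^2$ (and similarly for $a_2$) as $\sigma \to 0$, using Gaussian moment bounds and the $1$-Lipschitz property of $\psi$; (6) pick $\sigma$ and $\delta$ small enough to drive both the mean and variance errors below $\epsilon$.

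For $\Qmcdo$ the key obstruction is that no individual weight in layers $2$ or $3$ can be made essentially deterministic --- each carries a Bernoulli factor of fixed rate $p \in (0,1)$. I would overcome this by \emph{replication}: wherever the FFG argument required a near-deterministic weight of value $c$, use $K$ parallel copies each carrying weight $c/(K(1-p))$, whose dropout-masked sum $(K(1-p))^{-1} c \sum_k \epsilon_k$ concentrates on $c$ in $L^2$ uniformly as $K \to \infty$ by a Hoeffding/Bernstein bound. The first weight matrix being deterministic means the first hidden layer output $\phi(\bfx)$ is deterministic and bounded on compact $A$, so these concentration statements can be made uniform in $\bfx$. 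To play the role of $w_2 \sim \mcN(0,1)$ --- a weight with zero mean but positive variance --- I would use a symmetrised pair of neurons in the second hidden layer sharing the same preactivation $a_2 \approx c \sqrt{h(\bfx)}$ but carrying opposite output weights $\pm w$: since their dropout masks are independent, the mean contributions cancel while the variances add to $2 w^2 p(1-p) \psi(a_2)^2$, which I scale to match $h(\bfx)$.

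The main obstacle is step (5) in the FFG case and its analogue in MCDO: establishing \emph{uniform} quantitative control (in $\bfx \in A$) of the moments of the preactivations as functions of $\sigma$ or $K$, so that the law of total variance decomposition yields uniform approximation of both $g$ and $h$. For FFG this is mainly careful bookkeeping with Gaussian moment bounds; for MCDO it is more delicate because dropout acts multiplicatively rather than additively, and because one has to verify the symmetrised pair delivers exactly the prescribed variance without leaving a residual bias once the cross-terms through $\bfeps$ in the output layer are accounted for. Once these uniform estimates are in hand, combining them with the universal approximation error on $\tilde{g}, \tilde{h}$ yields the theorem for any prescribed $\epsilon > 0$; the resulting networks are wasteful in width, but the statement only claims existence.
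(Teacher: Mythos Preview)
Your proposal is correct and follows essentially the same construction as the paper's proof: for $\Qffg$, two second-layer neurons with preactivations approximating $g-\min g$ and $\sqrt{h}$, output weights $\mcN(1,\sigma^2)$ and $\mcN(0,1)$, and all other variances driven to zero; for $\Qmcdo$, replication of identical neurons to concentrate dropout averages plus a symmetrised $\pm$ pair sharing the preactivation $a_2\approx\sqrt{h}$ so that $(\epsilon_a-\epsilon_b)\psi(a_2)$ contributes zero mean and variance $2p(1-p)\psi(a_2)^2$. The paper formalises your step (5) via a Chebyshev-plus-union-bound argument (its Lemmas 7--9) rather than direct Gaussian moment bounds, but this is a matter of presentation, not of strategy.
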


Our proof will make use of the standard universal approximation theorem for deterministic NNs as given in \citet{leshno1993multilayer}:

\begin{thm}[Universal approximation for deterministic NNs]\label{thm:universal-function-approximator}
Let $\psi(a) = \max(0, a)$. Then for every $g \in C(\R^D)$ and every compact set $A \subset \R^D$, for any $\epsilon > 0$ there exists a function $f \in S$ such that $\|g - f\|_\infty < \epsilon$. Here 
\begin{align*}
    S = \left\{ \sum_{i=1}^I w_i\psi \left(\sum_{d=1}^D u_{i,d} x_d + v_i \right) : I \in \N, w_i, u_{i,d}, v_i \in \R \right\}.
\end{align*}
\end{thm}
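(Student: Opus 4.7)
The plan is to prove density of $S$ in $C(A)$ under the sup norm by reducing the multivariate ReLU problem to the univariate case, and then reducing univariate continuous approximation to piecewise linear approximation. Concretely, I would factor the proof into three stages: (i) establish a univariate ReLU universal approximation result; (ii) establish that linear combinations of ``power ridge functions'' $(w \cdot x)^n$ are dense in $C(A)$; (iii) combine the two.

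For stage (i), I would fix a compact interval $[a, b] \subset \R$ and a continuous $g:[a,b]\to\R$. By uniform continuity, choose a partition $a = x_0 < x_1 < \cdots < x_N = b$ fine enough that the piecewise-linear interpolant $\tilde g$ through the points $(x_k, g(x_k))$ satisfies $\|g-\tilde g\|_\infty < \epsilon$. Then represent $\tilde g$ \emph{exactly} as an element of the univariate version of $S$ using three ingredients: (a) the identity $x - a' = \psi(x-a') - \psi(a'-x)$, which realizes any affine function without an output bias; (b) the identity $\psi(c) = c$ for $c > 0$ (obtained by taking all input weights $u_{i,d}=0$), which realizes any positive constant, and its negation which handles negatives; and (c) slope-change terms $\gamma_k \psi(x-x_k)$ at each knot, where $\gamma_k$ is the change in slope at $x_k$. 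Summing these yields $\tilde g$ as a finite linear combination of ReLU units.

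For stage (ii), I would appeal to Stone--Weierstrass to reduce to approximating polynomials. It then suffices to show that each monomial $x_1^{\alpha_1}\cdots x_D^{\alpha_D}$ of total degree $n$ lies in the linear span of $\{(w \cdot x)^n : w \in \R^D\}$. This is the \emph{polarization identity}: starting from $(w \cdot x)^n = \sum_{|\alpha|=n} \binom{n}{\alpha} w^\alpha x^\alpha$ and applying a finite-difference operator in $w$ isolates any chosen monomial, expressing it as a finite rational combination of $(w_j\cdot x)^n$ for appropriate choices $w_j$. The upshot is that linear combinations of the functions $x \mapsto (w \cdot x)^n$, $w\in\R^D$, $n\in\N$, are uniformly dense in $C(A)$ for compact $A$.

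For stage (iii), given $g \in C(A)$ and $\epsilon > 0$, first use stage (ii) to pick $\sum_{k=1}^K \alpha_k (w_k\cdot x)^{n_k}$ within $\epsilon/2$ in sup norm. Since $A$ is compact, each $t_k := w_k \cdot x$ ranges over a compact interval $I_k \subset \R$. Apply stage (i) to the univariate function $t \mapsto t^{n_k}$ on $I_k$ to produce $\sum_i w_{k,i}\,\psi(u_{k,i} t + v_{k,i})$ uniformly within $\epsilon/(2K\max_j|\alpha_j|)$. Substituting $t = w_k \cdot x$ rewrites each unit as $\psi\bigl((u_{k,i}w_k)\cdot x + v_{k,i}\bigr)$, which is of the exact form in $S$. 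Summing over $k$ and applying the triangle inequality gives a single element of $S$ within $\epsilon$ of $g$. The main obstacle is stage (ii): the polarization identity is classical but requires care to write out cleanly, and is where the theorem truly leverages more than a single input dimension; stage (i) is standard bookkeeping, and stage (iii) is routine once the other two are in hand.
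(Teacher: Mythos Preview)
The paper does not prove this theorem at all: it is stated as the classical universal approximation theorem and attributed to Leshno et al.\ (1993), then used as a black box in the proof of \cref{thm:universal-supplement}. So there is no ``paper's own proof'' to compare against.

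Your proposal is a correct, self-contained argument. Stage~(i) is standard (the only subtlety, the absence of an output bias in $S$, you handle correctly via $\psi(c)=c$ for $c>0$ with zero input weights). Stage~(ii) is the substantive step, and polarization does the job: since $(w\cdot x)^n$ for varying $w$ span the finite-dimensional space of homogeneous degree-$n$ polynomials, every monomial is a finite linear combination of ridge powers, and Stone--Weierstrass supplies density of polynomials. Stage~(iii) is routine. For context, Leshno et al.'s original argument is quite different in character: it proves the much stronger statement that \emph{any} locally bounded, a.e.\ continuous, non-polynomial activation suffices, via a functional-analytic route (annihilating measures and distributional derivatives). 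Your approach is more elementary and constructive but specific to piecewise-linear activations; either is adequate for the paper's purposes, since only the ReLU case is needed downstream.
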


We first prove a useful lemma.

\begin{lem}\label{lem:relu-variance-bound} 
Let $\psi(a) = \max(0, a)$. Let $a$ be a random variable with finite first two moments. Then $\Var[\psi(a)] \leq \Var[a]$.
\end{lem}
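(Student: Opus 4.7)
The plan is to exploit the fact that the ReLU $\psi(a) = \max(0,a)$ is $1$-Lipschitz, i.e.\ $|\psi(a) - \psi(b)| \leq |a - b|$ for all $a,b \in \mathbb{R}$ (since $\psi$ has subgradient in $[0,1]$ everywhere, or equivalently, by inspecting the four cases for the signs of $a$ and $b$). This Lipschitz property transfers squared deviations from $a$ to $\psi(a)$, and the variance is controlled by squared deviations.

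The cleanest route is via the variational characterisation of variance: for any random variable $Y$ with finite second moment, $\Var[Y] = \min_{c \in \mathbb{R}} \mathbb{E}[(Y-c)^2]$. Applying this to $Y = \psi(a)$ and choosing the (not necessarily optimal) constant $c = \psi(\mathbb{E}[a])$ gives
\begin{align*}
\Var[\psi(a)] \;\leq\; \mathbb{E}\bigl[(\psi(a) - \psi(\mathbb{E}[a]))^2\bigr] \;\leq\; \mathbb{E}\bigl[(a - \mathbb{E}[a])^2\bigr] \;=\; \Var[a],
\end{align*}
where the second inequality applies $1$-Lipschitzness pointwise before taking the expectation. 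The finite-second-moment hypothesis ensures all quantities are well-defined and finite.

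An equally short alternative would be to use the identity $\Var[Y] = \tfrac{1}{2}\mathbb{E}[(Y - Y')^2]$ for an independent copy $Y'$ of $Y$, then apply $1$-Lipschitzness inside the expectation. Either way, no real obstacle arises; the only mild subtlety is justifying finiteness, which is immediate from the assumption that $a$ has finite first two moments, since $|\psi(a)| \leq |a|$ implies $\psi(a)$ does too.
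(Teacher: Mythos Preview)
Your proof is correct. Your primary route (the variational characterisation $\Var[Y]=\min_c\mathbb{E}[(Y-c)^2]$ with the specific choice $c=\psi(\mathbb{E}[a])$) is a minor variant of the paper's argument, which instead uses the i.i.d.\ copy identity $\Var[Y]=\tfrac{1}{2}\mathbb{E}[(Y-Y')^2]$ and then applies $|\psi(x)-\psi(y)|\le|x-y|$ inside the expectation---exactly the alternative you mention at the end. Both routes are one-liners resting on the same $1$-Lipschitz observation; the only difference is which variance identity absorbs the Lipschitz bound, and neither buys anything over the other here.
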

\begin{proof}
For all $x,y \in \R$, we have $|x-y|^2 \geq |\psi(x) - \psi(y)|^2$. Consider two i.i.d.~copies of any random variable with finite first two moments, denoted $a_1$ and $a_2$. Then
\begin{align*}
    \Var[a_1] = \Exp{\!}{a_1^2} - \Exp{\!}{a_1}^2
    = \frac{1}{2}\Exp{\!}{a_1^2 + a_2^2 - 2a_1a_2}
    = \frac{1}{2}\Exp{\!}{|a_1 - a_2|^2}
    &\geq \frac{1}{2}\Exp{\!}{|\psi(a_1) - \psi(a_2)|^2}\\
    &= \Var[\psi(a_1)]. \hspace{.8cm} \qedhere
\end{align*}
\end{proof}
\subsection{Proof of Theorem 3 for $\Qffg$}

We prove \cref{thm:universal-supplement} for the fully-factorised Gaussian approximating family. We begin by proving results about 1HL networks within this family. The overall goal of these results is \cref{lem:mean-close-to-map}, which informally says that for any set of mean parameters for the weights, we can find a setting of the standard deviations of the weights, such that the mean output of the network is close to the output of the deterministic network, with weights equal to the mean parameters. Our proof of this proceeds in 3 parts: First, in \cref{lem:dropout-lowvariance}, we show that by making the standard deviation parameters sufficiently small, we can ensure that the variance of the output of the network is uniformly small on some compact set $A$. Next, in \cref{lem:samples-near-mean-param}, we show that again by choosing the standard deviation sufficiently small, we can show that most of the sample functions of the 1HL network are close to the function that would be obtained by using the mean parameters. Finally, in the proof of \cref{lem:mean-close-to-map}, we use Chebyshev's inequality and the triangle inequality to conclude that the mean of the network must also be close to the function defined by the mean parameters. 

These networks will be used to construct the desired 2HL network. 

\paragraph{Notation} Consider a 1HL ReLU NN with input $\bfx \in \R^D$ and output $\bff \in \R^K$. Let the network have $I$ hidden units and be parameterised by input weights $U \in \R^{I \times D}$, input biases $v \in \R^{I}$, output weights $W \in \R^{K \times I}$ and output biases $b \in \R^K$. Let $\theta  = (U, v, W, b)$. Denote the $k$\textsuperscript{th} output of the network by $f^{(k)}_\theta(\bfx)$. Consider a factorised Gaussian distribution over the parameters $\theta$ in the network. Let the means of the Gaussians be denoted $\bm{\mu} = (\mu_U, \mu_v, \mu_W, \mu_b)$, where e.g.~$\mu_U$ is a matrix whose elements are the means of $U$. Each mean is always taken to be $\in \R$. Let the standard deviations be denoted $\bm{\sigma} = (\sigma_U, \sigma_v, \sigma_W, \sigma_b)$. Each standard deviation is always taken to be $\in \R_{>0}$. 

The following lemma states that we can make the output of a 1HL BNN have low variance by setting the standard deviation of the weights to be small.
\begin{lem}\label{lem:variance-small}
Let $A \subset \R^D$ be a compact set and $f^{(k)}_\theta(\bfx)$ be the $k$\textsuperscript{th} output of a 1HL ReLU NN with a mean-field Gaussian distribution mapping from $A \to \R$. Fix any $\bm{\mu}$ and any $\epsilon > 0$. Let all the standard deviations in $\bm{\sigma}$ be equal to a shared constant $\sigma > 0$. Then there exists $\sigma'>0$ such that for all $\sigma < \sigma'$ and for all $\bfx \in A$, $\Var[\psi(f^{(k)}_\theta(\bfx))] < \epsilon$ for all $1\leq k \leq K$. 
\end{lem}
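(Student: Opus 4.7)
The plan is to first invoke \cref{lem:relu-variance-bound}, which gives $\Var[\psi(f^{(k)}_\theta(\bfx))] \leq \Var[f^{(k)}_\theta(\bfx)]$; it therefore suffices to make $\Var[f^{(k)}_\theta(\bfx)]$ uniformly small on $A$. To do so, I would condition on the first-layer parameters $(U, v)$ and apply the law of total variance,
\[
\Var[f^{(k)}_\theta(\bfx)] = \Exp{\!}{\Var[f^{(k)}_\theta(\bfx) \mid U, v]} + \Var[\,\Exp{\!}{f^{(k)}_\theta(\bfx) \mid U, v}\,].
\]
Writing $a_i := \sum_d U_{i,d} x_d + v_i$, the pre-activations $a_i$ are deterministic given $(U,v)$. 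Since the output weights $W_{k,i}$ and bias $b_k$ are independent Gaussians of variance $\sigma^2$, the conditional variance term equals $\sigma^2 \bigl(\sum_i \Exp{\!}{\psi(a_i)^2} + 1\bigr)$. The conditional mean equals $\sum_i \mu_{W_{k,i}} \psi(a_i) + \mu_{b_k}$, and because the pairs $(U_i, v_i)$ are independent across $i$, its variance is $\sum_i \mu_{W_{k,i}}^2 \Var[\psi(a_i)]$.

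Next, I would bound each summand as a polynomial in $\sigma$. Since $a_i$ is Gaussian with $\Var[a_i] = \sigma^2(\|\bfx\|^2 + 1)$, applying \cref{lem:relu-variance-bound} gives $\Var[\psi(a_i)] \leq \sigma^2(\|\bfx\|^2 + 1)$, while $\Exp{\!}{\psi(a_i)^2} \leq \Exp{\!}{a_i^2} = (\mu_{U_i}\cdot \bfx + \mu_{v_i})^2 + \sigma^2(\|\bfx\|^2 + 1)$. Substituting these bounds, $\Var[f^{(k)}_\theta(\bfx)]$ is dominated by a polynomial in $\sigma$ in which every term has $\sigma$-degree at least $2$, with coefficients that depend polynomially on $\|\bfx\|$ and on the fixed mean parameters $\bm{\mu}$.

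Finally, compactness of $A$ bounds $\|\bfx\|$ uniformly, so all of the polynomial-in-$\|\bfx\|$ coefficients are uniformly bounded by a constant depending only on $\bm{\mu}$ and $A$. Hence $\sup_{\bfx \in A} \Var[f^{(k)}_\theta(\bfx)] = O(\sigma^2)$ as $\sigma \to 0$, which can be made less than $\epsilon$ by choosing $\sigma'$ small enough; taking a minimum of such thresholds over the finitely many outputs $k$ yields a single $\sigma'$ that works simultaneously for all $k$. I do not expect a serious obstacle: the proof is essentially a bookkeeping exercise once the law of total variance is combined with \cref{lem:relu-variance-bound}. The only subtle point is to be careful about which weights are independent at each step, so that the cross-terms arising from $\Var[\sum_i \mu_{W_{k,i}} \psi(a_i)]$ correctly vanish — this is precisely where the mean-field factorisation is used.
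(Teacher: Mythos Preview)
Your proposal is correct. Both you and the paper reduce to bounding $\Var[f^{(k)}_\theta(\bfx)]$ uniformly on $A$ and then invoke \cref{lem:relu-variance-bound}, but the intermediate bookkeeping differs. The paper expands $\Var\!\bigl[\sum_i w_{k,i}\psi(a_i)\bigr]$ directly into a double sum of covariances, applies Cauchy--Schwarz pairwise, and then bounds each $\Var[w_{k,i}\psi(a_i)]$ using the product-of-independent-variables formula together with the explicit Gaussian expression $\Exp{\!}{\psi(a_i)} = \mu_{a_i}\Phi(\mu_{a_i}/\sigma_{a_i}) + \sigma_{a_i} N(\mu_{a_i}/\sigma_{a_i})$ from \citet{frey1999variational}. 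Your route via the law of total variance conditioned on $(U,v)$ is more economical: the cross terms disappear by the mean-field independence of the $(U_i,v_i)$ across $i$, and you replace the Frey--Hinton identity by the trivial pointwise bound $\psi(a)^2 \le a^2$, giving $\Exp{\!}{\psi(a_i)^2} \le \Exp{\!}{a_i^2}$. This avoids both the Cauchy--Schwarz step and any special Gaussian formulae, at the cost of using the factorisation of the first-layer weights more explicitly. Either way one arrives at a bound of the form $\sup_{\bfx\in A}\Var[f^{(k)}_\theta(\bfx)] \le C(\bm{\mu},A)\,\sigma^2 + O(\sigma^4)$, and the conclusion follows.
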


\begin{proof}
Define $a_i = \sum_{d=1}^D u_{i,d} x_d + v_i$, so that $f^{(k)}_\theta(\bfx) = \sum_{i=1}^I w_{k,i} \psi(a_i) + b_k$. Then
\begin{align*}
    \Var[f^{(k)}_\theta(\bfx)] &= \Var\left[\sum_{i=1}^I w_{k,i} \psi(a_i)\right] + \sigma^2\\
    &= \sum_{i=1}^I \sum_{j=1}^I \mathrm{Cov}\left(w_{k,i}\psi(a_i),w_{k,j}\psi(a_j)\right) + \sigma^2\\
    & \leq \sum_{i=1}^I \sum_{j=1}^I \left|\mathrm{Cov}\left(w_{k,i}\psi(a_i),w_{k,j}\psi(a_j)\right)\right| + \sigma^2\\
    &\leq  \sum_{i=1}^I \sum_{j=1}^I \sqrt{\Var[w_{k,i}\psi(a_i)] \Var[w_{k,j}\psi(a_j)]} + \sigma^2,\\ 
\end{align*}
where the final line follows from the Cauchy--Schwarz inequality. We now analyse each of the constituent terms. Since $w_{k,i}$ and $\psi(a_i)$ are independent,
\begin{align*}
    \Var[w_{k,i}\psi(a_i)] = \mu_{w_{k,i}}^2 \Var[\psi(a_i)] + \Exp{\!}{\psi(a_i)}^2 \sigma^2 + \sigma^2\Var[\psi(a_i)].
\end{align*}
As $A$ is compact, it is bounded, so there exists an $M$ such that $|x_d| \leq M$ for all $1 \leq d \leq D$. Using \cref{lem:relu-variance-bound}, and the mean-field assumptions, 
 \[
 \Var[\psi(a_i)] \leq \Var[a_i] = \sigma^2 \left( \sum_{d=1}^D x_d^2 + 1 \right) \leq \sigma^2 ( DM^2 +1 ).
 \]

Since $a_i$ is a linear combination of Gaussian random variables, we have that $a_i \sim \mathcal{N}(\mu_{a_i}, \sigma^2_{a_i})$, where $\mu_{a_i} = \sum_{d=1}^D \mu_{u_{i,d}}x_d + \mu_{v_i}$ and $\sigma^2_{a_i} = \sigma^2 \left(\sum_{d=1}^D x_d^2 +1 \right)$. Therefore, we have that \cite{frey1999variational}
\begin{align*}
    \Exp{\!}{\psi(a_i)}^2 &= \left(\mu_{a_i} \Phi \left(\frac{\mu_{a_i}}{\sigma_{a_i}}\right) + \sigma_{a_i} N \left(\frac{\mu_{a_i}}{\sigma_{a_i}}\right) \right)^2
    \leq \left( |\mu_{a_i}|\Phi \left(\frac{\mu_{a_i}}{\sigma_{a_i}}\right) + \sigma_{a_i} N \left(\frac{\mu_{a_i}}{\sigma_{a_i}}\right) \right)^2 \\
    & \leq \left( |\mu_{a_i}| + \frac{\sigma_{a_i}}{\sqrt{2\pi}} \right)^2.
\end{align*}
We can then upper bound $\Var[w_{k,i}\psi(a_i)]$ as follows:
\begin{align*}
    \Var[&w_{k,i}\psi(a_i)] \leq \mu_{w_{k,i}}^2 \sigma^2 ( DM^2 +1 ) + \left( |\mu_{a_i}| + \frac{\sigma_{a_i}}{\sqrt{2\pi}} \right)^2 \sigma^2 + \sigma^4 ( DM^2 +1 )\\
    &\leq  \mu_{w_{k,i}}^2 \sigma^2 ( DM^2 +1 ) + \left( M\sum_{d=1}^D |\mu_{u_{i,d}}| +|\mu_{v_i}| + \frac{\sqrt{\sigma^2(M^2D+1)}}{\sqrt{2\pi}} \right)^2 \sigma^2 + \sigma^4 ( DM^2 +1 )\\
    &\coloneqq v_{k,i}(\sigma). 
\end{align*}
The second inequality follows since $A$ is compact and we have $|\mu_{a_i}| \leq M\sum_{d=1}^D |\mu_{u_{i,d}}| +|\mu_{v_i}|$. Note that the upper bound $v_{k,i}(\sigma)$ is continuous and monotonically increasing in $\sigma$, and $v_{k,i}(0) = 0$. We can then upper bound the variance of the output:
\begin{align*}
    \Var[f^{(k)}_\theta(\bfx)] &\leq \sum_{i=1}^I \sum_{j=1}^I \sqrt{v_{k,i}(\sigma)v_{k,j}(\sigma)} + \sigma^2.
\end{align*}
We then choose $\sigma'$ such that for all $1 \leq k \leq K$ and for all $1 \leq i \leq I$, $v_{k,i}(\sigma') < \frac{\epsilon}{2I^2}$, and such that $\sigma'^2 < \frac{\epsilon}{2}$. Then 
\begin{align*}
    \Var[f^{(k)}_\theta(\bfx)] \leq I^2 \frac{\epsilon}{2I^2} + \sigma'^2 < \epsilon
\end{align*}
for $1 \leq k \leq K$. Finally, applying \cref{lem:relu-variance-bound}, we have $\Var[\psi(f^{(k)}_\theta(\bfx))] < \epsilon$ for $1 \leq k \leq K$.  
\end{proof}
The following lemma states that by setting the standard deviation of the weights to be sufficiently small, we can with high probability make the sampled BNN output close to the BNN output evaluated at the mean parameters.
\begin{lem}\label{lem:samples-near-mean-param}
Let $A \subset \R^D$ be any compact set. Fix any $\bm{\mu}$ and any $\epsilon, \delta > 0$. Let all the standard deviations in $\bm{\sigma}$ be equal to a shared constant $\sigma>0$. Then there exists $\sigma'>0$ such that for all $\sigma < \sigma'$, and for any $\bfx \in A$, 
\[\mathrm{Pr}\left( |\psi(f^{(k)}_{\bm{\mu}}(\bfx)) - \psi(f^{(k)}_\theta(\bfx))| > \epsilon \right) < \delta\] 
for all $1\leq k \leq K$.
\end{lem}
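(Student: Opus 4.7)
The plan is to combine Chebyshev's inequality applied to the fluctuations of $f^{(k)}_\theta(\bfx)$ around its mean with a separate, deterministic estimate showing that $\Exp{\!}{f^{(k)}_\theta(\bfx)}$ approaches $f^{(k)}_{\bm{\mu}}(\bfx)$ as $\sigma \to 0$. Since $\psi$ is $1$-Lipschitz, we have $|\psi(f^{(k)}_{\bm{\mu}}(\bfx)) - \psi(f^{(k)}_\theta(\bfx))| \leq |f^{(k)}_{\bm{\mu}}(\bfx) - f^{(k)}_\theta(\bfx)|$, so it suffices to bound the unrectified difference. Splitting this via the triangle inequality, I would write
\[
|f^{(k)}_{\bm{\mu}}(\bfx) - f^{(k)}_\theta(\bfx)| \leq |f^{(k)}_\theta(\bfx) - \Exp{\!}{f^{(k)}_\theta(\bfx)}| + |\Exp{\!}{f^{(k)}_\theta(\bfx)} - f^{(k)}_{\bm{\mu}}(\bfx)|,
\]
and arrange each summand to be at most $\epsilon/2$: the first with probability $\geq 1-\delta$ via Chebyshev, and the second deterministically.

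For the deterministic \emph{bias} term, set $a_i(\bfx) = \sum_d u_{i,d} x_d + v_i \sim \mcN(\mu_{a_i}(\bfx), \sigma^2_{a_i}(\bfx))$, where $\mu_{a_i}(\bfx) = \sum_d \mu_{u_{i,d}} x_d + \mu_{v_i}$ and $\sigma^2_{a_i}(\bfx) = \sigma^2(\sum_d x_d^2 + 1)$. Using the independence of $w_{k,i}$ from $a_i$ (which follows from mean-field), and $\Exp{\!}{b_k} = \mu_{b_k}$, the bias rearranges to $\sum_{i=1}^I \mu_{w_{k,i}}(\Exp{\!}{\psi(a_i(\bfx))} - \psi(\mu_{a_i}(\bfx)))$. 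A Jensen bound combined with the $1$-Lipschitz property of $\psi$ then gives
\[
|\Exp{\!}{\psi(a_i(\bfx))} - \psi(\mu_{a_i}(\bfx))| \leq \Exp{\!}{|a_i(\bfx) - \mu_{a_i}(\bfx)|} = \sigma_{a_i}(\bfx)\sqrt{2/\pi}.
\]
Since $A$ is compact with $|x_d| \leq M$ for all $\bfx \in A$, we have $\sigma_{a_i}(\bfx) \leq \sigma\sqrt{DM^2 + 1}$ uniformly in $\bfx$, so the whole bias vanishes as $\sigma \to 0$, uniformly in $\bfx \in A$ and in $k$.

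For the \emph{fluctuation} term, Chebyshev's inequality gives
\[
\mathrm{Pr}\left(|f^{(k)}_\theta(\bfx) - \Exp{\!}{f^{(k)}_\theta(\bfx)}| > \epsilon/2\right) \leq \frac{4 \Var[f^{(k)}_\theta(\bfx)]}{\epsilon^2}.
\]
\Cref{lem:variance-small}, together with the fact that the variance bound derived in its proof is uniform in $\bfx \in A$, ensures that the right-hand side can be made strictly less than $\delta$ by taking $\sigma$ small enough. Choosing $\sigma'$ to simultaneously enforce both the bias and the Chebyshev bound for every $\sigma < \sigma'$, every $\bfx \in A$, and every $1 \leq k \leq K$ then yields the conclusion. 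I do not anticipate a serious obstacle: the only mildly delicate point is to ensure both bounds are uniform in $\bfx$, which is automatic because all relevant constants depend on $\bfx$ only through $\sum_d x_d^2$, which is bounded on the compact set $A$.
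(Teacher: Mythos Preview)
Your argument is correct, but it follows a genuinely different route from the paper's. The paper proves the lemma by a compactness argument: it observes that $(\bfx,\theta)\mapsto \psi(f^{(k)}_\theta(\bfx))$ is continuous, restricts to the compact set $A\times \bar B_1$ (where $\bar B_1$ is a closed unit ball in parameter space around $\bm{\mu}$), invokes uniform continuity to obtain a radius $\tau$ such that $\|\theta-\bm{\mu}\|<\tau$ forces $|\psi(f^{(k)}_\theta(\bfx))-\psi(f^{(k)}_{\bm{\mu}}(\bfx))|<\epsilon$ for all $\bfx\in A$ simultaneously, and then picks $\sigma'$ small enough that $\mathrm{Pr}(\theta\in B_\tau)>1-\delta$. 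No bias/fluctuation split, no Chebyshev, and no appeal to \cref{lem:variance-small}.

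Your approach is more quantitative: it exploits the $1$-Lipschitz property of $\psi$, the explicit Gaussian structure of $a_i$, and the variance bound from the proof of \cref{lem:variance-small} to get concrete rates in $\sigma$. This is arguably more informative and entirely self-contained in the mean-field Gaussian setting. The paper's compactness argument, by contrast, is softer but more portable: it would apply verbatim to any continuous parametric family and any activation, without having to redo the moment calculations. One small point worth flagging in your write-up: the \emph{statement} of \cref{lem:variance-small} only bounds $\Var[\psi(f^{(k)}_\theta(\bfx))]$, whereas your Chebyshev step needs $\Var[f^{(k)}_\theta(\bfx)]$; you are right that the latter is established in the proof of that lemma (the $\psi$ is only applied at the very end via \cref{lem:relu-variance-bound}), but make that dependence explicit.
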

\begin{proof}
Let $\theta \in \R^P$. We first note that $\psi(f^{(k)}_\theta(\bfx))$ is continuous as a function from $A \times \R^P \to \R$, under the metric topology induced by the Euclidean metric on $A \times \R^P$. Next, define a ball in parameter space
\[
B_{\gamma} = \{ \theta : \| \theta - \bm{\mu} \|_2 < \gamma \}.
\]
Consider the closed ball of unit radius around $\bm{\mu}$, $\bar{B}_1$. Note that $\bar{B}_1$ is compact, and therefore $A \times \bar{B}_1$ is compact as a product of compact spaces. 

Since a continuous map from a compact metric space to another metric space is uniformly continuous, given $\epsilon > 0$, there exists a $0 < \tau < 1$ such that for all pairs $(\bfx_1, \theta_1), (\bfx_2, \theta_2) \in A \times \bar{B}_1$ such that $d((\bfx_1, \theta_1), (\bfx_2, \theta_2)) < \tau$, $|\psi(f^{(k)}_{\theta_1}(\bfx_1)) - \psi(f^{(k)}_{\theta_2}(\bfx_2))| < \epsilon$. Here $d(\cdot, \cdot)$ is the Euclidean metric on $A \times \R^P$. Since this is true for all $1 \leq k \leq K$, we can find a $0 < \tau < 1$ such that $|\psi(f^{(k)}_{\theta_1}(\bfx_1)) - \psi(f^{(k)}_{\theta_2}(\bfx_2))| < \epsilon$ holds for all $k$ simultaneously, by taking the minimum of the $\tau$ over $k$.

Now choose $\sigma'>0$ such that for all $\sigma < \sigma'$, $\mathrm{Pr}(\theta \in B_{\tau}) > 1 - \delta.$ This event implies $d((\bfx, \theta), (\bfx, \bm{\mu})) = \| \theta - \bm{\mu} \|_2 < \tau$. Furthermore, $\theta \in \bar{B}_1$, since $\tau < 1$. Hence $|\psi(f^{(k)}_{\bm{\mu}}(\bfx)) - \psi(f^{(k)}_\theta(\bfx))| < \epsilon$ holds for all $1 \leq k \leq K$.

\end{proof}
The following lemma shows that for 1HL networks, we can make $\Exp{\!}{\psi(f^{(k)}_\theta)}$ (the mean BNN output) close to $\psi(f^{(k)}_{\bm{\mu}})$ (the BNN output evaluated at the mean parameter settings) by choosing the standard deviation of the weights to be sufficiently small.

\begin{lem}\label{lem:mean-close-to-map}
Let $A \subset \R^D$ be any compact set. Then, for any $\epsilon > 0$ and any $\bm{\mu}$, there exists a $\sigma_1>0$ such that for any shared standard deviation $\sigma< \sigma_1$,
\[ \left\| \Exp{\!}{\psi(f^{(k)}_\theta)} - \psi(f^{(k)}_{\bm{\mu}})\right\|_\infty < \epsilon \]
for all $1\leq k \leq K$.
\end{lem}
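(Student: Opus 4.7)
The plan is to combine the two preceding lemmas by exploiting the elementary but useful observation that the quantity $|\mathbb{E}[\psi(f^{(k)}_\theta(\bfx))] - \psi(f^{(k)}_{\bm\mu}(\bfx))|$ is \emph{deterministic}: if a probabilistic argument shows that both $|X - \mathbb{E}[X]|$ and $|X - \psi(f^{(k)}_{\bm\mu}(\bfx))|$ can be simultaneously small with strictly positive probability, then by the triangle inequality the target quantity is small unconditionally. Write $X_{\bfx,k} = \psi(f^{(k)}_\theta(\bfx))$ and $c_{\bfx,k} = \psi(f^{(k)}_{\bm\mu}(\bfx))$. For every realization of $\theta$,
\[
|\mathbb{E}[X_{\bfx,k}] - c_{\bfx,k}| \le |\mathbb{E}[X_{\bfx,k}] - X_{\bfx,k}| + |X_{\bfx,k} - c_{\bfx,k}|.
\]

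First, I would apply \cref{lem:variance-small} with $\epsilon$ replaced by $\epsilon^2/12$ to obtain $\sigma_{(1)} > 0$ such that for all shared standard deviation $\sigma < \sigma_{(1)}$, $\Var[X_{\bfx,k}] < \epsilon^2/12$ holds \emph{uniformly} for all $\bfx \in A$ and all $1 \le k \le K$. Chebyshev's inequality then yields
\[
\Pr\bigl(|X_{\bfx,k} - \mathbb{E}[X_{\bfx,k}]| \ge \epsilon/2\bigr) \;\le\; \frac{\Var[X_{\bfx,k}]}{(\epsilon/2)^2} \;<\; \frac{1}{3}.
\]

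Second, I would apply \cref{lem:samples-near-mean-param} with $\epsilon/2$ in place of $\epsilon$ and $\delta = 1/3$, producing $\sigma_{(2)} > 0$ such that $\Pr(|X_{\bfx,k} - c_{\bfx,k}| > \epsilon/2) < 1/3$ uniformly in $\bfx$ and $k$. Setting $\sigma_1 = \min(\sigma_{(1)}, \sigma_{(2)})$ and fixing any $\sigma < \sigma_1$, a union bound gives that, for any given $\bfx$ and $k$, the event $\{|X_{\bfx,k} - \mathbb{E}[X_{\bfx,k}]| < \epsilon/2\} \cap \{|X_{\bfx,k} - c_{\bfx,k}| < \epsilon/2\}$ occurs with probability strictly greater than $1 - 2/3 = 1/3 > 0$. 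Hence there exists at least one realization of $\theta$ on which both strict inequalities hold, and on that realization the displayed triangle inequality gives $|\mathbb{E}[X_{\bfx,k}] - c_{\bfx,k}| < \epsilon$. Since the left-hand side is a nonrandom quantity, the bound holds unconditionally, and because the choice of $\sigma_1$ did not depend on $\bfx$ or $k$, we obtain $\|\mathbb{E}[\psi(f^{(k)}_\theta)] - \psi(f^{(k)}_{\bm\mu})\|_\infty < \epsilon$ simultaneously for every $1 \le k \le K$.

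I do not anticipate a substantial obstacle: the only subtle point is making sure the two preceding lemmas are applied in a way that preserves uniformity in $\bfx$ and $k$, but both are already stated in exactly the uniform form we need. The key conceptual move is the ``deterministic target, probabilistic witness'' trick that converts the high-probability pointwise closeness of \cref{lem:samples-near-mean-param} together with the concentration supplied by \cref{lem:variance-small} into a deterministic bound on the mean, thereby avoiding any circular appeal to $\mathbb{E}[(f_\theta - f_{\bm\mu})^2]$ (which would require further control of the cross terms).
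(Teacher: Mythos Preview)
Your proposal is correct and follows essentially the same argument as the paper: triangle inequality, Chebyshev from \cref{lem:variance-small}, high-probability closeness from \cref{lem:samples-near-mean-param}, and a non-empty-intersection witness argument to extract a deterministic bound. The only cosmetic difference is that the paper union-bounds over $k$ (using $\Var<\epsilon^2/(16K)$ and $\delta=1/4$) to obtain a single $\theta^*$ working for all $k$ at once, whereas you handle each $(\bfx,k)$ separately with constants $\epsilon^2/12$ and $\delta=1/3$; since the target quantity is deterministic in each coordinate, your per-$k$ treatment is in fact slightly cleaner and avoids the unnecessary $K$-dependence.
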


\begin{proof}
For all $\bfx \in A$ and any $\theta^*$, by the triangle inequality 
\[
\left| \Exp{\!}{\psi(f^{(k)}_\theta(\bfx))} - \psi(f^{(k)}_{\bm{\mu}}(\bfx)) \right| \leq \left| \Exp{\!}{\psi(f^{(k)}_\theta(\bfx))} - \psi(f^{(k)}_{\theta^*}(\bfx)) \right| +\left| \psi(f^{(k)}_{\bm{\mu}}(\bfx)) - \psi(f^{(k)}_{\theta^*}(\bfx)) \right|.
\]
Applying \cref{lem:samples-near-mean-param} with $\epsilon' = \epsilon/2$ and $\delta=1/4$, we can find a $\sigma'$ such that for all $\sigma < \sigma'$, $\left| \psi(f^{(k)}_{\bm{\mu}}(\bfx)) - \psi(f^{(k)}_\theta(\bfx)) \right| \leq \epsilon/2$ with probability at least $3/4$. By \cref{lem:variance-small}, we can find a $\sigma''$ such that for all $\sigma < \sigma''$, $\Var[\psi(f^{(k)}_{\theta}(\bfx))] < \frac{\epsilon^2}{16K}$. Choose $0 < \sigma < \min(\sigma',\sigma'')$. We can apply Chebyshev's inequality to each random variable $\psi(f^{(k)}_{\theta}(\bfx))$,
\[
\mathrm{Pr}\left[\left|\psi(f^{(k)}_{\theta}(\bfx)) - \Exp{\!}{\psi(f^{(k)}_{\theta}(\bfx))}\right| >  \epsilon/2\right] < \frac{1}{4K}.
\]
Applying the union bound, the probability that $|\psi(f^{(k)}_{\theta}(\bfx)) - \Exp{\!}{\psi(f^{(k)}_{\theta}(\bfx))}| \leq  \epsilon/2$ for all $k$ simultaneously is at least $3/4$.
Therefore, for any $\bfx$ we can find a $\theta^*$ such that $|\psi(f^{(k)}_{\theta^*}(\bfx)) - \Exp{\!}{\psi(f^{(k)}_{\theta}(\bfx))}| \leq  \epsilon/2$ and $\left| \psi(f^{(k)}_{\bm{\mu}}(\bfx)) - \psi(f^{(k)}_{ \theta^*}(\bfx)) \right| \leq \epsilon/2$ simultaneously because both events occur with probability at least $1/2$ and therefore have a non-empty intersection. Therefore for all $\bfx$ and all $k$
\[
\left| \Exp{\!}{\psi(f^{(k)}_{\theta}(\bfx))} - \psi(f^{(k)}_{\bm{\mu}}(\bfx)) \right| \leq \epsilon. \qedhere
\]
\end{proof}

We can now complete the proof of theorem 3 for $\Qffg$.
\begin{proof}[Proof of \cref{thm:universal-supplement}]
Consider the case of a 2-hidden layer ReLU Bayesian neural network with 2 units in the second hidden layer. Denote the inputs to these units as $f^{(1)}_\theta(\bfx)$ and $f^{(2)}_\theta(\bfx)$ respectively, where $\theta$ are the parameters in the bottom two weight matrices and biases of the network. The output of the network can then be written as,
\begin{equation}\label{eqn:network-ouput}
f(\bfx) = s_1 \psi(f^{(1)}_{\theta}(\bfx))+s_2 \psi(f^{(2)}_{\theta}(\bfx)) +t
\end{equation}
where the $s_i$ are the weights in the final layer and $t$ is the bias. Taking expectations on both sides,
\[
\Exp{\!}{f(\bfx)} = \Exp{\!}{s_1 \psi(f^{(1)}_{\theta}(\bfx))}+\Exp{\!}{s_2 \psi(f^{(2)}_{\theta}(\bfx))} +\Exp{\!}{t}
\]
Choose $\mu_{s_1}=1, \mu_{s_2}= 0$, and note that $s_1$ is independent of $\theta$ by the mean field assumption. Then
\begin{equation}
\Exp{\!}{f(\bfx)} = \Exp{\!}{\psi(f^{(1)}_{\theta}(\bfx))}+\Exp{\!}{t}. \label{eqn:g_to_gtilde}
\end{equation}
Define $\mu_t= - \min_{\bfx' \in A} g(\bfx')$ (as $A$ is compact and $g$ is continuous, this minimum is well-defined). Define $\tilde{g}(\bfx)\geq 0$ to be $g(\bfx) -\min_{\bfx' \in A} g(\bfx')$. By the universal approximation theorem (\cref{thm:universal-function-approximator}) we can find a setting of the mean parameters, $\bm{\mu}$ in the first two layers (i.e.~excluding the parameters of the distributions on $s_1, s_2$ and $t$) such that
\[
 \|f^{(1)}_{\bm{\mu}}- \tilde{g}\|_\infty < \epsilon/2 \quad \mathrm{\, and \,} \quad \|f^{(2)}_{\bm{\mu}}- \sqrt{h}\|_\infty < \epsilon/2.
\]
This can be done by splitting the neurons in the first hidden layer into two sets, where the first and second set are responsible for $f^{(1)}, f^{(2)}$ respectively, and the weights from each set to the output of the other set are zero. Since $\tilde{g}(\bfx)>0$, applying the ReLU can only make $f^{(1)}$ closer to $\tilde{g}$. Hence $\|\psi(f^{(1)}_{\bm{\mu}})- \tilde{g}\|_\infty < \epsilon/2.$

By \cref{lem:mean-close-to-map}, we can find a $\sigma_1 > 0$ for this $\bm{\mu}$ such that when the standard deviations in the first two layers are set to any shared constant $\sigma < \sigma_1$,
\[ \left\| \Exp{\!}{\psi(f^{(1)}_\theta)} - \psi(f^{(1)}_{\bm{\mu}} )\right\|_\infty < \epsilon/2. \]
By the triangle inequality, $\left\| \Exp{\!}{\psi(f^{(1)}_\theta)}- \tilde{g} \right\|_\infty < \epsilon$. Combining with \cref{eqn:g_to_gtilde}, it follows that the expectation can approximate any continuous function $g$.

We now consider the variance of \cref{eqn:network-ouput}.
\begin{align*}
\Var[f(\bfx)] &= \Var[s_1 \psi(f^{(1)}_\theta(\bfx))+s_2 \psi(f^{(2)}_\theta(\bfx))] +\Var[t]\\
&= \Var[s_1 \psi(f^{(1)}_\theta(\bfx))]+\Var[s_2 \psi(f^{(2)}_\theta(\bfx))] + 2 \Cov(s_1 \psi(f^{(1)}_\theta(\bfx)),s_2 \psi(f^{(2)}_\theta(\bfx))) + \sigma_t^2.
\end{align*}
Choose $\sigma_t^2=\epsilon$. We now consider $\Var[s_1 \psi(f^{(1)}_\theta(\bfx))]$. As $s_1$ is independent of $\theta$,
\[
\Var[s_1 \psi(f^{(1)}_\theta(\bfx))]= \mu_{s_1}^2\Var[\psi(f^{(1)}_\theta(\bfx))]+\sigma_{s_1}^2 \Exp{\!}{\psi(f^{(1)}_\theta(\bfx))}^2 + \Var[\psi(f^{(1)}_\theta(\bfx))]\sigma^2_{s_1}.
\]
Recall $\mu_{s_1} =1$ and choose $\sigma_{s_1}^2 = \min\left(1,\epsilon \big/\left(\max_{x \in A}\Exp{\!}{\psi(f^{(1)}_\theta(\bfx))}^2\right)\right)$, then 
\[
\Var[s_1 \psi(f^{(1)}_\theta(\bfx))]\leq 2\Var[\psi(f^{(1)}_\theta(\bfx))] + \epsilon.
\]
By \cref{lem:variance-small}, we can find a $\sigma_2$ such that for any $\sigma<\sigma_2$, $\Var[\psi(f^{(1)}_\theta(\bfx))] \leq \epsilon$. For any such $\sigma$, $\Var[s_1 \psi(f^{(1)}_\theta(\bfx))]\leq 3 \epsilon$.

We now choose $\sigma^2_{s_2}=1$ and consider 
\begin{align*}
\Var[s_2 \psi(f^{(2)}_\theta(\bfx))] &= \mu_{s_2}^2\Var[\psi(f^{(2)}_\theta(\bfx))]+\sigma_{s_2}^2 \Exp{\!}{\psi(f^{(2)}_\theta(\bfx))}^2 + \sigma_{s_2}^2\Var[\psi(f^{(2)}_\theta(\bfx))]\\
&=\Exp{\!}{\psi(f^{(2)}_\theta(\bfx))}^2 + \Var[\psi(f^{(2)}_\theta(\bfx))].
\end{align*}

By \cref{lem:variance-small}, we can find a $\sigma_3$ such that for any $\sigma<\sigma_3$, $\Var[\psi(f^{(2)}_\theta(\bfx))]<\epsilon$.

By the universal function approximator theorem (\cref{thm:universal-function-approximator}) we can find a setting of the mean parameters, $\bm{\mu}$ in the first two layers such that $\|f^{(2)}_{\bm{\mu}}- \sqrt{h}\|_\infty < \epsilon/2$. Since $\sqrt{h(\bfx)}>0$, the ReLU can only make $f^{(2)}$ closer to $\sqrt{h}$, $\|\psi(f^{(2)}_{\bm{\mu}})- \sqrt{h})\|_\infty < \epsilon/2$. 

By \cref{lem:mean-close-to-map}, we can find a setting of $\sigma$ for this $\bm{\mu}$ such that
\[\left\| \Exp{\!}{\psi(f^{(2)}_\theta)} - \psi(f^{(2)}_{\bm{\mu}} )\right\|_\infty < \epsilon/2. \]
By the triangle inequality,
\[
\left\| \Exp{\!}{\psi(f^{(2)}_\theta)}- \sqrt{h} \right\|_\infty < \epsilon.
\]
This implies, 
\begin{align*}
\left\| \Exp{\!}{\psi(f^{(2)}_\theta)}^2- h \right\|_\infty &=  \left\| \left(\Exp{\!}{\psi(f^{(2)}_\theta)}- \sqrt{h}\right)\left(\Exp{\!}{\psi(f^{(2)}_\theta)}+ \sqrt{h}\right) \right\|_\infty \\
& \leq \epsilon \left\|\Exp{\!}{\psi(f^{(2)}_\theta)}+ \sqrt{h}\right\|_\infty\\
& \leq \epsilon (2\|\sqrt{h}\|_\infty+\epsilon)
\end{align*}

We therefore have,
\begin{align*}
\|\Var[f]- h\|_\infty &\leq E(\epsilon) + 2 \Cov(s_1 \psi(f^{(1)}_\theta(\bfx)),s_2 \psi(f^{(2)}_\theta(\bfx))) \\
&\leq E(\epsilon)+2 \sqrt{\Var[s_1 \psi(f^{(1)}_\theta(\bfx))]\Var[s_2 \psi(f^{(2)}_\theta(\bfx))]} \\
&\leq E(\epsilon)+C\sqrt{\epsilon} 
\end{align*}
where the first inequality is Cauchy-Schwarz, and $E(\epsilon)$ is a function that tends to zero with $\epsilon$ and $C$ is a constant. The theorem follows by choosing $\sigma < \min\{\sigma_1, \sigma_2, \sigma_3 \}$.
\end{proof}

The construction in our proof used a 2HL BNN with only two neurons in the second hidden layer. The construction still works for wider hidden layers, by setting the unused neurons to have zero mean and sufficiently small variance.

An analogous statement to \cref{thm:universal} for networks with more than two hidden layers can be proved inductively: applying \cref{thm:universal} for 2HL BNNs we can choose the variance to be uniformly small, thus satisfying the condition stated in \cref{lem:variance-small}. The proof of \cref{lem:samples-near-mean-param} applies equally for the output of 2HL BNNs. The rest of the proof then follows as stated.

\subsection{Proof of \cref{thm:universal-supplement} for MCDO}
In order to prove the universality result for deep dropout, we first prove two lemmas about 1HL dropout networks. The following lemma states that the mean of a 1HL dropout network is a universal function approximator, while its variance can simultaneously be made arbitrarily small.

\begin{lem} \label{lem:dropout-lowvariance}
Consider any $\epsilon>0$ and any continuous function, $m$ mapping from a compact subset, $A$ of $\R^D$ to $\R$. Then there exists a (random) ReLU neural network of the form
\[f(x)  = \sum_{i=1}^I w_{i}\gamma_i \psi\left( \sum_{d=1}^Du_{i,d}x_d+v_i\right)+b\]
with $\gamma_i \stackrel{\mathrm{i.i.d.}}{\sim} \textup{Bern}(1-p)$ such that $\|\Exp{\!}{f}-m\|_\infty <\epsilon$ and $\| \Var[f] \|_\infty \leq \epsilon$.
\end{lem}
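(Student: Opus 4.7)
The plan is to reduce the problem to the standard universal approximation theorem (\cref{thm:universal-function-approximator}) using two simple rescaling tricks: one to cancel the factor of $(1-p)$ that dropout introduces in the mean, and a second ``duplicate-and-scale-down'' trick to drive the variance to zero without changing the mean.

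First, I would apply \cref{thm:universal-function-approximator} to $m$ on $A$ to obtain a deterministic 1HL ReLU network
\[
\tilde{m}(\bfx)=\sum_{i=1}^{I_0} \tilde w_i\,\psi\!\left(\sum_{d=1}^{D}\tilde u_{i,d}x_d+\tilde v_i\right)
\]
with $\|\tilde m-m\|_\infty<\epsilon$. My target is a dropout network $f$ whose expectation equals $\tilde m$ exactly and whose variance is uniformly smaller than $\epsilon$; the mean-error bound will then follow from the triangle inequality applied to $\tilde m$ and $m$.

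Next, I would fix an integer $K\ge 1$ (to be chosen later) and construct $f$ by taking $K$ duplicates of each of the $I_0$ hidden units in $\tilde m$. Concretely, set $I=KI_0$, index the hidden units as pairs $(i,k)$ with $1\le i\le I_0$, $1\le k\le K$, and define $u_{(i,k),d}=\tilde u_{i,d}$, $v_{(i,k)}=\tilde v_i$, $w_{(i,k)}=\tilde w_i/((1-p)K)$, and $b=0$. Since the dropout masks $\gamma_{(i,k)}$ are i.i.d.\ with $\E[\gamma]=1-p$, linearity of expectation immediately gives
\[
\E[f(\bfx)]=\sum_{i=1}^{I_0}\sum_{k=1}^{K}\frac{\tilde w_i}{(1-p)K}(1-p)\psi\!\left(\sum_d \tilde u_{i,d}x_d+\tilde v_i\right)=\tilde m(\bfx),
\]
so $\|\E[f]-m\|_\infty=\|\tilde m-m\|_\infty<\epsilon$, as required.

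For the variance bound, independence of the $\gamma_{(i,k)}$ (and the fact that all other parameters are deterministic) yields
\[
\Var[f(\bfx)]=\sum_{i=1}^{I_0}\sum_{k=1}^{K}\frac{\tilde w_i^{\,2}}{(1-p)^2K^2}\,p(1-p)\,\psi\!\left(\sum_d \tilde u_{i,d}x_d+\tilde v_i\right)^{\!2}=\frac{p}{(1-p)K}\sum_{i=1}^{I_0}\tilde w_i^{\,2}\,\psi\!\left(\sum_d \tilde u_{i,d}x_d+\tilde v_i\right)^{\!2}.
\]
Because $A$ is compact and each $\psi(\cdot)$ is continuous, the sum on the right is a continuous function of $\bfx$ on $A$ and is therefore bounded above by some finite constant $C$ depending only on $\tilde m$. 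Choosing $K>Cp/((1-p)\epsilon)$ then forces $\|\Var[f]\|_\infty\le\epsilon$, completing the proof. The main (though still routine) obstacle is just keeping the scaling factors consistent between the $1/(1-p)$ correction needed for the mean and the $1/K$ correction needed for the variance; the universal approximation step itself is immediate from \cref{thm:universal-function-approximator}, and no subtler argument (e.g.\ about correlations between hidden units) is needed since the dropout masks already factorise.
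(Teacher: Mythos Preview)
Your proof is correct and is essentially the same construction as the paper's: apply the universal approximation theorem, rescale the output weights by $1/(1-p)$ to fix the mean, and then replicate neurons (equivalently, average independent copies of the network) to drive the variance to zero. Your explicit variance computation yielding the $1/K$ scaling is in fact slightly cleaner than the paper's version, which simply bounds $\|\Var[f^{(1)}]\|_\infty$ by a constant and averages.
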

\begin{proof}
By the universal approximation theorem \citet{leshno1993multilayer}, there exists a $J$ and 1HL network of the form,
\[
g(x) = \sum_{j=1}^J \tilde{w_j} \psi\left(\sum_{d=1}^D \tilde{u}_{j,d}x_d+v_j\right) +b,
\]
such that $\|g-m\|_\infty \leq \epsilon$.
Define the dropout network,
\[
f^{(1)}(x) = \sum_{j=1}^J \frac{\tilde{w_j}}{1-p} \psi\left(\sum_{d=1}^D \tilde{u}_{j,d}x_d+v_j\right) +b.
\]
Then $\Exp{\!}{f^{(1)}}=g$, so that $\|\Exp{\!}{f^{(1)}}-m\|_\infty \leq \epsilon$. Let $S=\|\Var[f^{(1)}]\|_\infty < \infty$. 

Define $f = \frac{1}{L}\sum_{\ell=1}^L f^{(1,\ell)}$ where each $f^{(1,\ell)}$ is an independent realisation of $f^{(1)}$. Then $\Exp{\!}{f} = g$ and $\Var[f] =\frac{\Var[f^{(1)}]}{\sqrt{L}} \leq \frac{S}{\sqrt{L}}$. $f$ can be realised by a dropout network by combining $L$ copies of $f^{(1)}$ together with identical weights within each copy and $0$ weights connecting the various copies. Choosing $L=(S/\epsilon)^2$ completes the proof. \qedhere

\end{proof}

The following lemma states that the mean of the MCDO network can approximate any continuous \emph{positive} function, after application of the ReLU non-linearity.

\begin{lem}\label{lem:dropout-variance-phi}
Given a positive mean function $m$ with $0< \delta \leq \|m\|_\infty \leq \Delta$ and a stochastic process $f$ such that $\|\Exp{\!}{f}-m\|_\infty \leq \epsilon \leq \delta$ and $\|\Var[f]\|_\infty \leq \epsilon$, 
\[
\|\Exp{\!}{\psi(f)} - m\|_\infty \leq \epsilon + \frac{\sqrt{\epsilon^2 +\epsilon\left(\Delta+\epsilon\right)^2}}{\delta-\epsilon} = \mcO(\Delta\sqrt{\epsilon}/(\delta-\epsilon))
\] 
and $\|\Var[\psi(f)]\|_\infty \leq \epsilon$. In the big-O notation, we assume $\Delta$ is bounded below by a constant and $\epsilon,\delta$ are bounded above by a constant.
\end{lem}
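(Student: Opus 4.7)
The plan is to handle the two claims separately. The variance bound is immediate: by Lemma 4 (applied pointwise at each $\bfx \in A$), $\Var[\psi(f(\bfx))] \leq \Var[f(\bfx)] \leq \epsilon$, so $\|\Var[\psi(f)]\|_\infty \leq \epsilon$ follows. All the work is in the mean bound.

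For the mean bound, the key identity is $\psi(a) - a = \psi(-a) \geq 0$ for every $a \in \R$. Applying this pointwise and taking expectations,
\begin{equation*}
\E[\psi(f(\bfx))] - m(\bfx) = \E[\psi(-f(\bfx))] + \bigl(\E[f(\bfx)] - m(\bfx)\bigr).
\end{equation*}
By the triangle inequality and the hypothesis $\|\E[f] - m\|_\infty \leq \epsilon$,
\begin{equation*}
\bigl|\E[\psi(f(\bfx))] - m(\bfx)\bigr| \leq \E[\psi(-f(\bfx))] + \epsilon,
\end{equation*}
so it remains to bound $\E[\psi(-f(\bfx))]$ uniformly in $\bfx$.

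To do this I would rewrite $\psi(-f(\bfx)) = |f(\bfx)|\,\mathbbm{1}\{f(\bfx) \leq 0\}$ and apply Cauchy--Schwarz:
\begin{equation*}
\E[\psi(-f(\bfx))] \leq \sqrt{\E[f(\bfx)^2]\,\Pr[f(\bfx) \leq 0]}.
\end{equation*}
The second moment is bounded using $\E[f(\bfx)^2] = \Var[f(\bfx)] + \E[f(\bfx)]^2 \leq \epsilon + (\Delta + \epsilon)^2$, where the pointwise mean bound $|\E[f(\bfx)]| \leq m(\bfx) + \epsilon \leq \Delta + \epsilon$ is used (interpreting the norm hypothesis as a pointwise bound $\delta \leq m(\bfx) \leq \Delta$, as the conclusion requires). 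For the probability, since $\E[f(\bfx)] \geq m(\bfx) - \epsilon \geq \delta - \epsilon > 0$, Chebyshev's inequality gives
\begin{equation*}
\Pr[f(\bfx) \leq 0] \leq \Pr\bigl[|f(\bfx) - \E[f(\bfx)]| \geq \E[f(\bfx)]\bigr] \leq \frac{\Var[f(\bfx)]}{\E[f(\bfx)]^2} \leq \frac{\epsilon}{(\delta - \epsilon)^2}.
\end{equation*}
Substituting both bounds yields $\E[\psi(-f(\bfx))] \leq \sqrt{\epsilon^2 + \epsilon(\Delta+\epsilon)^2}/(\delta - \epsilon)$, and combining with the earlier triangle-inequality step gives the claimed bound.

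The main obstacle (such as it is) lies in choosing the right pair of moment inequalities: Markov applied directly to $\psi(-f)$ only gives a bound in terms of $\E[f^-]$, which is circular, while bounding $\Pr[f \leq 0]$ alone misses the required $\sqrt{\epsilon}$ decay unless combined with a second-moment factor. Cauchy--Schwarz plus Chebyshev is the natural pairing, and the condition $\epsilon \leq \delta$ (strict inequality needed implicitly for the denominator) is exactly what makes Chebyshev applicable. The asymptotic statement $\mcO(\Delta \sqrt{\epsilon}/(\delta-\epsilon))$ then follows by observing that the $\epsilon^2$ term inside the square root is dominated by $\epsilon(\Delta+\epsilon)^2$, and $(\Delta+\epsilon)^2 \lesssim \Delta^2$ once $\epsilon$ is bounded above.
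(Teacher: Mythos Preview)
Your proof is correct and follows essentially the same route as the paper: the variance bound via \cref{lem:relu-variance-bound}, the decomposition $\psi(f)-f=\psi(-f)=-\min(0,f)$, then Cauchy--Schwarz on $\E[|f|\mathbf{1}\{f\le 0\}]$ combined with a second-moment bound and Chebyshev for $\Pr(f\le 0)$. The only difference is cosmetic (you write $\psi(-f)$ where the paper writes $-\min(0,f)$), and you make explicit the implicit reading of the hypothesis as a pointwise bound $\delta\le m(\bfx)\le\Delta$, which the paper's proof also relies on.
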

\begin{proof}
The bound $\|\Var[\psi(f)]\|_\infty \leq \epsilon$ follows from \cref{lem:relu-variance-bound}. We consider the expectation of $\psi(f(\bfx))$ for some arbitrary fixed $\bfx$,
\begin{align*}
    \left\lvert \Exp{\!}{\psi(f(\bfx))} - m(\bfx)\right\rvert &= \left\lvert  \Exp{\!}{f(\bfx)} - m(\bfx) -  \Exp{\!}{\min(0,f(\bfx))}\right\rvert 
    \\&\leq \left\lvert \Exp{\!}{f(\bfx)}  - m(\bfx) \right\rvert + \left\lvert\Exp{\!}{\min(0,f(\bfx))} \right\rvert  \\&
    \leq \epsilon +  \left\lvert\Exp{\!}{\min(0,f(\bfx))}\right\rvert.
\end{align*}
 We therefore bound $\left\lvert\Exp{\!}{\min(0,f(\bfx))}\right\rvert$.
\begin{align*}
    \left\lvert\Exp{\!}{\min(0,f(\bfx))}\right\rvert = \left\lvert\Exp{\!}{f(\bfx) \mathbf{1}\{\bfx: f(\bfx)<0\}}\right\rvert \leq \sqrt{\Exp{\!}{f(\bfx)^2}\mathrm{Pr}(f(\bfx)<0)}.
\end{align*}
The inequality uses Cauchy-Schwarz, that the square of an indicator function is itself and reinterprets the expectation of an indicator function as a probability.  We bound the two terms on the RHS separately.
\begin{align*}
    \Exp{\!}{f(\bfx)^2} = \Var[f(\bfx)] + \Exp{\!}{f(\bfx)}^2  \leq \epsilon + \Exp{\!}{f(\bfx)}^2 \leq \epsilon +\left(m(\bfx)+\epsilon\right)^2\leq \epsilon +\left(\Delta+\epsilon\right)^2
\end{align*}
We use Chebyshev's inequality to bound the probability $f(x)<0$,
\begin{align*}
    \mathrm{Pr}(f(\bfx)<0) &\leq \mathrm{Pr}\left(|f(\bfx)-\Exp{\!}{f(\bfx)}| > m(\bfx)-\epsilon\right)\\
    &\leq \frac{\Var[f(\bfx)]}{(m(\bfx)-\epsilon)^2}\\
    &\leq \frac{\epsilon}{(m(\bfx)-\epsilon)^2}\\
    &\leq \frac{\epsilon}{(\delta-\epsilon)^2}. \hspace{1cm} \qedhere
\end{align*}
\end{proof}
Having collected the necessary lemmas, we provide a construction that proves \cref{thm:universal-supplement}. 

\begin{proof}[Proof of \cref{thm:universal-supplement}]
Consider a 2HL dropout NN. Let the pre-activations in the first hidden layer be collectively denoted $\bfa_1$, and the random dropout masks by $\bm{\epsilon_1}$. Let the second hidden layer have $I+2$ hidden units. Let $\odot$ denote the elementwise product of two vectors of the same length.  Define the pre-activations of two of the second hidden layer units by $a_v = {\bfw^{\mathsf{T}}_v}(\bm{\epsilon_1} \odot \psi(\bfa_1))$, i.e.~both these hidden units have identical weight vectors $\bfw_v$ and dropout masks, and are hence the same random variable. Similarly, let the remaining $I$ second hidden layer pre-activations be defined by $a_m = \bfw_m^{\mathsf{T}}(\bm{\epsilon_1} \odot \psi(\bfa_1))$, again all being the same random variable. Furthermore, let $(\bfw_{v})_i = 0$ whenever $(\bfw_{m})_i \neq 0$ and vice versa, so that the first hidden layer neurons that influence $a_v$ and those that influence $a_m$ form disjoint sets. Then the output of the 2HL network is:
\[
f = \epsilon_a w_{2,a} \psi(a_v) + \epsilon_b w_{2,b} \psi(a_v) + \sum_{i=1}^I \epsilon_{i} w_{2,i} \psi(a_m) + b_{2},
\]
where $\epsilon_a, \epsilon_b, \{ \epsilon_i \}_{i=1}^I$ are the final layer dropout masks and $\{w_{2,i}\}_{i=1}^I, b_{2}$ are the final layer weights and bias. 

We now make the choices $ w_{2,a} = 1, w_{2,b} = -1, w_{2,i} = \alpha$, where $\alpha I = 1/(1-p)$. Then $\Exp{\!}{f} = \Exp{\!}{\psi(a_m)} + b_2$. Let $b_2 = \min_{\bfx \in A} g - \delta$, where $\delta >0$ and the min exists due to compactness of $A$. Define $g' = g - b_2$. Since $a_m$ is just the output of a single-hidden layer dropout network, for any $\gamma' >0$ we can use \cref{lem:dropout-lowvariance} to choose $\|\Exp{\!}{a_m} - g'\|_{\infty} < \gamma'$ and $\|\Var[a_m] \|_{\infty} < \gamma'$. Since $g'$ is bounded below by $\delta$ and bounded above by some $\Delta \in \R$ (by continuity of $g$ and compactness of $A$), we can then apply \cref{lem:dropout-variance-phi} to obtain $\| \Exp{\!}{a_m} - g' \|_{\infty} = \mcO(\Delta \sqrt{\epsilon'}/(\delta- \epsilon'))$ and $\| \Var[\psi(a_m)] \|_\infty < \gamma'$. We can use this to bound the error in the mean of the 2HL network output:
\begin{align*}
    \|\Exp{\!}{f} - g\|_{\infty} &= \| \Exp{\!}{\psi(a_m)} + b_2 - g\|_{\infty}
    =  \|\Exp{\!}{\psi(a_m)} - g'\|_{\infty}
    = \mcO(\Delta \sqrt{\gamma'}/(\delta- \gamma')).
\end{align*}
We can choose $\gamma'$ to depend on $\delta, \Delta$ such that $ \|\Exp{\!}{f} - g\|_{\infty} <\gamma $, proving the first part of the theorem. Next, calculating the variance,
\begin{align}
    \Var[f] &= \Var\left[(\epsilon_a - \epsilon_b)\psi(a_v) + \alpha \psi(a_m) \sum_{i=1}^I \epsilon_i\right] = \Var[(\epsilon_a - \epsilon_b)\psi(a_v)] + \alpha^2\Var\left[\psi(a_m) \sum_{i=1}^I \epsilon_i\right]. \label{eqn:dropout-variance-terms}
\end{align}
Next we show that by taking $I$ sufficiently large, we can make the second term arbitrarily small. We have,
\begin{align*}
    \Var\left[\psi(a_m) \sum_{i=1}^I \epsilon_i\right] &=  \Var[\psi(a_m)] \Var\left[\sum_{i=1}^I \epsilon_i \right] + \Var[\psi(a_m)] \Exp{\!}{\sum_{i=1}^I \epsilon_i}^2 +  \Var\left[\sum_{i=1}^I \epsilon_i \right] \Exp{\!}{\psi(a_m)}^2 \\
    &= \Var[\psi(a_m)] Ip(1-p) + \Var[\psi(a_m)] I^2(1-p)^2 + Ip(1-p) \Exp{\!}{\psi(a_m)}^2\\
    &\leq \gamma'  Ip(1-p) + \gamma' I^2(1-p)^2 + Ip(1-p) \Exp{\!}{\psi(a_m)}^2
\end{align*}
The first two of these three terms can be made arbitrarily small by choosing $\gamma'$ sufficiently small. The third term, upon multiplying by $\alpha^2$, becomes
\begin{align*}
    \alpha^2 Ip(1-p) \Exp{\!}{\psi(a_m)}^2 &= \frac{p}{I(1-p)}\Exp{\!}{\psi(a_m)}^2,
\end{align*}
which can also be made arbitrarily small by choosing $I \in \N$ sufficiently large. We now show that the first term in \cref{eqn:dropout-variance-terms} can well approximate our target variance function $h$.
\begin{align}
    \Var[(\epsilon_a - \epsilon_b)\psi(a_v)] &= \Var[\epsilon_a - \epsilon_b] \Var[\psi(a_v)] + \Var[\epsilon_a - \epsilon_b] \Exp{\!}{\psi(a_v)}^2 + \Var[\psi(a_v)] \Exp{\!}{\epsilon_a - \epsilon_b}^2\\
    &= 2p(1-p) \Var[\psi(a_v)] + 2p(1-p) \Exp{\!}{\psi(a_v)}^2 \label{eqn:dropout-target-variance-approx}
\end{align}
Define
\begin{align*}
    h' = \sqrt{\frac{h}{2p(1-p)}} + \delta',
\end{align*}
for some $\delta' > 0$. Again applying \cref{lem:dropout-lowvariance} (which we can do independently of the choice of $a_m$ since neurons influencing $a_v$ and $a_m$ are disjoint), for any $\gamma'' > 0$ we can choose $\left\|\Exp{\!}{a_v} - h' \right\|_{\infty} < \gamma''$ and $\| \Var[a_v] \|_{\infty} < \gamma''$. The first term in \cref{eqn:dropout-target-variance-approx} can be made arbitrarily small by choosing $\gamma''$ small enough. We can again apply \cref{lem:dropout-variance-phi} so that $\left\|\Exp{\!}{\psi(a_v)} - h' \right\|_{\infty} = \mcO(\Delta'\sqrt{\gamma''}/(\delta' - \gamma''))$. We then bound the difference between the second term in \cref{eqn:dropout-target-variance-approx} and our target variance function:
\begin{align}
    \left\| 2p(1-p) \Exp{\!}{\psi(a_v)}^2 - h \right\|_{\infty} &\!\leq \left\| \sqrt{2p(1-p)} \Exp{\!}{\psi(a_v)} \!+\! \sqrt{h} \right\|_{\infty} \! \left\| \sqrt{2p(1-p)} \Exp{\!}{\psi(a_v)} - \sqrt{h} \right\|_{\infty} \label{eqn:submultiplicativity}\\
    &\hspace{-3.5cm}\leq  \left( \left\|2 \sqrt{h} \right\|_\infty + \left\| \sqrt{2p(1-p)} \Exp{\!}{\psi(a_v) } - \sqrt{h}\right\|_\infty  \right) \left\| \sqrt{2p(1-p)} \Exp{\!}{\psi(a_v)} - \sqrt{h} \right\|_{\infty} \label{eqn:dropout-final}
\end{align}
where \cref{eqn:submultiplicativity} follows from sub-multiplicativity of the infinity norm. Expanding the second term in \cref{eqn:dropout-final},
\begin{align*}
    \left\| \sqrt{2p(1-p)} \Exp{\!}{\psi(a_v)} - \sqrt{h} \right\|_{\infty} &= \sqrt{2p(1-p)}\left\|  \Exp{\!}{\psi(a_v)} - h' + \delta' \right\|_{\infty} \\ &= \mcO(\delta'+\Delta'\sqrt{\gamma''}/(\delta' - \gamma''))
\end{align*}
By first choosing $\delta'$ sufficiently small, and then choosing $\gamma''$ depending on $\delta'$, we can make this error term arbitrarily small. Since all the other contributions to $\Var[f]$ were made arbitrarily small, this allows us to set $\|\Var[f] - h \| < \gamma$, for any $\gamma>0$, completing the proof. \qedhere

\end{proof}

In order to provide an analogous construction for MCDO BNNs with more than 2 hidden layers, we note that the above proof only requires a BNN output with a universal mean function and an arbitrarily small variance function in \cref{lem:dropout-lowvariance}. Instead of a 1HL network, we can apply \cref{thm:universal} to construct a 2 or more hidden layer network to provide these mean and variance functions. The rest of the proof then follows as in the 2HL case.

\section{Dropout With Inputs Dropped Out}\label{app:dropout-drop-inputs}

The behaviour of MC dropout with inputs dropped out is somewhat different, both theoretically and empirically, from the case when inputs are not dropped out as discussed in the main body. 

\subsection{Single-Hidden Layer Networks}
In the single-hidden layer case, the variance is no longer convex as a function of $\bfx$. On the other hand, this approximating family still struggles to represent in-between uncertainty:
\begin{thm}[MC dropout, dropped out inputs]\label{thm:dropout-fixed}
    Consider a single-hidden layer ReLU neural network mapping from $\R^D \to \R^K$ with $I \in \N$ hidden units. The corresponding mapping is given by $f^{(k)}(\bfx) =  \sum_{i=1}^I w_{k,i}\psi \left( \sum_{d=1}^D u_{i,d}x_d+v_i \right)+b_k$ for $1\leq k \leq K$, where $\psi(a) = \max(0, a)$.
    Assume $\bfv$ is set deterministically and 
\begin{align*}
    q(\mathbf{W}, \mathbf{b},\bfU) = q(\bfU)q(\mathbf{b}|\bfU)\prod_{i}q_{i}(\bfw_{i}|\bfU),
\end{align*}
where $\bfw_i = \{w_{k,i}\}_{k=1}^K$ are the weights out of neuron $i$, $\bfb = \{b_k\}_{k=1}^K$ are the output biases and $q(\bfU),$  $q(\bfb|\bfU)$ and $q_i(\bfw_i|\bfU)$ are arbitrary probability densities with finite first two moments. Then, for any finite set of points $\mcS \subset \R^D$ such that $\mathbf{0}$ is in the convex hull of $\mcS$,
\begin{equation}
\Var[f^{(k)}(\mathbf{0})] \leq \max\limits_{\bfs \in \mcS}\left\{\Var[f^{(k)}(\bfs)]\right\} \text{\quad \emph{for} \, \,} 1\leq k \leq K.
\end{equation}
\end{thm}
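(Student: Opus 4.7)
The plan is to decompose $\Var[f^{(k)}(\bfx)]$ via the law of total variance conditional on $\bfU$, writing $\Var[f^{(k)}(\bfx)] = V_1(\bfx) + V_2(\bfx)$ with $V_1(\bfx) := \Exp{\bfU}{\Var[f^{(k)}(\bfx)|\bfU]}$ and $V_2(\bfx) := \Var_\bfU[\Exp{\!}{f^{(k)}(\bfx)|\bfU}]$, then bound the two pieces separately. Writing the hypothesised convex combination $\mathbf{0} = \sum_{\bfs \in \mcS}\lambda_{\bfs}\bfs$ with $\lambda_{\bfs} \geq 0$ and $\sum_\bfs \lambda_\bfs = 1$, the aim is to establish $V_1(\mathbf{0}) + V_2(\mathbf{0}) \leq \sum_\bfs \lambda_\bfs[V_1(\bfs) + V_2(\bfs)] = \sum_\bfs \lambda_\bfs \Var[f^{(k)}(\bfs)]$, from which the claimed max bound follows since a weighted average is at most a maximum.

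The $V_1$ piece is handled directly by \cref{lem:convexconditioned}: conditioned on $\bfU$, the joint $q(\bfW,\bfb|\bfU) = q(\bfb|\bfU)\prod_i q_i(\bfw_i|\bfU)$ has exactly the factorisation required by that lemma, so $\Var[f^{(k)}(\bfx)|\bfU]$ is convex in $\bfx$ for every realisation of $\bfU$. Expectation over $\bfU$ preserves convexity, so $V_1$ is convex on $\R^D$, and Jensen's inequality immediately gives $V_1(\mathbf{0}) \leq \sum_\bfs \lambda_\bfs V_1(\bfs)$.

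The $V_2$ piece is the main obstacle, because unlike $V_1$ it is \emph{not} convex in $\bfx$ in general: the conditional mean $\tilde f(\bfx,\bfU) = \sum_i \bar w_{k,i}(\bfU)\psi(a_i(\bfx;\bfU)) + \bar b_k(\bfU)$ combines $\bfU$-dependent coefficients $\bar w_{k,i}(\bfU) := \Exp{\!}{w_{k,i}|\bfU}$ with $\bfU$-dependent activations, producing non-convex cross terms under $\Var_\bfU$. The structural feature that distinguishes $\bfx=\mathbf{0}$ is that $\bfv$ is deterministic, so $a_i(\mathbf{0};\bfU) = v_i$ is constant in $\bfU$ and $V_2(\mathbf{0}) = \Var_\bfU[\sum_i \psi(v_i)\bar w_{k,i}(\bfU) + \bar b_k(\bfU)]$ depends on $\bfU$ only through $\bar w_{k,i}(\bfU)$ and $\bar b_k(\bfU)$, with deterministic coefficients $\psi(v_i)$. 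The plan is to introduce an independent random index $\bfS$ with $\Pr(\bfS = \bfs) = \lambda_\bfs$ (so $\Exp{\!}{\bfS} = \mathbf{0}$) and leverage Jensen's inequality on the convex ReLU, namely $\psi(v_i) = \psi(\sum_\bfs \lambda_\bfs a_i(\bfs;\bfU)) \leq \sum_\bfs \lambda_\bfs \psi(a_i(\bfs;\bfU))$ pointwise in $\bfU$, together with the conditional independence of $\{\bfw_i\}$ given $\bfU$, to propagate this inequality through the variance computation and conclude $V_2(\mathbf{0}) \leq \sum_\bfs \lambda_\bfs V_2(\bfs)$.

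The main technical hurdle in the $V_2$ step is controlling the covariance that arises when decomposing $\tilde f(\bfs,\bfU) = \tilde f(\mathbf{0},\bfU) + \sum_i \bar w_{k,i}(\bfU)[\psi(a_i(\bfs;\bfU)) - \psi(v_i)]$: the sign of $\Cov_\bfU(\tilde f(\mathbf{0},\bfU),\sum_i \bar w_{k,i}(\bfU)[\psi(a_i(\bfs;\bfU)) - \psi(v_i)])$ is not fixed a priori, since $\bar w_{k,i}(\bfU)$ can take either sign, so a naive variance expansion does not close the argument. The proof must simultaneously use (i) non-negativity of $\psi$, (ii) non-negativity of $\sum_\bfs \lambda_\bfs \psi(a_i(\bfs;\bfU)) - \psi(v_i)$ from the Jensen bound above, (iii) conditional independence across $i$, and (iv) non-negativity of $\Var[w_{k,i}|\bfU]$, to show that replacing the deterministic constants $\psi(v_i)$ by the (on average at least as large) random activations $\psi(a_i(\bfs;\bfU))$ cannot decrease variance on average over $\mcS$.
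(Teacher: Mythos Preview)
Your treatment of $V_1$ is correct and matches the paper exactly: \cref{lem:convexconditioned} gives convexity of $\Var[f^{(k)}(\bfx)|\bfU]$, expectation over $\bfU$ preserves convexity, and then the convex-hull bound applies.

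For $V_2$ you are making the problem far harder than the paper does. The paper does not attempt to show $V_2(\mathbf{0}) \leq \sum_\bfs \lambda_\bfs V_2(\bfs)$ via any comparison argument. Instead it observes that $V_2(\mathbf{0}) = 0$ outright: at $\bfx=\mathbf{0}$ the first-layer output $\psi(a_i(\mathbf{0};\bfU)) = \psi(v_i)$ is a constant in $\bfU$, so $\mathbb{E}[f^{(k)}(\mathbf{0})|\bfU]$ is deterministic and its variance vanishes. Since $V_2(\bfs)\geq 0$ for every $\bfs$, the inequality $V_2(\mathbf{0}) \leq \max_{\bfs\in\mcS} V_2(\bfs)$ is then trivial, and combining with the $V_1$ bound finishes the proof. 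You actually write down the key identity $a_i(\mathbf{0};\bfU)=v_i$ yourself but then fail to draw the consequence.

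You are right to notice that for $\mathbb{E}[f^{(k)}(\mathbf{0})|\bfU]$ to be deterministic one needs the conditional means $\bar w_{k,i}(\bfU)$ and $\bar b_k(\bfU)$ not to depend on $\bfU$; the paper's argument implicitly uses this, and it holds in the MC dropout (and variational Gaussian dropout) setting the theorem is named for, where the dropout masks on different layers are independent. Your attempt to cover the fully general case where $\bar w_{k,i}(\bfU)$ genuinely varies with $\bfU$ does not close: the pointwise Jensen bound $\psi(v_i)\leq \sum_\bfs \lambda_\bfs \psi(a_i(\bfs;\bfU))$ cannot be pushed through $\Var_\bfU[\sum_i \bar w_{k,i}(\bfU)\,\cdot\,]$ when the coefficients $\bar w_{k,i}(\bfU)$ have arbitrary signs and arbitrary correlations in $\bfU$, and none of your ingredients (i)--(iv) controls the resulting cross-covariances. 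So as written the $V_2$ step is a genuine gap, and the fix is simply to use $V_2(\mathbf{0})=0$ as the paper does.
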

In order to prove \cref{thm:dropout-fixed} we use the following simple lemma,
\begin{lem}\label{lem:convexhullbound}
 Let $f: \R^D \to \R$ be a convex function and consider a finite set of points $\mcS \subset \R^D$. Then for any point $\bfr$ in the convex hull of $\mcS$, $f(\bfr) \leq \max \limits_{\bfs \in \mcS}\{f(\bfs)\}$. 
\end{lem}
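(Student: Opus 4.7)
The plan is to invoke the definition of the convex hull together with the standard form of Jensen's inequality for convex functions evaluated at finite convex combinations. Since $\mcS = \{\bfs_1, \dots, \bfs_n\}$ is finite and $\bfr$ lies in its convex hull, by definition there exist nonnegative weights $\lambda_1, \dots, \lambda_n$ with $\sum_{i=1}^n \lambda_i = 1$ such that $\bfr = \sum_{i=1}^n \lambda_i \bfs_i$.

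Next, I would apply the finite form of Jensen's inequality for convex $f$, namely $f\bigl(\sum_{i=1}^n \lambda_i \bfs_i\bigr) \leq \sum_{i=1}^n \lambda_i f(\bfs_i)$, which follows by a simple induction on $n$ from the two-point convexity condition. Combining with the fact that $f(\bfs_i) \leq \max_{\bfs \in \mcS} f(\bfs)$ for every $i$ and that the $\lambda_i$ sum to one gives $f(\bfr) \leq \sum_{i=1}^n \lambda_i f(\bfs_i) \leq \max_{\bfs \in \mcS} f(\bfs)$, which is the desired bound.

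There is no substantive obstacle here: the lemma is a textbook consequence of convexity, and the argument is a three-line chain of inequalities. The only minor step worth stating explicitly (should one wish to be fully self-contained) is the inductive extension of the two-point convexity inequality to an arbitrary finite convex combination.
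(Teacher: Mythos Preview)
Your proposal is correct and follows essentially the same approach as the paper: both arguments reduce to the inductive extension of the two-point convexity inequality to a finite convex combination, with the paper inducting directly on the max bound while you invoke Jensen's inequality first and then bound the weighted average by the maximum. The difference is purely organizational.
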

\begin{proof}[Proof of \cref{lem:convexhullbound}]
Let $\{ \bfs_n \}_{n=1}^N = \mcS_N \subset \R^D$. We proceed by induction. The lemma is true for $N=2$ by the definition of convexity. Assume it is true for $N$. Let $\mathrm{Conv}(\mcS_{N+1})$ denote the convex hull of $\mcS_{N+1}$. Consider a point $\bfr_{N+1} \in \mathrm{Conv}(\mcS_{N+1})$. Then
\begin{align}
    f(\bfr_{N+1}) = f\left( \sum_{n=1}^{N+1} \alpha_n \bfs_n \right)
\end{align}
with $\sum_{n=1}^{N+1} \alpha_n = 1$ and $\alpha_n \geq 0$ for $1 \leq n \leq N+1$. We can write
\begin{align}
    f(\bfr_{N+1}) &= f\left( \left(\sum_{n=1}^N \alpha_{n}\right)\bft_N  + \alpha_{N+1} \bfs_{N+1} \right) \leq \max \{ f(\bft_N), f(\bfs_{N+1}) \} \label{eqn:convexhullbound}
\end{align}
where $\bft_N \coloneqq \sum_{n=1}^N  \alpha_n \bfs_n \Big/ \sum_{n=1}^N \alpha_{n}  $, and we have used the convexity of $f$. By the induction assumption, $f(\bft_N) \leq \max \limits_{\bfs \in \mcS_N}\{f(\bfs)\}$, since $\bft_N \in \mathrm{Conv}(\mcS_N)$. Combining this with \cref{eqn:convexhullbound} completes the proof. \qedhere
\end{proof}

\begin{proof}[Proof of \cref{thm:dropout-fixed}]
By the law of total variance,
\[ \Var[f^{(k)}(\mathbf{x})] = \mathbb{E}[ \Var[f^{(k)}(\mathbf{x})|\mathbf{U}]] + \Var[\mathbb{E}[f^{(k)}(\mathbf{x})|\mathbf{U}]]. \]
Using \cref{lem:convexconditioned}, $\Var[f^{(k)}(\mathbf{x})|\mathbf{U}]$ is convex as a function of $\bfx.$ As the expectation of a convex function is convex, the first term is a convex function of $\bfx.$ This implies 
\[
 \mathbb{E}[ \Var[f^{(k)}(\mathbf{0})|\mathbf{U}]] \leq \max\limits_{\bfs \in \mcS}\left\{\mathbb{E}[\Var[f^{(k)}(\bfs)|\bfU]]\right\},
\]
by \cref{lem:convexhullbound}. $\Var[\mathbb{E}[f^{(k)}(\mathbf{x})|\mathbf{U}]]$ is non-negative everywhere. As the output of the first layer is independent of the matrix $\bfU$ at $\bfx=\mathbf{0},$ $\mathbb{E}[f^{(k)}(\mathbf{0})|\mathbf{U}]$ is deterministic. So $\Var[\mathbb{E}[f^{(k)}(\mathbf{0})|\mathbf{U}]] = 0$, completing the proof.
\end{proof}
\begin{figure}[ht]
\begin{center}
\includegraphics[width=0.2\columnwidth,trim={0 .22cm 0 0},clip]{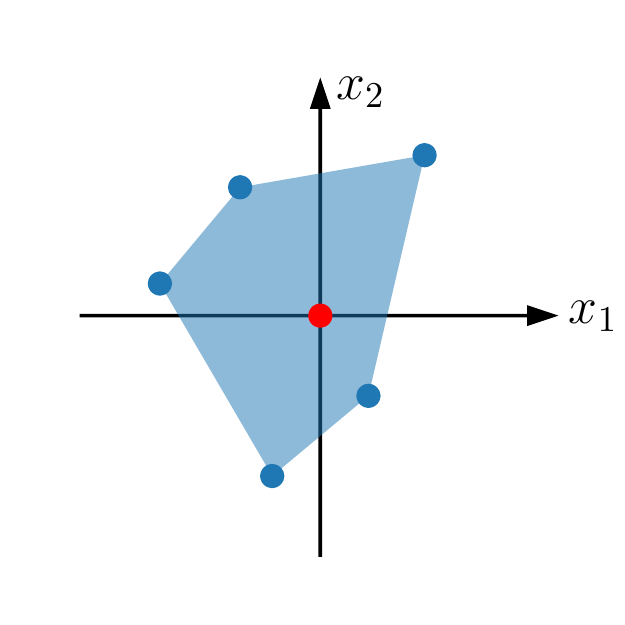}
\caption{Schematic illustration of the bound in \cref{thm:dropout}, showing the input domain of a single-hidden layer MC dropout BNN, for the case $\bfx \in \R^2$. The convex hull (in blue) of the blue points contains the origin. Hence the variance at the origin cannot exceed the variance at any of the blue points.}
\label{fig:dropout-illustration}
\end{center}
\end{figure}
\subsection{Deep Networks}
In the case when the network has several hidden layers, dropout with inputs dropped defines a posterior with somewhat strange properties, as observed in \citet[Section 4.2.1]{gal2016uncertainty}. In particular, in $D$ dimensions, a typical sample function from the approximate posterior will be constant as a function of roughly $pD$ of the input dimensions. However, which dimensions it is constant along depends on the particular sample. This behaviour is unlikely to be shared by the exact posterior. We are able to exploit this type of behaviour to show that if inputs are dropped out, there are simple combinations of mean and variance functions that cannot be simultaneously approximated by the corresponding approximating family. 

\begin{prop}
Consider $f$ the (stochastic) output of an MC dropout network of arbitrary depth with inputs dropped out. For any $x,x' \in \R$ such that $\Var[f(x)],\Var[f(x')]<\epsilon^2$, $|\Exp{\!}{f(x)}-\Exp{\!}{f(x')}| \leq 2\epsilon\sqrt{2/p}$. 
\end{prop}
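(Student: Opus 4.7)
The plan is to condition on the input-layer dropout mask. Since the input $x$ is one-dimensional, input dropout amounts to multiplying $x$ by a single Bernoulli variable $\gamma \sim \Bern(1-p)$, so $f(x)$ depends on $x$ only through the product $\gamma x$. In particular, on the event $\{\gamma = 0\}$, the output is a random variable (depending on the other weights and masks of the network) that is the same regardless of whether the input was $x$ or $x'$. This observation is the entire engine of the proof: it forces a two-point structure on the conditional mean of $f(x)$.

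Concretely, I would define $m(x) := \Exp{\!}{f(x)\,\vert\,\gamma = 1}$ and $\mu_0 := \Exp{\!}{f(x)\,\vert\,\gamma = 0}$, the key point being that $\mu_0$ is independent of $x$. By the tower property,
\[
\Exp{\!}{f(x)} - \Exp{\!}{f(x')} = (1-p)\bigl(m(x) - m(x')\bigr),
\]
so it suffices to bound $|m(x) - m(x')|$ in terms of the variances.

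Next, I would apply the law of total variance and discard the non-negative $\Exp{\!}{\Var[f(x)\,\vert\,\gamma]}$ contribution. The remaining term $\Var[\Exp{\!}{f(x)\,\vert\,\gamma}]$ is the variance of a two-point distribution taking value $\mu_0$ with probability $p$ and $m(x)$ with probability $1-p$, which equals $p(1-p)(m(x) - \mu_0)^2$. The hypothesis $\Var[f(x)] < \epsilon^2$ therefore yields $|m(x) - \mu_0| < \epsilon/\sqrt{p(1-p)}$, and the same argument at $x'$ gives $|m(x') - \mu_0| < \epsilon/\sqrt{p(1-p)}$. The triangle inequality produces $|m(x) - m(x')| < 2\epsilon/\sqrt{p(1-p)}$, and multiplying by $(1-p)$ gives
\[
|\Exp{\!}{f(x)} - \Exp{\!}{f(x')}| < 2\epsilon\sqrt{(1-p)/p} \leq 2\epsilon\sqrt{2/p},
\]
where the last (loose) inequality uses $1 - p \leq 2$.

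There is no serious obstacle once the input mask $\gamma$ is isolated; the only conceptual step is recognising that conditioning on $\gamma$ splits the problem into a degenerate branch where $f$ ignores its input and a non-degenerate branch, and that matching the degenerate branch's constant mean $\mu_0$ is forced by the smallness of the total variance. If one wanted the sharper constant $\sqrt{(1-p)/p}$ rather than $\sqrt{2/p}$, the same argument gives it; the form stated in the proposition is simply the cleaner bound.
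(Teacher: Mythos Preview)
Your proof is correct, but takes a genuinely different route from the paper's. The paper argues via a coupling: since with probability at least $p$ the input mask is zero, one has $\Pr(f(x)=f(x'))\ge p$ on the coupled realisation. Chebyshev at each of $x,x'$ then shows that the three events $\{|f(x)-\Exp{}{f(x)}|\le r\epsilon\}$, $\{|f(x')-\Exp{}{f(x')}|\le r\epsilon\}$, $\{f(x)=f(x')\}$ have nonempty intersection once $r>\sqrt{2/p}$, and the triangle inequality on any such realisation gives $|\Exp{}{f(x)}-\Exp{}{f(x')}|\le 2r\epsilon$; a limit $r\downarrow\sqrt{2/p}$ finishes.

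By contrast you never couple $f(x)$ and $f(x')$: you work purely with marginals, conditioning on the single input mask $\gamma$ and using the law of total variance to pin the conditional mean $m(x)$ to the common value $\mu_0$. This is cleaner (no Chebyshev, no limiting $\delta$) and in fact yields the strictly sharper constant $2\epsilon\sqrt{(1-p)/p}$ before you relax it to match the statement; your last inequality $1-p\le 2$ could equally well be $1-p\le 1$. The paper's argument, on the other hand, generalises more readily to any situation where one only knows $\Pr(f(x)=f(x'))\ge p'$ for some $p'$, without needing the specific two-branch structure that makes your total-variance decomposition work.
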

\begin{proof}
With probability $p$, the input is dropped out, so $\mathrm{Pr}(f(x)=f(x')) \geq p$. We apply Chebyshev's inequality giving the bounds,
\begin{equation*}
    \mathrm{Pr}(|f(x) - \Exp{\!}{f(x)} | \leq r \epsilon) \geq 1-1/r^2 \quad \text{ and } \quad  \mathrm{Pr}(|f(x') - \Exp{\!}{f(x')} | \leq r \epsilon) \geq 1-1/r^2. 
\end{equation*}
for any $r>0$. Choose $r=\sqrt{2/p}+\delta$ for any $\delta>0$, then there exists a realisation of the dropout network such that $|f(x) - \Exp{\!}{f(x)} | \leq r \epsilon$, $|f(x') - \Exp{\!}{f(x')} | $ and $f(x)=f(x')$ simultaneously. Consequently,
\begin{align*}
    |\Exp{\!}{f(x)}-\Exp{\!}{f(x')}| & = |\Exp{\!}{f(x)} - f(x) + f(x) - \Exp{\!}{f(x')}| \\
    &= |\Exp{\!}{f(x)} - f(x) + f(x') - \Exp{\!}{f(x')}| \\
    & \leq |\Exp{\!}{f(x)} - f(x)| + |f(x') - \Exp{\!}{f(x')}| \\
    & \leq 2r \epsilon = 2\epsilon\sqrt{2/p}+2\epsilon \delta. 
\end{align*}
Taking the limit as $\delta \to 0$ completes the proof.\qedhere
\end{proof}

In other words we can bound the difference in the mean output at two points in terms of the uncertainty at those points and the dropout probability. 

In $D>1$ dimensions, we can get similarly tight bounds on lines parallel to a coordinate axis: for $\bfx,\bfx'$ on such a line $\mathrm{Pr}(f(\bfx)=f(\bfx')) \geq p$ still holds. If the dimension on which $\bfx$ and $\bfx'$ differ is dropped out $f(\bfx)=f(\bfx')$.

Alternatively in $D$ dimensions for arbitrary $\bfx,\bfx'\in \R^D$, $\mathrm{Pr}(f(\bfx)=f(\bfx')) \geq p^D$. This comes from noting that with probability $p^D$ the output of the network is a constant function. However, we note this bound becomes exponentially weak as the input dimension increases.

\subsection{Details of Experiments Minimising Squared Loss}\label{app:approximator-details}
We generated a dataset that consisted of two separated clusters in one dimension. We then fit a Gaussian process to the dataset and computed the predictive mean and variance on a one-dimensional grid of $N=40$ points, call these point $X$. Let $\mu(X) \in \R^N$ denote the mean of the GP regression at these points $\sigma^2(X) \in \R^N$ denote its variance. We define a loss function as
\[
\mcL(\phi) = \|\Exp{q_\phi}{f(X)}-\mu(X)\|_2^2+ \|\Var_{q_\phi}[f(X)]-\sigma^2(X)\|_2^2.
\]
The expectation and variance are Monte Carlo estimated using 128 samples. We use full batch optimisation with ADAM with learning rate $1 \times 10^{-3}$ for 50,000 iterations. A dropout rate of $0.05$ is used for MCDO. Weights and biases are initialized at the prior for MFVI.

\section{Details and Additional Figures for Section \ref{sec:deep-empirical}}\label{app:details-effect-of-depth}
In this appendix, we provide details of the protocol used to generate \cref{fig:deep-box}.

\subsection{Experimental Details}

\paragraph{Data:} We consider the dataset from \cref{fig:2D_dataset} with $\bfx \in \overrightarrow{\bfp\bfq}$, where $\bfp=(-1.2,-1.2)$ and $\bfq=(1.2,1.2)$ i.e.~the line segment between and including the two data clusters. We evaluate the overconfidence ratio on a discretisation of $\overrightarrow{\bfp\bfq}$.

\paragraph{Choosing the Prior:} For each depth a fully-connected ReLU network with 50 hidden units per layer is used. The prior mean for all parameters is chosen to be $0$. The prior standard deviation for the bias parameters is chosen as $\sigma_b=1$ for all experiments. In \cref{fig:deep-box}, the prior weight standard deviation is selected so that the prior standard deviation in function space at the region containing data is approximately constant. In particular, let $\sigma_w/\sqrt{H}$ be the prior standard deviation of each weight, where $H$ is the number of inputs to the weight matrix. We choose $\sigma_w = \{4, 3, 2.25, 2, 2, 1.9, 1.75, 1.75, 1.7, 1.65\}$ for depths 1-10 respectively, which ensures the prior standard deviations (of both the GP and the BNN) in function space at the points $(1,1)$ and $(-1,-1)$ (the centres of the data clusters) are between $10$ and $15$. Choosing a fixed $\sigma_w$ such as $\sigma_w=4$ for all depths would have caused the prior standard deviation in function space to grow unreasonably large with increasing depth; see \citet{schoenholz2016deep}. All models used a fixed Gaussian likelihood with standard deviation $0.1$.

\paragraph{Fitting the GP:} The Gaussian process was implemented using GPFlow \citep{GPflow2017} with the infinite-width ReLU BNN kernel implemented following \cite{lee2017deep}. All hyperparameters were fixed and exact inference was performed using Cholesky decomposition. 

\paragraph{Fitting MFVI:} We initialize the standard deviations of weights to be small and train for many epochs, following \citet{tomczak2018neural, swaroop2019improving} who found this led to good predictive performance. The weight means in each weight matrix were initialised by sampling from $\mcN(0, 1/\sqrt{2n_{\mathrm{out}}})$, where $n_{\mathrm{out}}$ is the number of outputs of the weight matrix. The weight standard deviations were all initialised to a very small value of $1 \times 10^{-5}$, (we tried a larger initialization with weight standard deviations initialized to $1\times 10^{-2.5}$ and found no significant difference). Bias means were initialised to zero, with the variances initialised to the same small value as the weight variance. 100,000 iterations of full batch training on the dataset were performed using ADAM with a learning rate of $1 \times 10^{-3}$. The ELBO was estimated using 32 Monte Carlo samples during training. The local reparameterisation trick was used \citep{kingma2015variational}. The predictive distribution at test time was estimated using 500 samples from the approximate posterior.

\paragraph{Fitting MCDO:} The weights and biases were initialised using the default \verb torch.linear ~initialisation. The dropout rate was fixed at $p=0.05$. The $\ell^2$ regularisation parameter was set following \citet[Section 3.2.3]{gal2016uncertainty} for the given prior, in such a way that the `KL condition' is met, in the interpretation of dropout training as variational inference. 100,000 iterations of full batch training on the dataset were performed using ADAM with a learning rate of $1 \times 10^{-3}$. The dropout objective was estimated using 32 Monte Carlo samples during training. The predictive distribution at test time was estimated using 500 samples from the approximate posterior. 

\paragraph{Fitting HMC:} For HMC on the 1HL BNN, 250,000 samples of HMC were taken using the NUTS implementation in Pyro \citep{hoffman2014no,bingham2018pyro} after 10,000 warmup steps. For the 2HL case, 1,000,000 samples of HMC were taken after 20,000 warmup steps. We set the maximum tree depth in NUTS to 5, and adapt the step size and mass matrix during warmup.

\subsection{Additional Figures}
In order to assess the robustness of our findings to different choices of prior, we also consider the same experiment with $\sigma_w=\sqrt{2}$ for all depths. We choose this prior as it leads to similar variances in function space as depth increases \citep{schoenholz2016deep}. We note that the variance of this prior is significantly smaller than the variance of the prior in the previous setting. The corresponding box plot is shown in \cref{fig:deep-box-root2}. With this prior both methods tend to be less over-confident between data clusters, but more underconfident at the data, especially as depth increases (see \cref{fig:ratio-function}). 
\begin{figure}
    \centering
    \includegraphics[width=.5\textwidth]{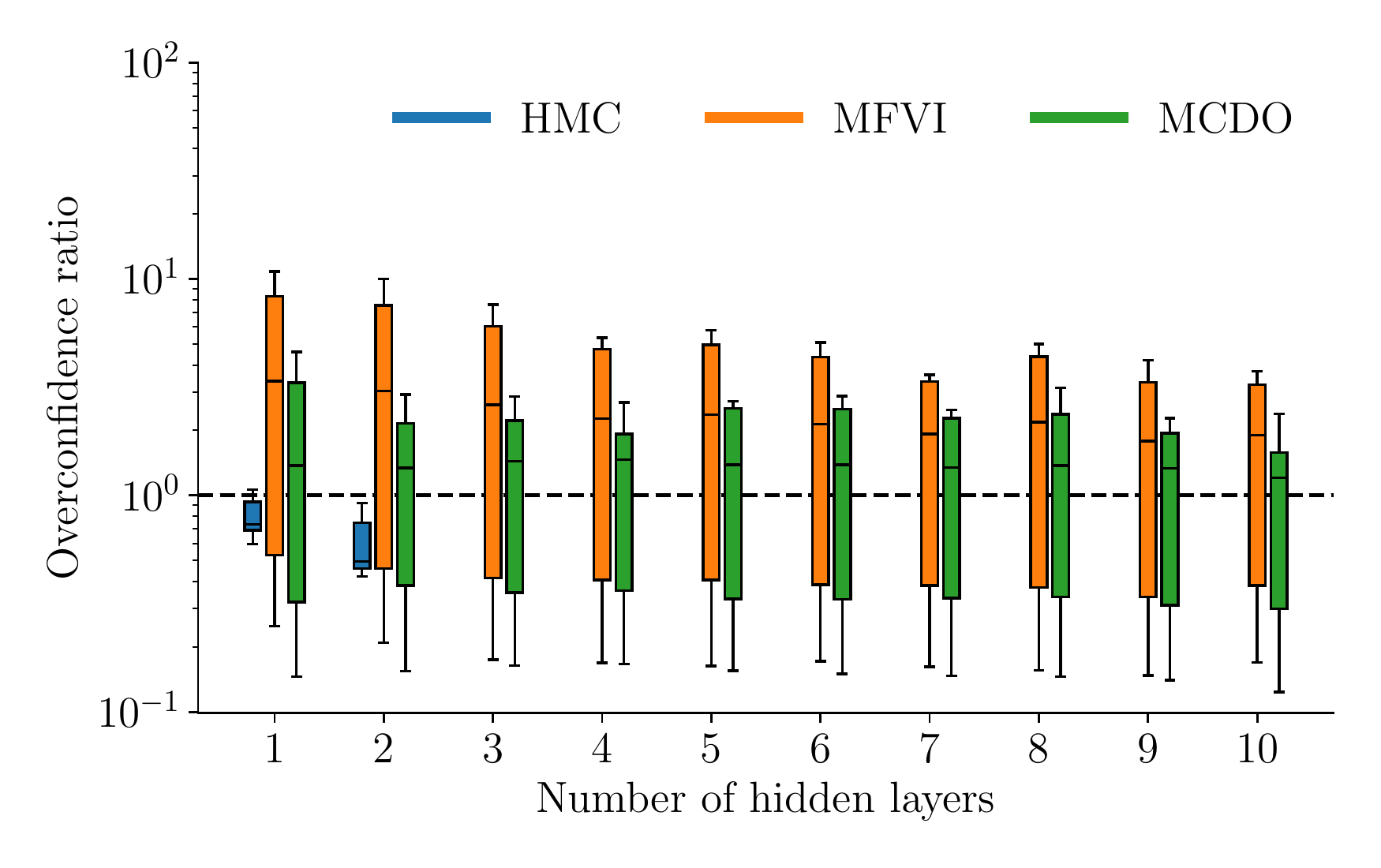}
    \caption{Boxplots of the overconfidence ratios of HMC, MFVI and MCDO  relative to exact inference in an infinite width BNN (GP) with $\sigma_w=\sqrt{2}$.}
    \label{fig:deep-box-root2}
\end{figure}
\begin{figure}
    \centering
    \begin{subfigure}[b]{0.4\textwidth}
    \includegraphics[width=\textwidth]{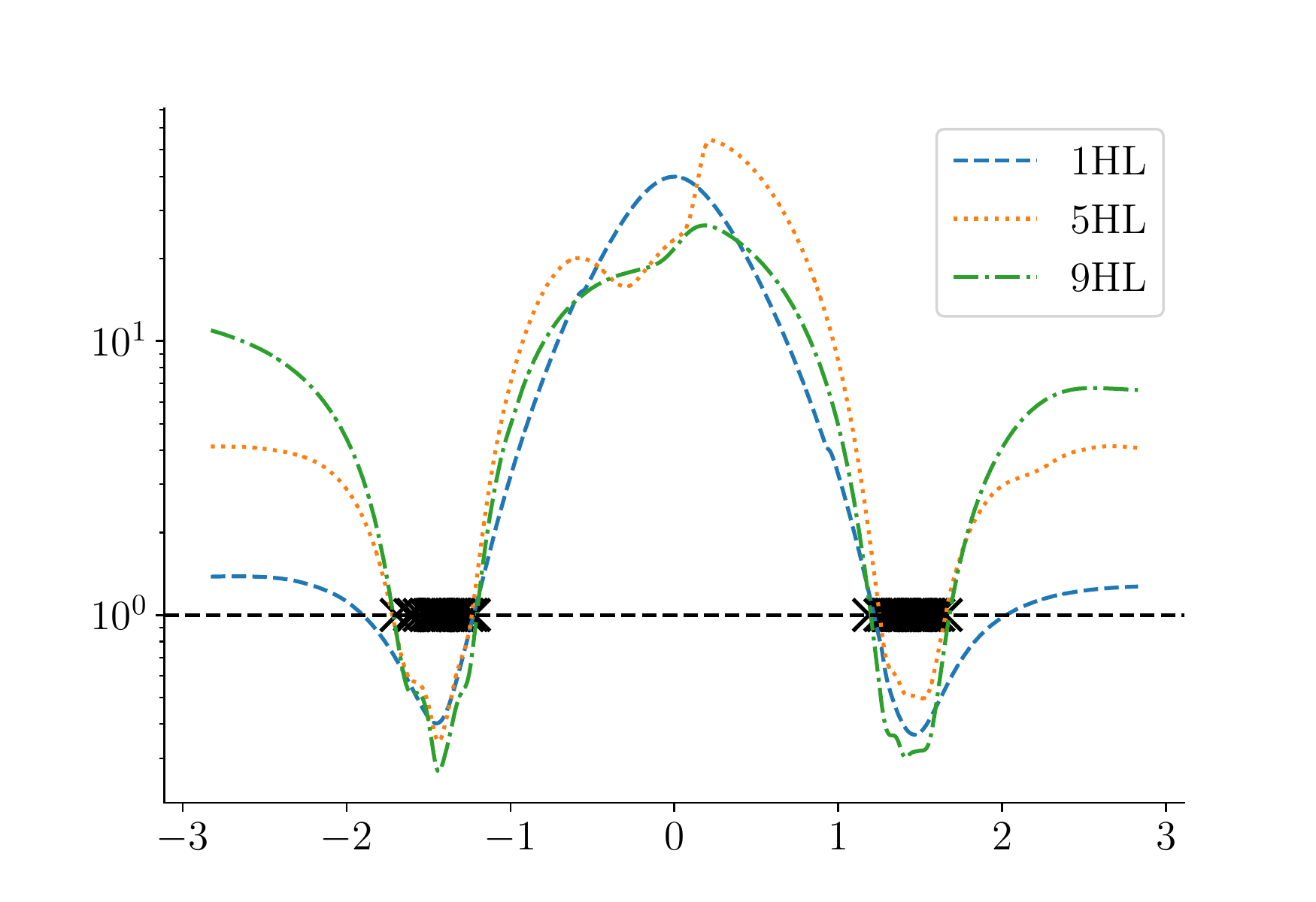}
    \caption{Mean Field VI}
    \end{subfigure}
    \begin{subfigure}[b]{0.4\textwidth}
    \includegraphics[width=\textwidth]{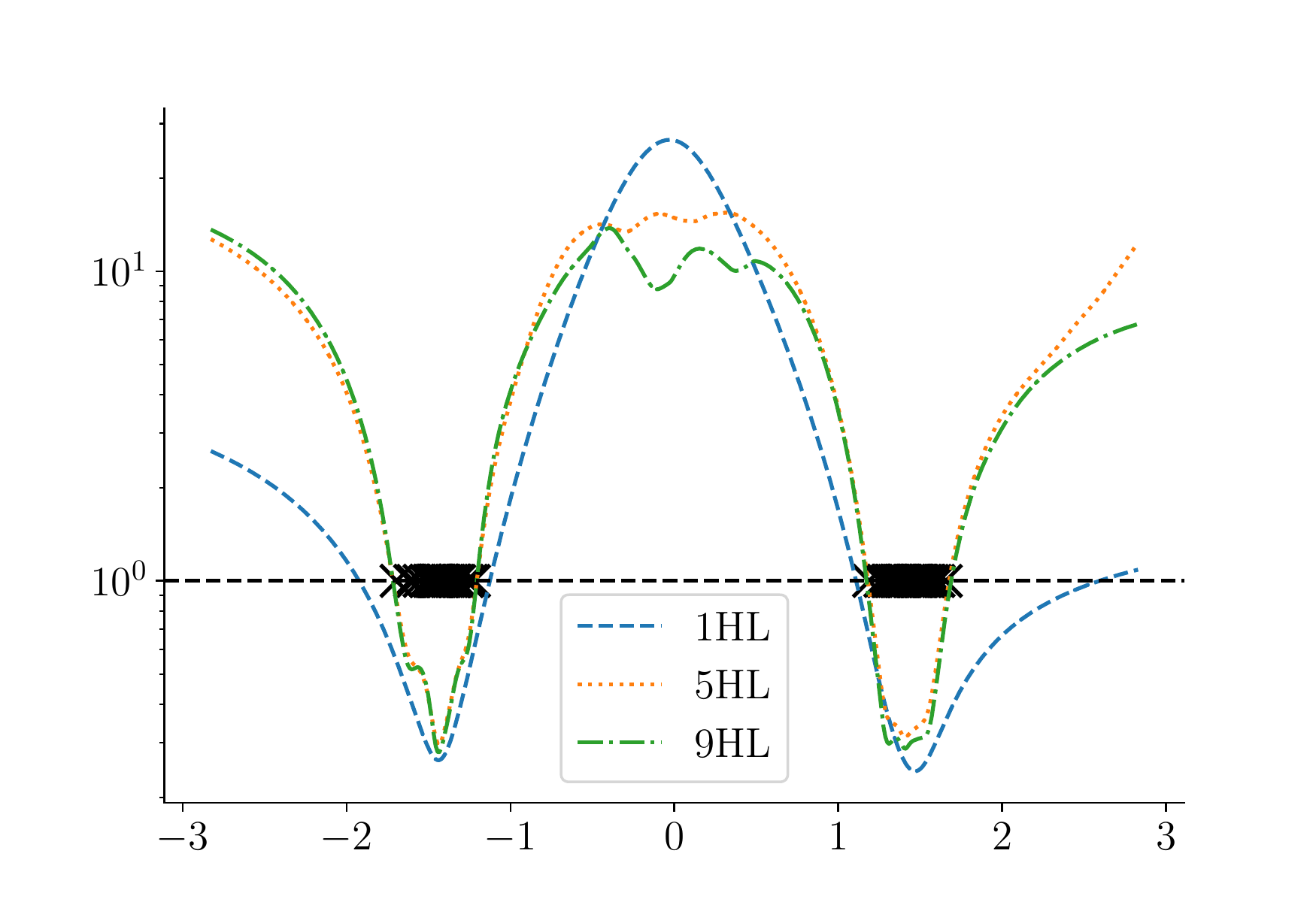}
        \caption{MC Dropout}
    \end{subfigure}
    \begin{subfigure}[b]{0.4\textwidth}
    \includegraphics[width=\textwidth]{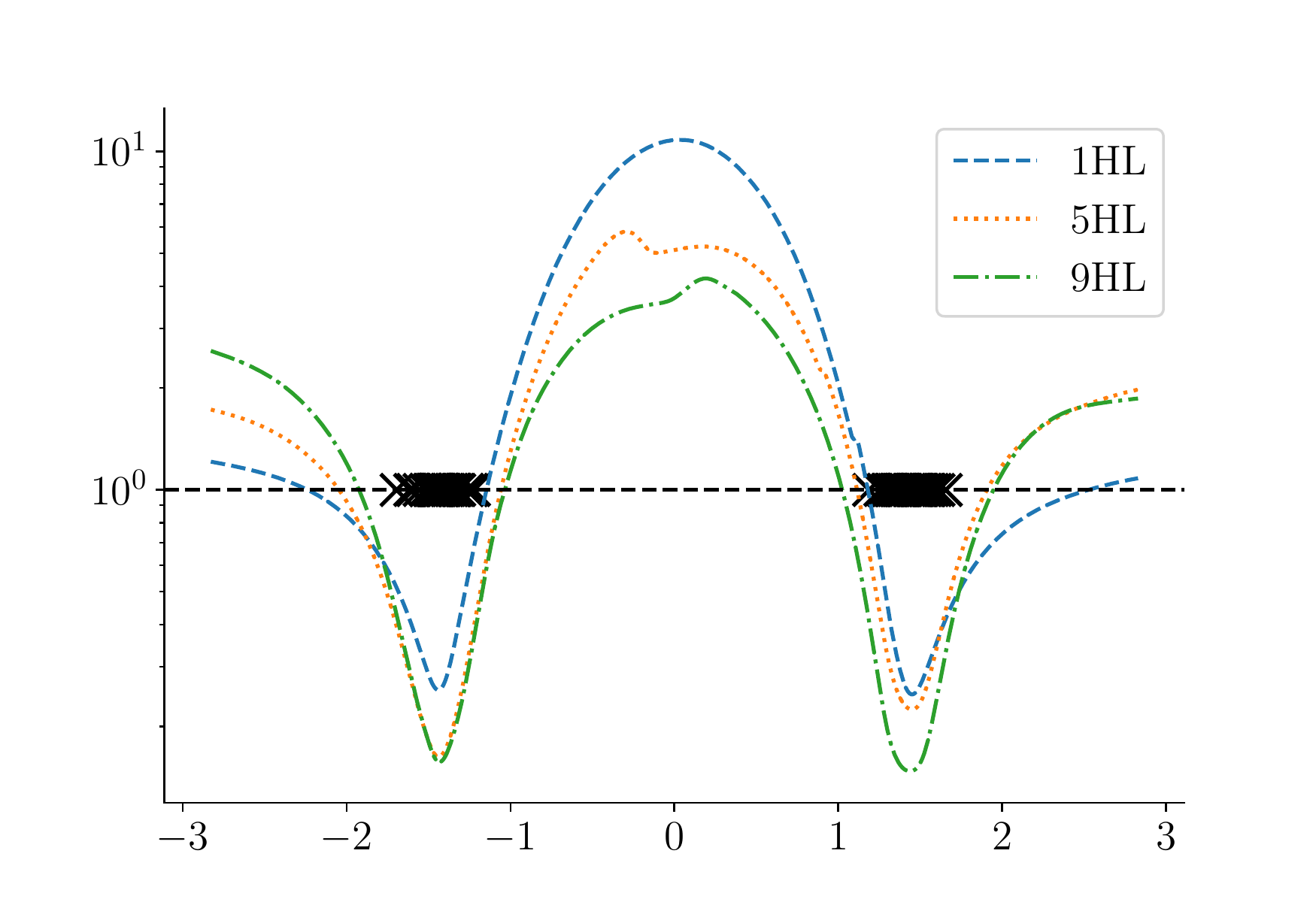}
    \caption{Mean Field VI ($\sigma_w=\sqrt{2}$ prior)}
    \end{subfigure}
    \begin{subfigure}[b]{0.4\textwidth}
    \includegraphics[width=\textwidth]{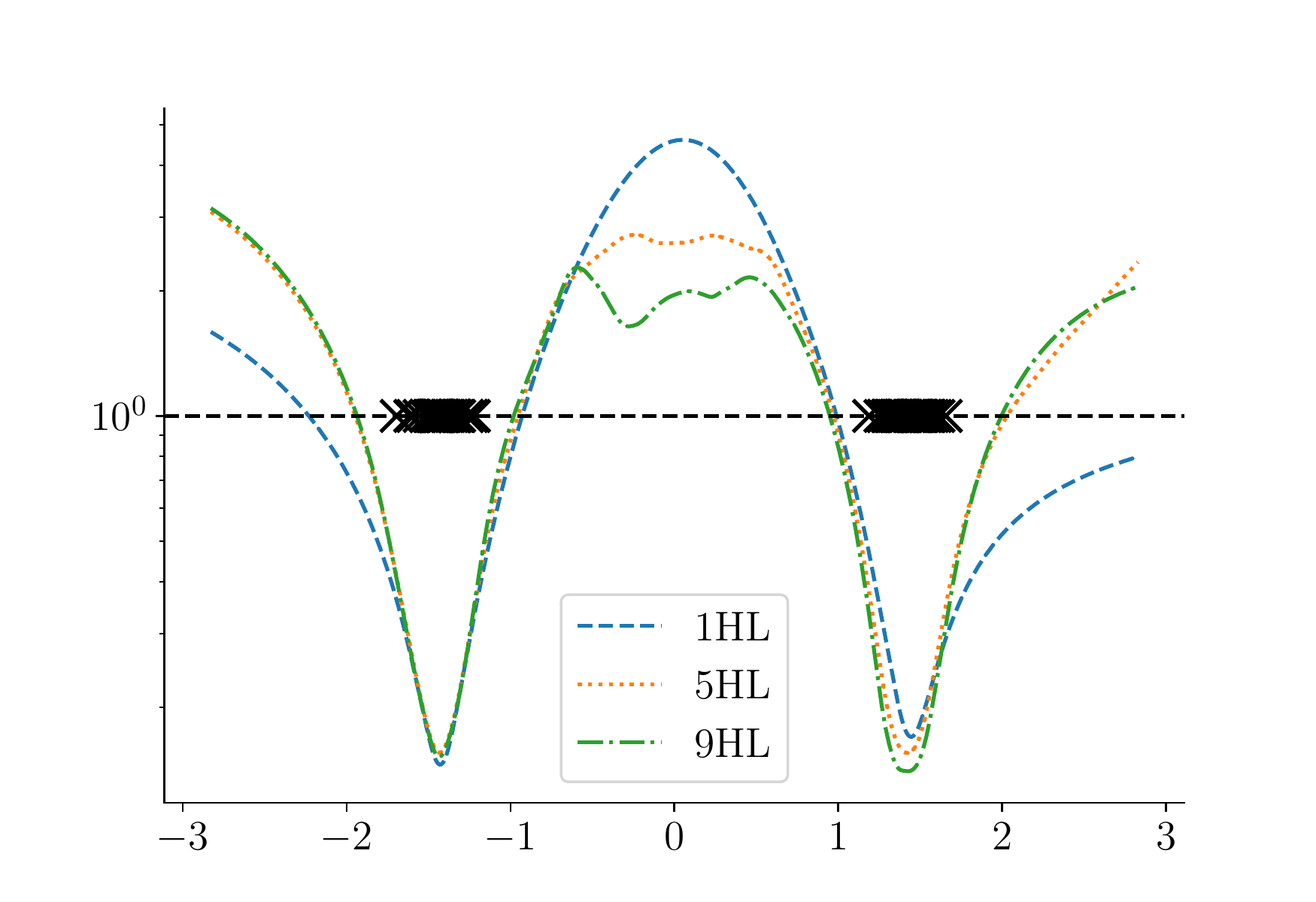}
        \caption{MC Dropout ($\sigma_w=\sqrt{2}$ prior)}
    \end{subfigure}
        \caption{Plots of the overconfidence ratio against $\lambda$ (where $\lambda$ is defined as in \cref{fig:2D_dataset}) for several depths of neural networks with $\sigma_w = 4, 2, 1.7$ for 1, 5 and 9 hidden layers respectively (top), $\sigma_w=\sqrt{2}$ for all depths (bottom). Projections of the datapoints onto the diagonal slice between the clusters are shown as black crosses (\ding{53}). We see that both MCDO and MFVI are overconfident ($>1$) in between data, and underconfident ($<1$) at the locations where we have observed data, relative to the GP reference.}
    \label{fig:ratio-function}
\end{figure}

\section{Initialisation of VI}\label{app:initialise-VI}
In order to assess whether the variational objective (ELBO) or optimisation is primarily responsible for the lack of in-between uncertainty when performing MFVI and MCDO, we considered the effect of initialisation on the quality of the posterior obtained after variational inference. In order to find setting of the weights so that the posterior distribution in function space was close to the exact posterior in function space, we initialised the weights of the network by training the network using mean squared loss between the mean and variance functions of a reference posterior and the approximate posterior (as in \cref{fig:universal}). The reference posterior was obtained by fitting the limiting GP on the dataset (shown in crosses). We used these weights as an initialisation for variational inference. The noise variance was fixed to the true noise variance that generated the data. The data itself was sampled from the limiting GP prior, so that the model should be able to fit the data well. 

Two-hidden layer MFVI and MCDO networks were used, with 50 hidden units in both layers. The solution found by minimising squared loss for 50,000 iterations between the mean and variance functions and a reference posterior may lead to distributions over weights such that the KL from these distributions to the prior is high. This can lead to very high values of the variational objective function. To alleviate this behaviour, we gradually interpolate between the squared-error loss and the variational objective, by taking convex combinations of the losses. Call the function space squared loss $L_1$ and the standard variational objective $L_2$. Then after the first 50,000 iterations of using $L_1$, we train for 10,000 iterations using $.9 L_1+.1L_2$, 10,000 iterations using $.8 L_1+ .2 L_2$ and so on until we are only training using $L_2$. We then train for 100,000 iterations using just $L_2$, to ensure the variational objective has converged. \Cref{fig:loss-initialization} shows that the obtained posterior still lacks in-between uncertainty, providing evidence that this may be due to the objective function rather than overfitting. 

\begin{figure}
    \centering
    \begin{subfigure}[b]{0.4\textwidth}
    \includegraphics[width=\textwidth]{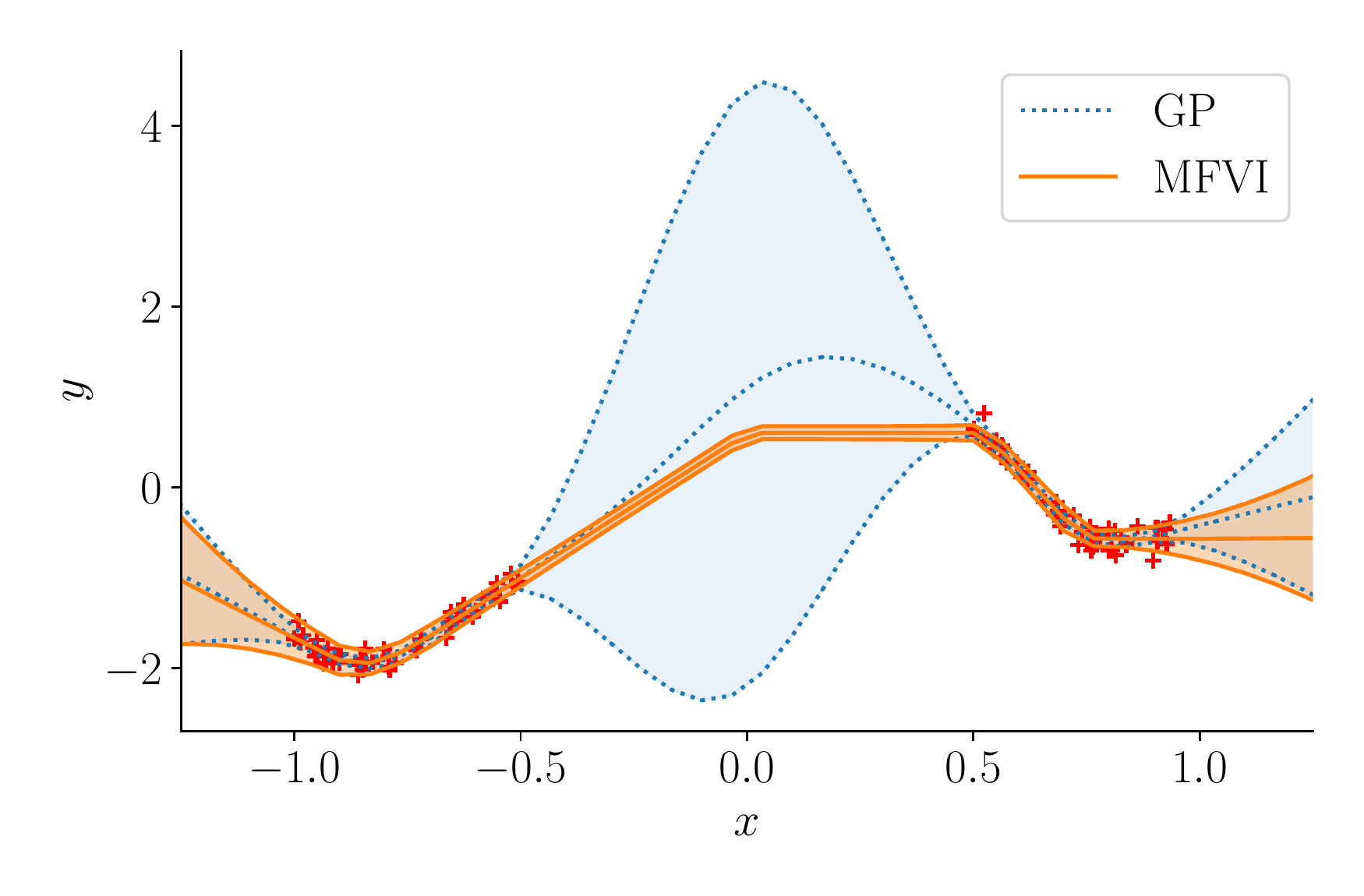}
    \caption{Mean Field VI}
    \end{subfigure}
    \begin{subfigure}[b]{0.4\textwidth}
    \includegraphics[width=\textwidth]{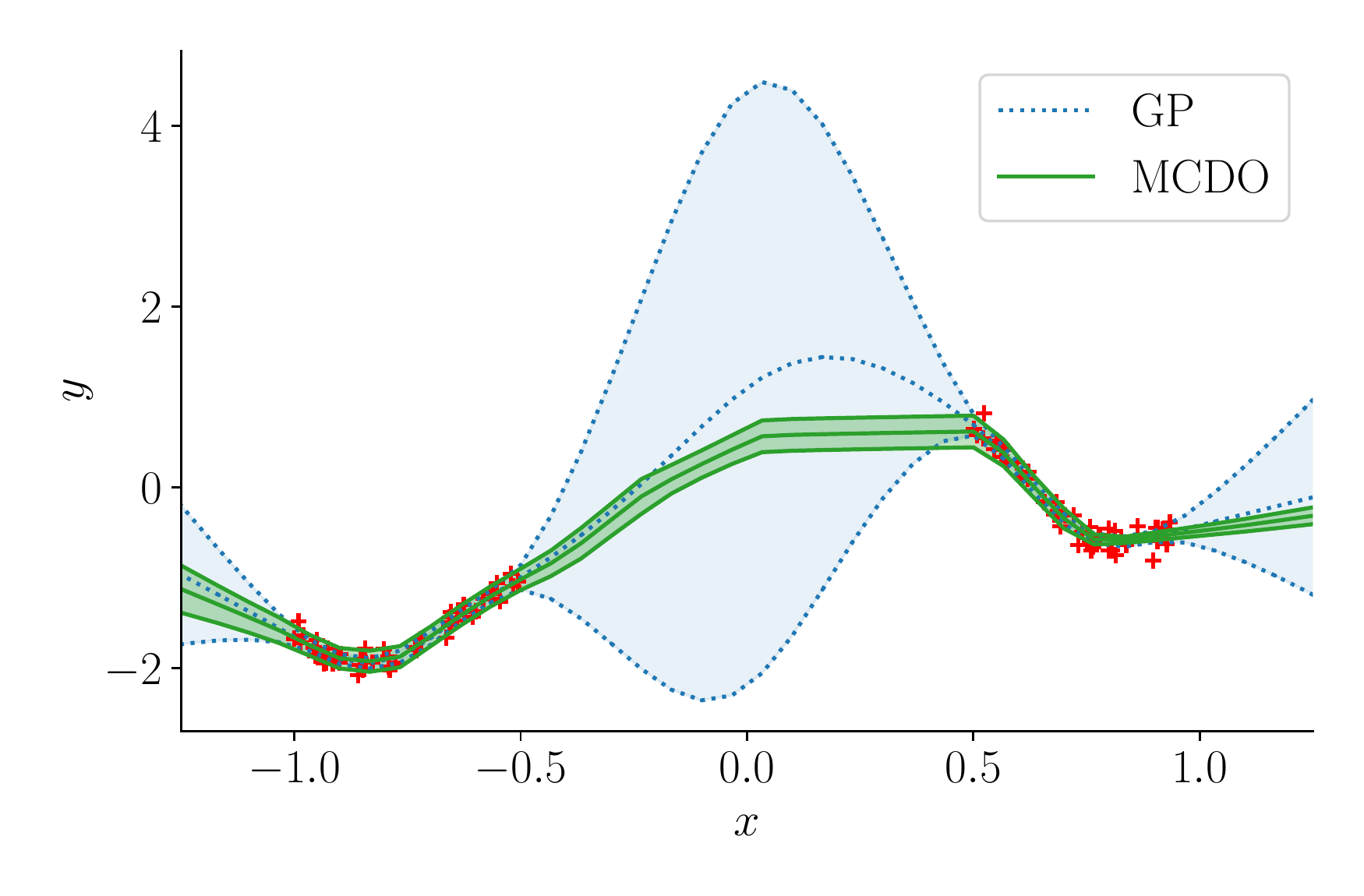}
        \caption{MC Dropout}
    \end{subfigure}
        \caption{Mean and error bars ($\pm$ 2 standard deviations) for the GP and the BNN with each inference scheme, trained on the data shown by the red crosses. The inference algorithms were initialised by first minimising the squared error to the reference GP mean and variance, and then running the respective inference algorithm.}
    \label{fig:loss-initialization}
\end{figure}

\section{Details and Additional Plots for Active Learning}\label{app:active-learning}

\subsection{Experimental Setup}\label{app:details-active-learning}
  We use the same initialisation as in \cref{app:details-effect-of-depth}. As the dataset has low noise, we use a homoskedastic Gaussian noise model with a fixed standard deviation of $0.01$ for all models. We used the ADAM optimiser with learning rate $1\times 10^{-3}$ for 20,000 epochs to optimise both MFVI and MCDO. We perform full batch training. All BNNs are retrained from scratch after the acquisition of each point from the pool set. We used 32 Monte Carlo samples from $q_{\phi}$ to estimate the objective function for both MFVI and MCDO. All networks had 50 neurons in each hidden layer. The prior for all BNNs and the GP was chosen to have $\sigma_w = \sqrt{2}, \sigma_b =1$ (see \cref{app:details-effect-of-depth} for definitions). $\sigma_w=\sqrt{2}$ was chosen so that the prior in function space has a stable variance as depth increases \cite{schoenholz2016deep}. The dropout probability was set at $p=0.05$ for all MCDO networks. The dropout $\ell_2$ regularisation was chosen to match the `KL condition' as stated in \citet[Section 3.2.3]{gal2016uncertainty}. The results are averaged over $20$ random initialisations and selections of the $5$ initial points in the active set. For MFVI and MCDO, the predictive distribution at test time and the predictive variances used for active learning were estimated using 500 samples from the approximate posterior. The parameter initialisations are the same as those in \cref{app:details-effect-of-depth}.
\subsection{Additional Figures}\label{app:figures-active-learning}
\Cref{fig:tsne3} shows the points chosen by deep BNNs. Again the GP chooses points from every cluster, and seems to focus on the `corners' of each cluster. MFVI samples from more clusters than the 1HL case, but still comparatively oversamples clusters further from the origin, and undersamples those near the origin. MCDO has a more spread out choice of points than the 1HL case, but still fails to obtain significantly better RMSE than random.

\Cref{fig:tsne_uncertainty_iter50,fig:tsne_uncertainty_iter0} show the predictive uncertainty of 1HL models at the beginning and end of active learning. All models significantly reduce their uncertainty around clusters that have been heavily sampled, except for MCDO. This causes MCDO to repeatedly sample near locations that have already been labelled, in contrast to the GP. Note also that MFVI is most confident at clusters near the origin that have never been sampled, and least confident at clusters far from the origin that have already been heavily sampled.

\vfill

\pagebreak

\begin{figure}
    \centering
    \begin{subfigure}[b]{0.32\textwidth}
        \includegraphics[width=\textwidth]{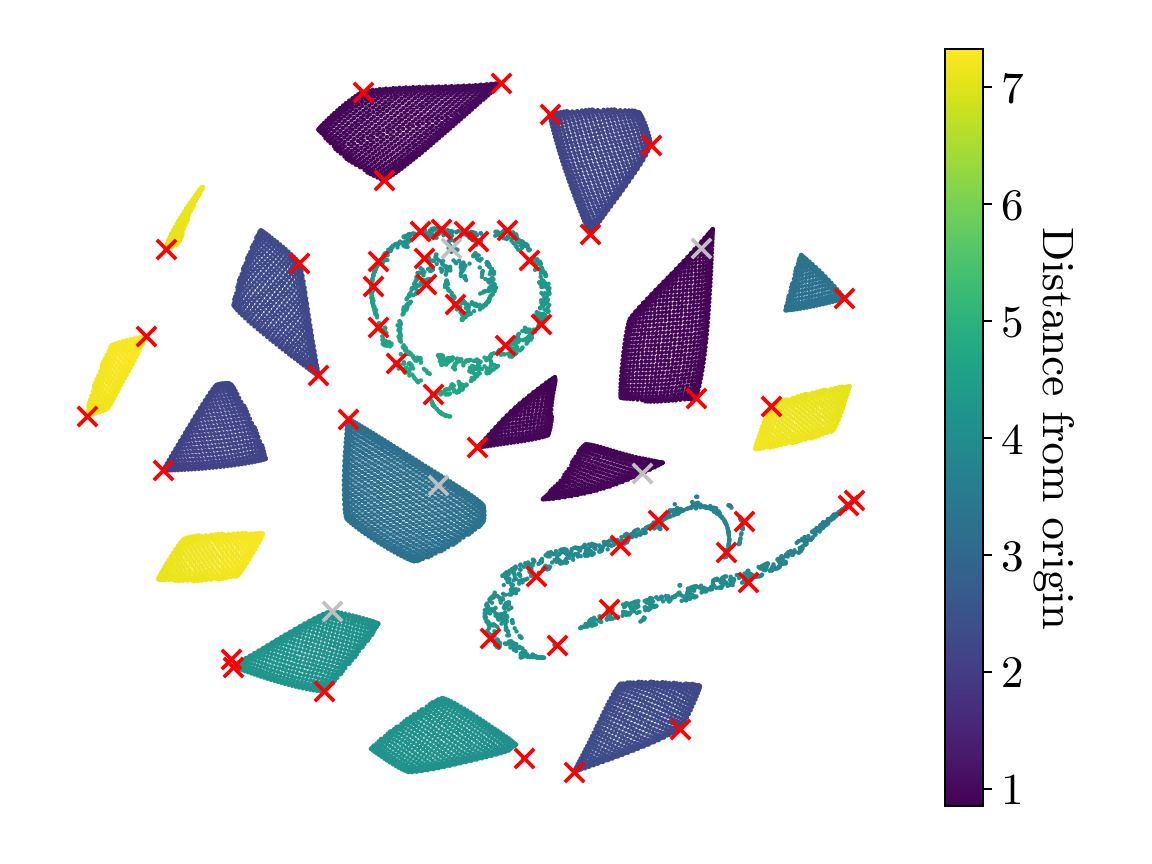}
        \caption{Limiting GP}
    \end{subfigure}
    \begin{subfigure}[b]{0.32\textwidth}
        \includegraphics[width=\textwidth]{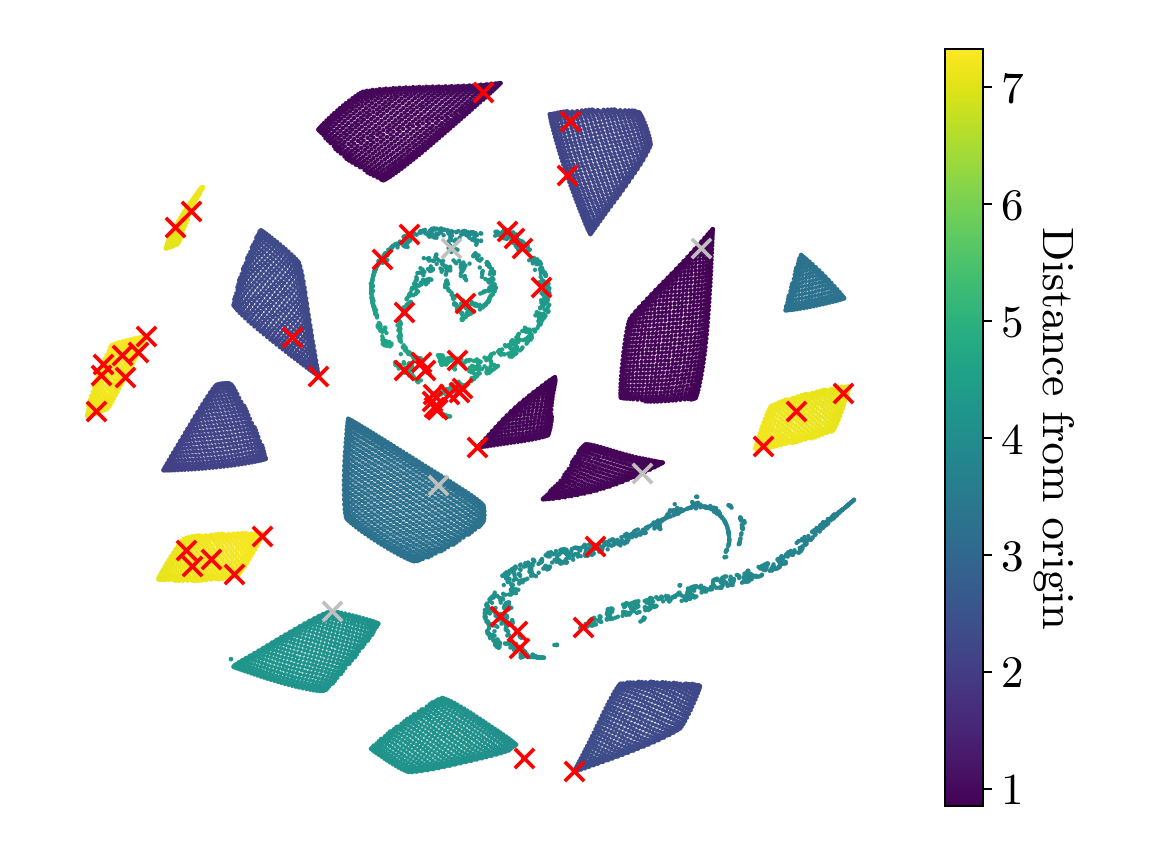}
        \caption{MFVI}
    \end{subfigure}
    \begin{subfigure}[b]{0.32\textwidth}
        \includegraphics[width=\textwidth]{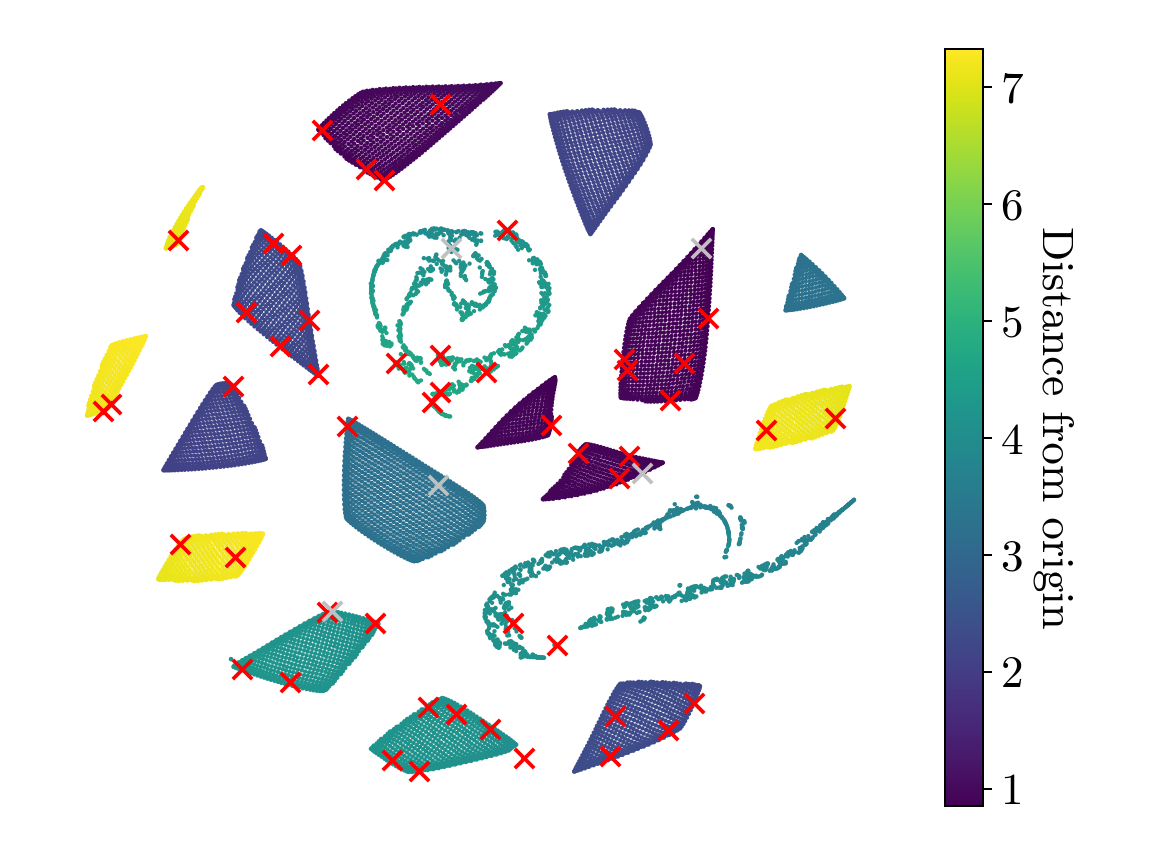}
        \caption{MCDO}
    \end{subfigure}
    \caption{Points chosen during active learning in the 3HL case. Colours denote distance from the origin in 14-dimensional input space. Grey crosses (\textcolor{gray}{\ding{53}}) denote the five points randomly chosen as an initial training set. Red crosses (\textcolor{red}{\ding{53}}) denote the 50 points selected by active learning. Again, the GP samples the corners of each cluster, and MFVI oversamples clusters far from the origin.}
    \label{fig:tsne3}
\end{figure}

\begin{figure}
    \centering
    \begin{subfigure}[b]{0.32\textwidth}
        \includegraphics[width=\textwidth]{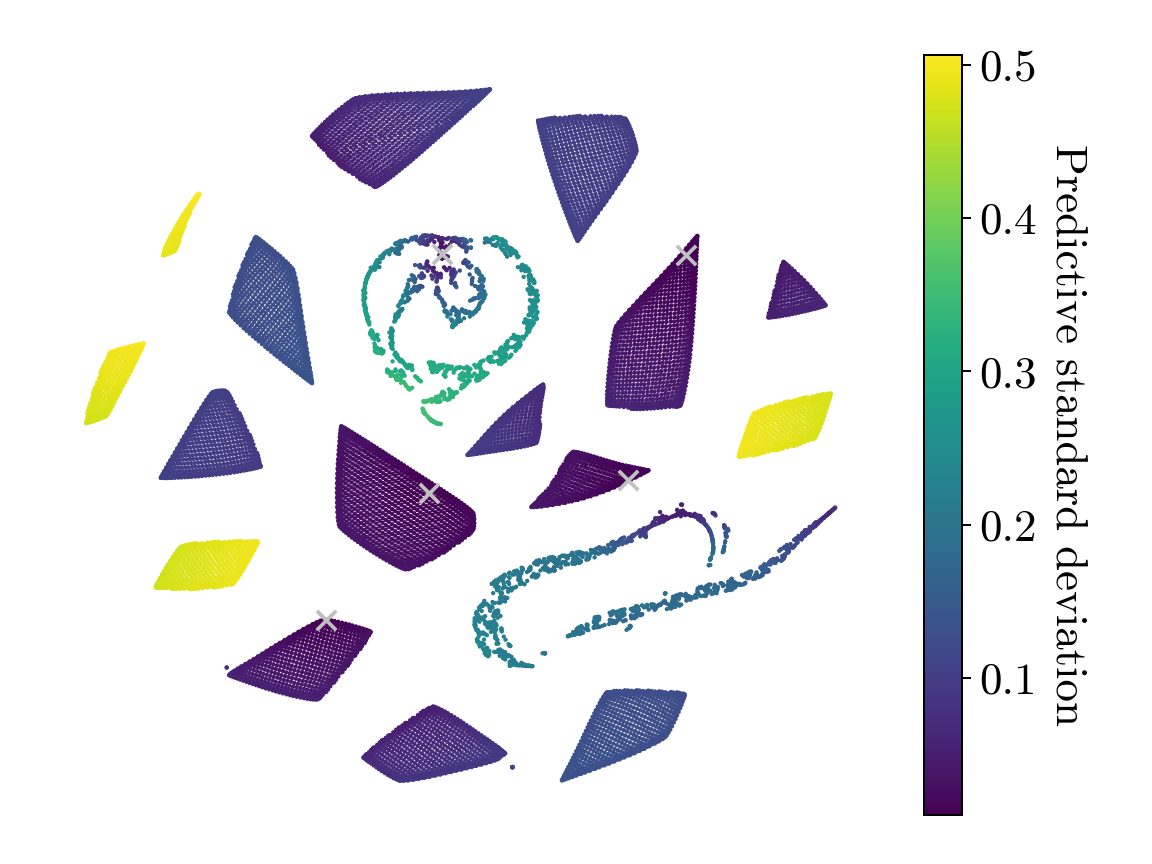}
        \caption{Limiting GP}
    \end{subfigure}
    \begin{subfigure}[b]{0.32\textwidth}
        \includegraphics[width=\textwidth]{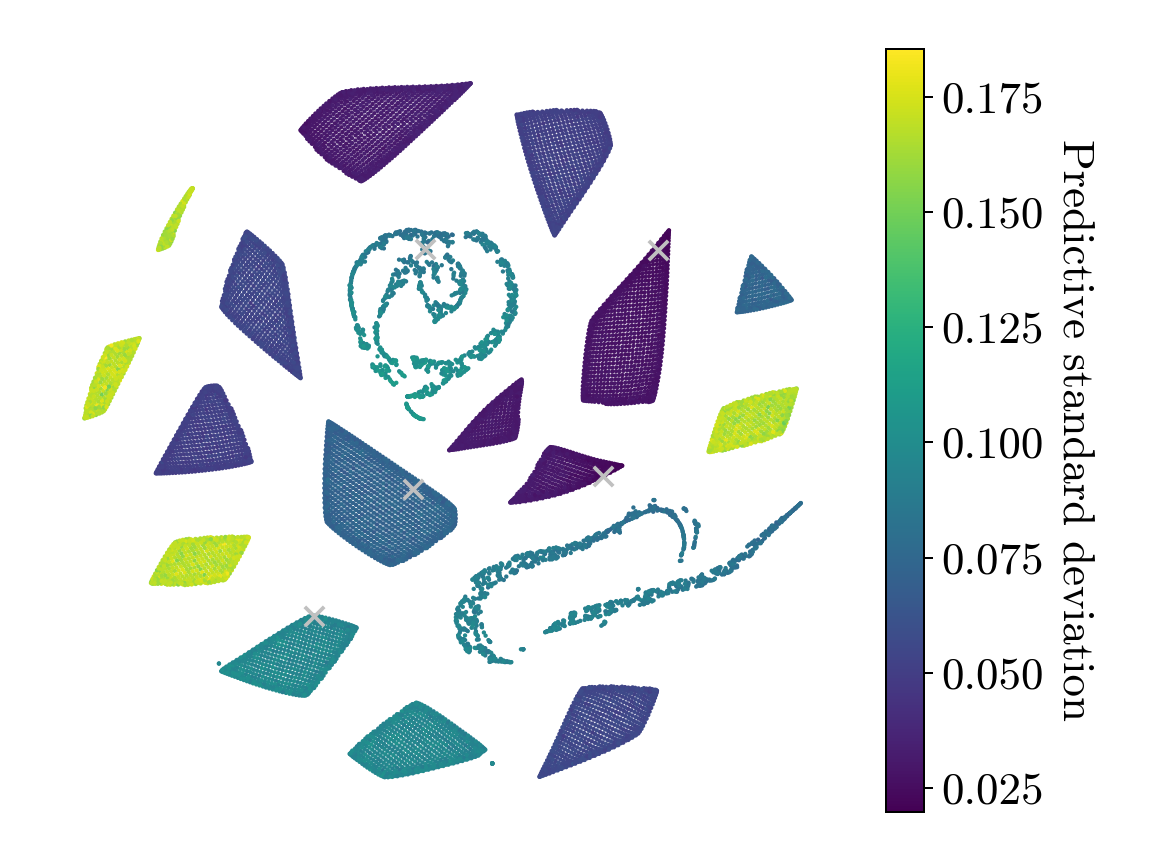}
        \caption{MFVI}
    \end{subfigure}
    \begin{subfigure}[b]{0.32\textwidth}
        \includegraphics[width=\textwidth]{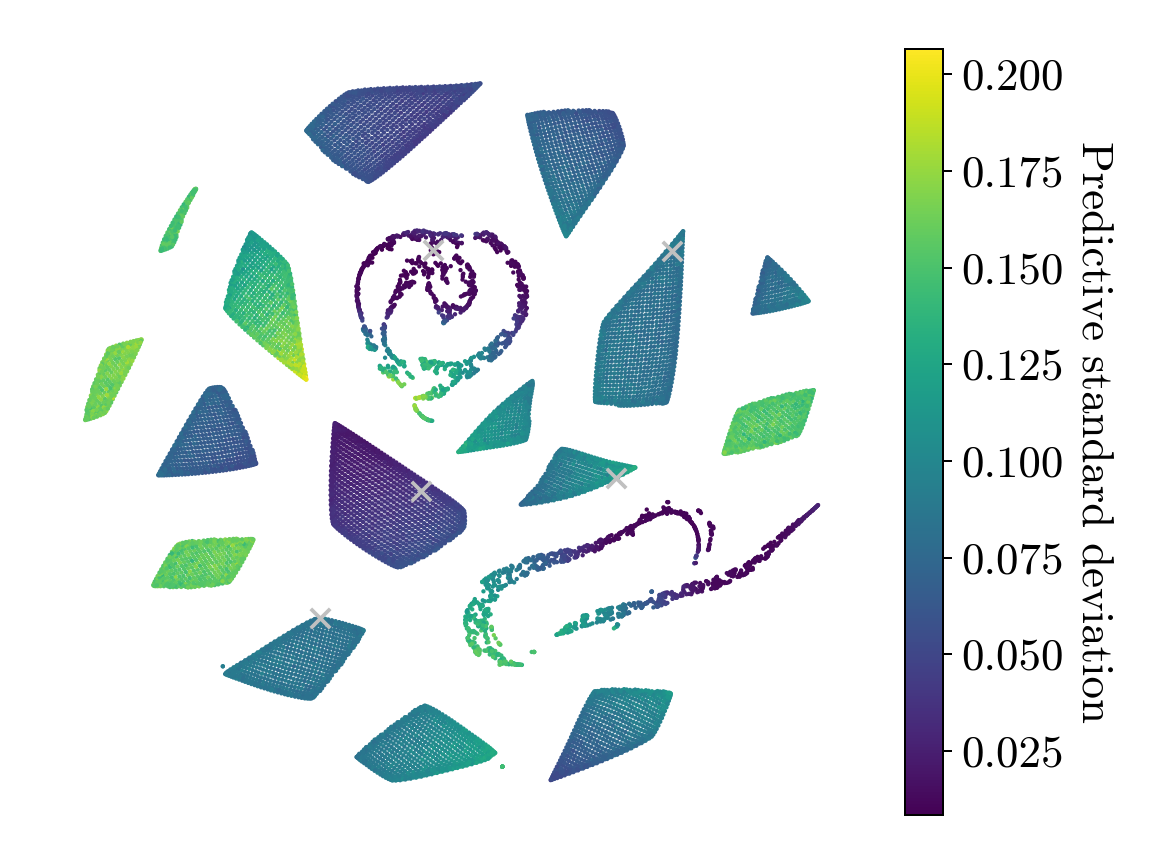}
        \caption{MCDO}
    \end{subfigure}
    \caption{Colours denote \emph{predictive uncertainties} in the 1HL case, at the beginning of active learning. As the noise standard deviation was fixed to $0.01$ for all models, changes in the predictive standard deviation reflect model uncertainty. Grey crosses (\textcolor{gray}{\ding{53}}) denote the five points randomly chosen as an initial training set.}
    \label{fig:tsne_uncertainty_iter0}
\end{figure}

\begin{figure}
    \centering
    \begin{subfigure}[b]{0.32\textwidth}
        \includegraphics[width=\textwidth]{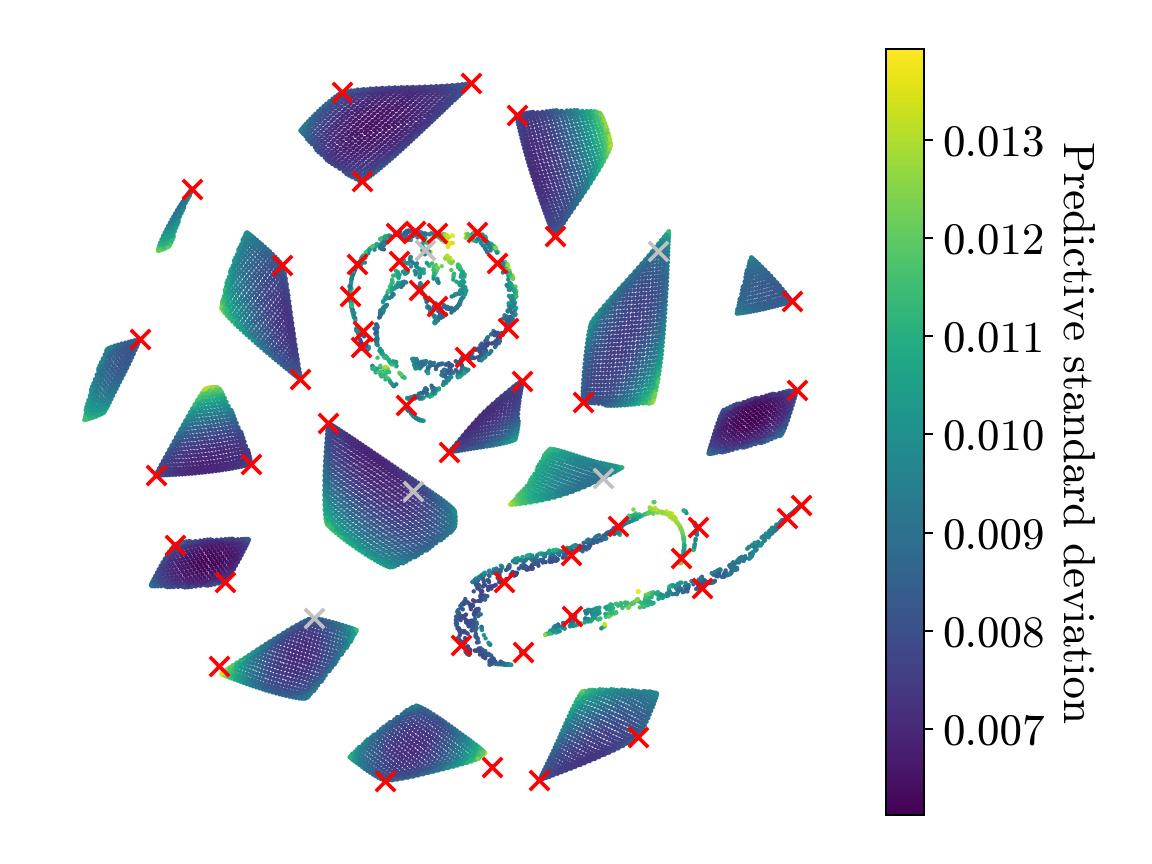}
        \caption{Limiting GP}
    \end{subfigure}
    \begin{subfigure}[b]{0.32\textwidth}
        \includegraphics[width=\textwidth]{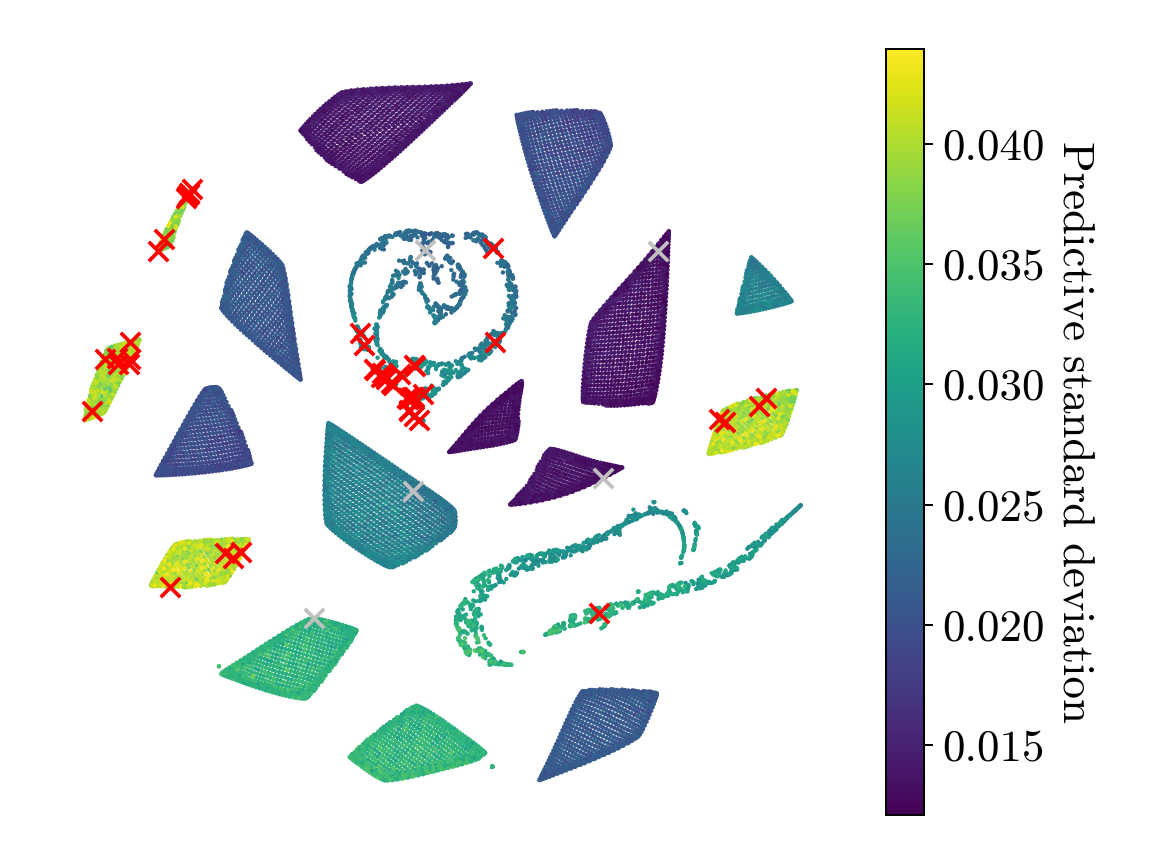}
        \caption{MFVI}
    \end{subfigure}
    \begin{subfigure}[b]{0.32\textwidth}
        \includegraphics[width=\textwidth]{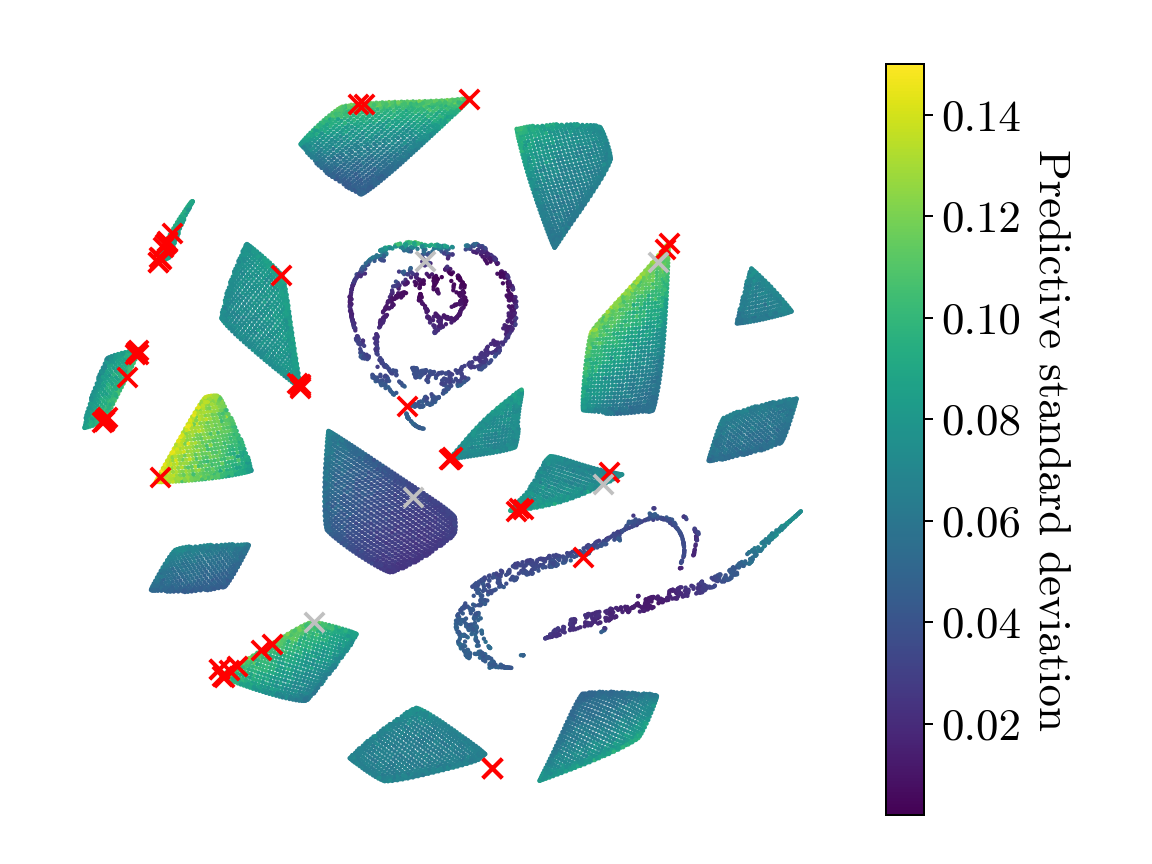}
        \caption{MCDO}
    \end{subfigure}
    \caption{Colours denote \emph{predictive uncertainties} in the 1HL case, after 50 iterations of active learning. As the noise standard deviation was fixed to $0.01$ for all models, changes in the predictive standard deviation reflect model uncertainty. Grey crosses (\textcolor{gray}{\ding{53}}) denote the five points randomly chosen as an initial training set. Red crosses (\textcolor{red}{\ding{53}}) denote the 50 points selected by active learning. Note how, compared to \cref{fig:tsne_uncertainty_iter0}, the GP has reduced its uncertainty near points it has observed, and is most uncertain on corners opposite those points. In contrast, for both MFVI and MCDO, the network is still uncertain around regions it has already collected points from, leading it to oversample those clusters and undersample others.}
    \label{fig:tsne_uncertainty_iter50}
\end{figure}

\end{document}